\PassOptionsToPackage{table,dvipsnames}{xcolor}
\documentclass[nonblindrev]{informs3custom}
\usepackage{lmodern}      % optional
\usepackage{amsmath}
\usepackage{amssymb}
\usepackage{bm}
\usepackage{mathtools}
\usepackage{mathrsfs}     % only if you really use \mathscr
\usepackage{color} 
\usepackage{float}
\usepackage{pgfplots}
\pgfplotsset{compat=newest} % Sets the compatibility level
\usepackage{bbm}
\usepackage{amsfonts}
\usepackage{comment}
\usepackage{import}
\usepackage{parent}
\makeatletter
\let\INFORMS@proof\proof
\let\INFORMS@endproof\endproof

\renewcommand{\proof}{\@ifnextchar[{\INFORMS@proof@i}{\INFORMS@proof@i[Proof]}}
\def\INFORMS@proof@i[#1]{\INFORMS@proof{#1}}

\renewcommand{\endproof}{\Halmos\INFORMS@endproof}
\makeatother

\TheoremsNumberedThrough     
\ECRepeatTheorems
\EquationsNumberedThrough

\makeatletter
\def\footnoterule{\kern-3\p@
  \hrule \@width 6.5in \kern 2.6\p@} % the \hrule is .4pt high
\makeatother

\usepackage[
  backend=bibtex,
  bibstyle=alphabetic,   % keep alphabetic style in the bibliography, if you like
  citestyle=authoryear,  % <-- names in citations
  maxcitenames=2,
  mincitenames=1,
  maxbibnames=99
]{biblatex}
\begin{document}
\RUNAUTHOR{Wen, Chen, Sun and Zhang}
\RUNTITLE{Policy-Aware Design of Large-Scale Factorial Experiments}
\TITLE{\bf \vspace{-2.7cm} \\Policy-Aware Design of Large-Scale Factorial Experiments \\ \vspace{-.5cm} }

\ARTICLEAUTHORS{

\AUTHOR{Xin Wen}
\AFF{Stern School of Business, New York University,  New York, NY 10012,
\EMAIL{xin.wen@stern.nyu.edu}}

\AUTHOR{Xi Chen}
\AFF{Stern School of Business, New York University,  New York, NY 10012,
\EMAIL{xc13@stern.nyu.edu}}

\AUTHOR{Will Wei Sun}
\AFF{Daniels School of Business, Purdue University,  West Lafayette, IN 47907,
\EMAIL{sun244@purdue.edu}}

\AUTHOR{Yichen Zhang}
\AFF{Daniels School of Business, Purdue University,  West Lafayette, IN 47907,
\EMAIL{zhang@purdue.edu}}
}

\date{}
\ABSTRACT{ \vspace{.5cm}

Digital firms routinely run many online experiments on shared user populations. When product decisions are compositional---such as combinations of interface elements, flows, messages, or incentives---the number of feasible interventions grows combinatorially, while available traffic remains limited. Overlapping experiments can therefore generate interaction effects that are poorly handled by decentralized A/B testing. We study how to design large-scale factorial experiments when the objective is not to estimate every treatment effect, but to identify a high-performing policy under a fixed experimentation budget. We propose a two-stage design that centralizes overlapping experiments into a single factorial problem and models expected outcomes as a low-rank tensor. In the first stage, the platform samples a subset of intervention combinations, uses tensor completion to infer performance on untested combinations, and eliminates weak factor levels using estimated marginal contributions. In the second stage, it applies sequential halving to the surviving combinations to select a final policy. We establish gap-independent simple-regret bounds and gap-dependent identification guarantees showing that the relevant complexity scales with the degrees of freedom of the low-rank tensor and the separation structure across factor levels, rather than the full factorial size. In an offline evaluation based on a product-bundling problem constructed from 100 million Taobao interactions, the proposed method substantially outperforms one-shot tensor completion and unstructured best-arm benchmarks, especially in low-budget and high-noise settings. These results show how centralized, policy-aware experimentation can make combinatorial product design operationally feasible at platform scale. 

}

\vspace{0.5cm}

\KEYWORDS{online experimentation; factorial experiments; adaptive experimentation; low-rank tensor completion;  simple regret\\}
\maketitle

\vspace{-2.1cm} 

% !TEX root = /main_INFORMS.tex 

\section{Introduction}
The traditional experimental design literature centered on agricultural and biomedical settings, where researchers typically worked with a modest number of fixed units (e.g., patients or plots of land) and a small number of treatments (e.g., active drug versus placebo) \parencites{fisher1935design, yates1937design, box2005statistics, wu2011experiments, gerber2012field}. Experimental designs were primarily evaluated by how precisely they estimated average treatment effects under these relatively low-dimensional, static conditions.

Over the past two decades, however, the practice of experimentation has shifted from the field to the digital environments. Today, large-scale experiments on websites, mobile apps, wearables, and other IT artifacts are routinely used to answer scientific questions, validate design choices, and generate empirical insights in business \parencites{ 
farias2022synthetically,
chen2023frontiers,
chen2024new, 
zhao2025pigeonhole,
zhang2025comparing,
	peng2025synthesizing}, social science \parencites{athey2017digital,brynjolfsson2019using,allcott2020welfare,levy2021social,athey2024heterogeneous, offer-westort2024battling}, and healthcare \parencites{han2018effectiveness, nahum-shani2018justintime,auerbach2019evaluating,ghose2022empowering,bundorf2024how}. This infrastructure has become particularly pervasive at major technology and e-commerce firms, which now run tens of thousands of experiments per year \parencites{gupta2019top,kohavi2020trustworthy}.
Ultimately, this shift supports data-driven decision-making, reducing reliance on “expert opinion” by allowing teams to learn directly from observed user behavior.

Modern digital experimentation differs from classical settings in several key dimensions \parencites{shi2019challenges,bojinov2022online,bojinov2023design,eichhorn2024loworder,zhao2024experimental}. First, units arrive sequentially into the experiment rather than being observed all at once. Second, outcomes are typically observed quickly after exposure. Third, the number of treatment arms can be large, reflecting the many design choices available in digital products. Fourth, the primary goal is often to find a good treatment or policy to deploy, rather than to obtain precise estimates for all competing arms \parencite{gupta2019top}. These features expand the design space and create an opportunity to adapt the experiment over time to improve its value for decision making, subject to a fixed budget of experimental units. To illustrate, consider an online marketer choosing one out of ten advertising creatives for a campaign. A traditional static experiment would allocate equal traffic to all ten arms until the sample size is reached, then pick the arm with the highest sample average outcome and compute standard errors in the usual way. Achieving reasonable precision for all arms in such a design typically requires a large number of experimental units, and substantial traffic may be spent on arms that quickly appear inferior.

Adaptive experiments address this inefficiency by updating assignment probabilities as data accrue \parencites{lai1978adaptive,lai1985asymptotically,lai1987adaptive}. Initially, units are assigned to each arm with equal probability and outcomes are observed. As evidence accumulates, the platform reduces allocation to arms that are unlikely to be optimal and reallocates traffic toward better-performing alternatives. If the goal is to identify the best treatment, sending a large fraction of additional units to clearly underperforming arms is wasteful. By progressively de-emphasizing such arms, adaptive designs can narrow the set of plausible candidates and improve both the rate of learning and the welfare of participants. At the same time, adaptive algorithms raise subtle issues about bias, variance, and inference, especially in complex, high-dimensional environments \parencites{cortez2022staggered,zhao2024experimental,xiong2024optimal,keyvanshokooh2025contextual}.

A second challenge in digital settings is the sheer number of potential interventions. Digital products are often composed of multiple components, each of which can vary along several dimensions. For example, an email or text-message campaign can vary in style, tone, theme, imagery, behavioral tactics, subject line, and timing. It is typically cheap to generate a very large number of intervention variants by combining different component choices. In such settings, the binding resource is not the number of possible designs, but the amount of user traffic available for experimentation. The design space is high-dimensional and combinatorial, while the experimental budget (number of users) is limited.

This scarcity of traffic creates a bottleneck at the industry level, where the demand for estimating treatment effects often leads to the same user being enrolled in multiple experiments concurrently \parencites{tang2010overlapping,kohavi2017surprising}. Consider a standard e-commerce interface. A single ``checkout'' page is not one decision, but a composition of button colors, payment flows, promotional popups, and layout options. Instead of a simple A/B test, the real-world design space is combinatorial: varying just 10 colors, 5 flows, 6 popups, and 4 layouts creates a grid of $10 \times 5 \times 6 \times 4$ yielding 1,200 potential combinations \parencites{tang2010overlapping, kohavi2017surprising}. When multiple teams want to test features that affect similar UI components, their experiments may need to be serialized to avoid interference, which delays learning and product iteration. When experiments are run concurrently because interactions are believed to be weak, interaction effects nonetheless arise and can be common in practice. This raises fundamental questions: How should we handle interaction between treatments in concurrent experiments? How can we aggregate and share information across experiments to improve decisions \parencite{gupta2019top}?

Recent empirical work shows that co-occurring experiments can exhibit statistically significant interactions and that ignoring them can materially bias estimated average treatment effects (ATEs). \cite{abbasi2025critical}, for example, use a case study from a major e-commerce company to document that a large proportion of concurrent experiments have significant interactions, often making focal ATEs appear more positive than they actually are when interactions are accounted for. Measured ATEs for a given test can change substantially, including sign reversals, as the surrounding experimental ecosystem changes \parencites{gupta2019top, abbasi2025critical}. Despite running many experiments, firms still report high failure rates: more than half of ideas do not produce meaningful improvements, and a substantial fraction of experiments fail to meet their predefined success criteria \parencites{kohavi2020trustworthy, kohavi2022testing}. Because A/B testing is often used for one-shot “ship or kill” decisions, such failures delay learning and product iteration. Practitioners therefore seek decision-support tools that can leverage information across experiments and support prediction and design before launch \parencites{kohavi2020trustworthy, kohavi2022testing}.

Several methodological traditions speak to these challenges. Since the seminal work of \cite{fisher1935design} and \cite{yates1937design}, factorial designs have been widely used in agricultural, industrial, biomedical, and social science applications because they allow simultaneous estimation of multiple factors and their interactions \parencite{wu2011experiments}. Classical factorial experiments, however, typically involve a small number of factors ($K \leq 4$), so that the $2^K-1$ main and interaction effects can be estimated with feasible sample sizes. As the number of factors grows, the number of treatment combinations explodes, motivating forward-selection strategies and fractional factorial designs that target a limited subset of effects \parencite{shi2025forward}. These approaches rely on sparsity assumptions: only a small number of factorial effects are important. In digital product settings, where many subtle interactions may matter, such sparsity assumptions are often difficult to justify. An alternative approach places functional form restrictions, such as additivity, on how outcomes depend on product characteristics. Under such assumptions, researchers can randomize multiple dimensions independently and model expected outcomes as a function of the resulting features. These “factorial” experiments enable exploration of a wide range of treatments by treating variation in other dimensions as background noise. However, allowing many dimensions to vary increases the variance of estimated effects for any single dimension. 

Classical factorial designs thus clarify why interactions matter but are operationally infeasible at platform scale and misaligned with the primary decision objective. For firms, the goal of experimentation is rarely the precise estimation of all main and interaction effects; it is to improve decisions in a rapidly evolving environment. In such decision-focused settings, the common emphasis on $p$-values and statistical significance is often inappropriate \parencite{wasserstein2016asa}. Instead, platforms need experimental designs that explicitly target the selection of high-performing policies under resource constraints and pervasive interactions. In the digital economy, the limiting factor for innovation is no longer the number of ideas, but the sampling cost of evaluating their combinations. These practical tensions motivate our central research question: 
\[
\begin{gathered}
\text{\itshape How can firms design large-scale factorial experiments that move beyond parameter estimation}\\
\text{\itshape to directly target optimal policy selection under resource constraints?}
\end{gathered}
\]

Taken together, this paper makes four main contributions.

First, we shift the mechanism of learning and introduce a policy-aware perspective on large-scale factorial experimentation in digital platforms, arguing that overlapping A/B tests on shared components should be centralized and modeled as a single high-dimensional design space. This reframing makes interaction effects a design feature rather than a nuisance or confounder and connects platform experimentation to the literature on low-rank tensor representations.

Second, we propose a two-stage ``centralize and then randomize'' design that exploits low-rank structure to screen factor levels using tensor completion methods and then refines the choice among surviving combinations via a standard best-arm identification procedure. It enables inference on unobserved treatments, predicting the performance of treatment combinations that were never fielded in the experiment with a theoretical guarantee. The design uses factor-level marginal contributions as a natural object for elimination and explicitly targets simple regret rather than conventional hypothesis testing. Unlike static designs, our approach is adaptive: the decision to sample or eliminate interventions is not fixed in advance but updates dynamically as structure is learned.

Third, we establish two complementary theoretical guarantees for the proposed policy selection
procedure: a gap-independent bound that delivers a worst-case performance baseline, and a gap-dependent bound that captures how quickly the algorithm improves when the data exhibit
clear performance separations. Both guarantees scale with the effective degrees of freedom of
the low-rank tensor, rather than the full $d^m$ design space, and the gap-dependent guarantee depends on
a factor-level separability profile: how strongly the top few levels within each factor are
distinguished by their best achievable outcomes. This distinction clarifies when structure-exploiting,
centralized experimentation yields large returns: when only a small subset of levels in each factor is
competitive, Stage~I can eliminate most of the design space with limited traffic, leaving a
dramatically smaller candidate set for final selection. In contrast, when many levels are nearly tied,
the theory correctly predicts that no method can do substantially better than unstructured search.

Fourth, we provide an empirical validation of our framework using a large-scale product bundling problem on Alibaba’s Taobao platform, demonstrating the practical utility of our ``centralize and then randomize'' approach. By modeling 1,680 potential item combinations as a high-dimensional tensor, we show that our policy-aware design overcomes the ``exploration overhead'' that causes traditional bandit algorithms to fail in resource-constrained environments. Specifically, our method exploits latent cross-category correlations to identify high-performing bundles even when the experimental budget is insufficient to sample the vast majority of combinations. These results illustrate how e-commerce managers can bypass the prohibitive costs of combinatorial testing, offering a scalable tool for decision-making in digital marketplaces.

\subsection{Relationship to Closely Related Literature}
Recent approaches to combinatorial experimentation generally manage the explosion of the design space through either explicit sparsity assumptions or non-parametric approximation.  
One stream of research \parencites{banerjee2021selecting, shyamal2025probabilistic, shi2025forward} tackles the high dimensional problem by assuming that interactions are effectively sparse, specifically, that outcomes are driven solely by main effects and bounded low-order (e.g., pairwise) interactions. While these methods allow for efficient estimation of specific marginal effects via regression or Fourier basis expansions, they typically optimize for parameter recovery (minimizing mean squared error) as a proxy for decision quality, rather than directly minimizing the opportunity cost of the final policy. Conversely, deep learning approaches \parencites{ye2023deeplearningbased, chernozhukov2018double} circumvent explicit structural constraints in favor of universal approximation, using neural networks to model arbitrary interaction surfaces. However, this flexibility comes at a cost: theoretical guarantees for these ``Double Machine Learning'' frameworks rely on the nuisance parameters (the outcome and propensity models) converging at a rate of $o(n^{-1/4})$. As noted by \cite{ye2023deeplearningbased}, achieving this rate faces severe identifiability challenges in exploration-limited settings. When the experimental budget prohibits sampling the vast majority of combinations, the data is insufficient to ground a high-capacity neural network, leading to nuisance estimation errors that propagate to the final decision.  Finally, while bandit frameworks have been adapted to factorial settings \parencites{offer-westort2024battling, muralidharan2025factorial}, they often treat the correlation structure as secondary to the selection rule. Our work bridges these gaps by treating the interaction not as a sparse nuisance or a black box, but as a structured low-rank tensor, allowing for policy optimization that is both sample-efficient and exploits latent dependencies.

Our paper also contributes to the fixed-budget pure-exploration literature. In pure exploration, the experimenter adaptively collects data and then recommends a final action for deployment. Two canonical formulations are studied in the literature: fixed-confidence \parencites{chernoff1958sequential,garivier2016optimal} and fixed-budget \parencites{audibert2010best,wang2024best}. In the fixed-confidence setting, the target error probability is specified in advance and the objective is to minimize the expected number of samples required to meet that target. In the fixed-budget setting, which is the focus here, the experimental horizon is fixed and the goal is to maximize the quality of the final recommendation. This problem is intrinsically challenging: universally asymptotically optimal policies need not exist in the fixed-budget regime, even in simple bandit models \parencite{degenne2023existence}. Our contribution is therefore not a universally optimal policy for arbitrary bandits, but a theoretically grounded design for combinatorial action spaces with latent low-rank structure. This perspective is also distinct from the existing low-rank bandit literature \parencites{kveton2017stochastic,jun2019bilinear,jang2021improved,lu2021lowrank,kang2022efficient,stojanovic2023spectral,bayati2022speed,jedra2024lowrank,lee2025gllowpopart}, which mainly studies cumulative regret during learning. In contrast, our objective is not to optimize rewards during the learning process, but to allocate the fixed sampling budget so as to maximize the probability of making a high-quality final recommendation. Thus, low-rank structure enters our analysis not through cumulative-regret minimization, but through its role in improving terminal decision quality in pure exploration.

\section{Our Solution: Centralize and Then Randomize}
\label{sec:solution}

Our core proposal is a shift from decentralized, component-wise testing to centralized, structural experimentation. In most digital firms, different product teams (e.g., the Payments Team versus the Growth Team) run parallel A/B tests on a shared user base. While logistically simple, this approach treats interaction effects as ``noise'' or ``nuisance parameters'' that can bias individual results. These interactions might be the most valuable signals for platform optimization. By centralizing these overlapping experiments, we transform the design space into a single high-dimensional intervention tensor, allowing us to exploit the structural dependencies between design choices.

\subsection{Centralizing Overlapping Experiments}

Consider checkout page optimization involving three design factors: Button Color, Payment Flow, and Coupon Placement. In current practice, these factors are often tested in isolation or serial batches. However, a user assigned to a ``Blue'' button may simultaneously encounter a ``Two-Step'' payment flow from a concurrent experiment. Their resulting behavior (e.g., a completed purchase) is a joint response to both factors. Standard A/B testing assumes these factors are independent, but in reality, they are often synergistic or antagonistic. For instance, a highly salient ``Red'' button might increase click-through rates but create friction if the subsequent payment flow is cumbersome. Centralization allows the firm to observe these cross-feature synergies explicitly. Rather than asking ``Which color is best?'' or ``Which flow is best?'', a centralized design asks: ``Which combination of color, flow, and coupon placement maximizes the platform objective?''

We formalize this decision space as an $m$-mode tensor $\mathcal{T}^\star \in \mathbb{R}^{d_1 \times \dots \times d_m}$ \parencite{kolda2006multilinear,kolda2009tensor,bi2021tensors}, where each mode represents a design factor and each index represents a level of that factor. In the checkout example, the physical axes of the experiment correspond to the factors: Axis 1 (Button Color), Axis 2 (Payment Flow), and Axis 3 (Coupon Placement). Every point in this grid is a specific intervention. In a fully flexible world, every single point is unique and independent; to know the value of any combination, one must visit it. This leads to a combinatorial explosion where the number of possible interventions grows exponentially with the number of factors ($K = \prod d_k$). Since user traffic is a finite and expensive resource, a ``brute-force'' A/B test, where every combination is tested equally, is operationally challenging. To navigate this space, the firm must rely on a structural inductive bias: the assumption that the design space is a structured landscape governed by a few underlying drivers rather than a collection of independent islands.

\subsection{Low-Rank Structure: From Additivity to Structural Synergies}\label{sec:lowrank structure}
To operationalize high-dimensional experimentation, we exploit the algebraic structure of the decision space via the concept of tensor rank \parencite{kolda2009tensor}. Formally, an $m$-mode tensor has rank 1 if it can be expressed as the outer product of $m$ vectors (e.g., $\mathcal{T}=\mathbf{u} \circ \mathbf{v} \circ \mathbf{w}$ ). In a management context, the rank of the tensor represents the number of latent behavioral mechanisms that drive performance across the platform. A low-rank approximation implies that the complex surface of customer behavior is actually generated by a small number of fundamental drivers interacting across the design factors. Below, we contrast the traditional additive model with more flexible tensor representations.

A common starting point in experimental design is the linearly additive model. For a 3-mode intervention space, this model posits:
\begin{align}
    \label{eq:additive}
    \cT^\star_{ijk} \;=\; \mu + \alpha_i + \beta_j + \gamma_k,
\end{align}
where $\mu$ is a baseline, and $\alpha, \beta, \gamma$ are independent main effects for each factor. Geometrically, this corresponds to a sum of rank-1 tensors and assumes the outcome is simply the sum of the coordinates. It projects the high-dimensional tensor onto $m$ independent 1D lines (the ``main effects''). This implies that the ``Button Color'' axis has a fixed slope that never changes, regardless of the level of Payment Flow. From a decision-making perspective, this model assumes independence of factors. It implies the value of a ``Blue button'' is a universal constant. If a ``Red Button'' increases conversion by 1\% and ``Free Shipping'' increases it by 2\%, then the model assumes combining them must yield exactly 3\%. By ignoring the psychological reality that these signals might be redundant or conflicting, a manager risks missing ``hidden gems'': combinations where the synergy creates value far beyond the sum of the parts.

The CP decomposition \parencites{carroll1970analysis} generalizes the additive model by allowing $R$ independent latent components (rank-one tensor) to interact:
\begin{align}\label{equ:cp}
    \cT^\star_{ijk} \;=\; \sum_{\ell=1}^{R} \lambda_\ell \cdot a_{i\ell} b_{j\ell} c_{k\ell}
\end{align}
In this representation, each component $\ell$ represents a global behavioral theme (e.g., ``Urgency'' or ``Trust'') and all the effects are driven by these $R$ themes.  Geometrically, the rank $R$ is the number of global latent axes that run diagonally through the data. The coordinate system is rotated so that all $m$ physical factors align along $R$ hidden directions. Instead of $d$ colors and $d$ flows, the manager considers $R$ global themes. Moving along the Urgency Axis, for example, simultaneously changes the Color (to Red), the Flow (to Fast), and the Coupon (to Limited-Time). However, the model cannot move off-axis; it assumes Color only interacts with Flow because they sit on the same shared latent coordinate. From a decision-making perspective, CP is appropriate when performance is driven by a few dominant, independent psychological channels or recipes for success. If $R=2$, the platform might be driven by two themes: an Urgency Theme (High-salience colors and fast-track checkout) and a Deliberation Theme (Subdued colors and detailed info pages). Every design factor must align with one of these global personas. If a specific feature does not fit into one of these pre-set recipes, the model effectively treats its contribution as noise.

The Tucker model \parencites{tucker1966mathematical} allows for complex, many-to-many interactions between latent factors across different modes:
\begin{align}
\label{eq:tucker}
\cT^\star_{ijk} \;=\; \sum_{p=1}^{r_1}\sum_{q=1}^{r_2}\sum_{r=1}^{r_3} \cG_{pqr} \cdot U_1(i,p) U_2(j,q) U_3(k,r)
\end{align}
Here, $U_1, U_2, U_3$ provide mode-wise bases (latent profiles), while the core tensor $\cG$ explicitly encodes how any latent profile in one mode interacts with any latent profile in another. Geometrically, the Tucker model changes the basis of each factor mode to its own internal latent space. The physical colors are projected onto $r_1$ Latent Color Axes (e.g., Visibility and Aesthetics), and payment flows are projected onto $r_2$ Latent Flow Axes (e.g., Speed and Security). The core tensor $\cG$ serves as a Rotation Matrix—an Interaction Engine—that defines how the Visibility axis of Color interacts with the Speed axis of Flow. From a decision-making perspective, Tucker is the preferred model for general experimentation because it allows for asymmetric synergies. The rank $r_n$ represents the latent dimensionality of each specific factor. For example, a Visible button might help a Speedy flow but actually hurt a Secure flow where users might perceive high visibility as untrustworthy. Unlike CP, where factors are locked into global components, Tucker allows the same latent signal to interact uniquely with different backend processes.

By exploiting these low-rank structures, a manager can explore an exponential design space ($\prod d_k$) using a linear sampling budget ($\sum d_k r_k$). This effectively de-risks innovation, allowing firms to learn the structural landscape of consumer behavior without testing every possible combination. If the tensor is low-rank, the outcome of any combination is a deterministic function of these latent levers. Even with 50 million combinations, there are only a handful of degrees of freedom. By identifying the latent coordinates of each factor using a small number of samples, a manager can mathematically predict the performance of combinations that have never been tested.

\subsection{Sampling Model and Noisy Observations}
In a high-dimensional design space, the binding constraint is the scarcity of user traffic relative to the number of possible interventions. A firm cannot afford to ``purchase'' information about every cell in the tensor. Consequently, 
the manager faces a subset selection problem: choosing an exploration portfolio $\Omega \subseteq\left[d_1\right] \times \cdots \times\left[d_m\right]$ of strategic combinations to field to users. For each selected combination $\bidx\in\Omega$, we do not observe the ground truth value $\cT^\star_{\bidx}$ directly. Instead, we observe a noisy realization based on a randomized experiment with a finite number of users. We model the observations as:
\begin{align}\label{equ:tensor_completion}
Y_{\bidx} = \cT_{\bidx}^\star + \xi_{\bidx},
\end{align}
where $\mathcal{T}^{\star}$ represents the true underlying performance tensor and $\xi_{\bidx}$ encapsulates the measurement uncertainty inherent in digital experimentation.  Since outcomes (e.g., clicks, purchases) are stochastic and users are heterogeneous, the observed performance of any design is an imperfect signal of its true long-run value. We assume the noise terms $\xi_{\bidx}$ are independent sub-Gaussian random variables with mean zero and variance bounded by $\sigma^2$. From a managerial perspective, $\sigma^2$ captures the volatility of the metric. A higher $\sigma^2$ (e.g., a noisy metric like ``Revenue per Visitor'') requires a larger sample size per cell to distinguish signal from noise.   The challenge of the experimental design is to choose $\Omega$ such that we can reconstruct the unobserved counterfactuals (the empty cells) with high fidelity, minimizing the traffic ``spent'' on exploration.

\subsection{Decision Objective: Minimizing Opportunity Cost}

A natural approach is to use the experimental budget to estimate $\cT^\star$ as accurately as possible (minimizing mean squared error) or to test for significant differences between arms. However, the ultimate goal for firms is not precise estimation of population parameters; it is to maximize the value of the final deployed design. Point estimation and hypothesis testing, the usual focus in the treatment-effects literature, are not decision rules and are often misaligned with this objective \parencites{manski2000identification,manski2002treatment,manski2004statistical, dehejia2005program}. Null hypothesis significance testing answers the wrong question (``Are these designs different?'') rather than the managerial question (``Which design should I ship?''), and often demands infeasible sample sizes to detect trivial effects \parencites{sawyer1983significance,lewis2015unfavorable,wasserstein2016asa,feit2019test,amrhein2019scientists,amrhein2019inferential,mcshane2019abandon,wasserstein2019moving, mcshane2024statistical}.

Instead, we adopt a decision-theoretic objective aligned with the firm's bottom line: simple regret \parencites{bubeck2009pure,bubeck2011pure} , also known as the Expected Opportunity Cost of the decision \parencites{chen2000simulation,chick2001new}. 
Formally, we consider a sequential decision process over a fixed horizon of $N$ rounds (the traffic budget). An experimental policy $\pi$ consists of two components:
\begin{itemize}
    \item \textbf{Allocation Rule:} At each round $t=1, \dots, N$, based on the history of previous observations $\mathcal{H}_{t-1} = \{(\bidx_s, Y_s)\}_{s=1}^{t-1}$, the policy selects a design $\bidx_t \in [d_1] \times \cdots \times [d_m]$ to test. The policy allocates traffic to acquire information. 
    \item \textbf{Recommendation Rule:} After the budget $N$ is exhausted, the policy uses the full history $\mathcal{H}_N$ to return a single design candidate $J_N$ for permanent deployment.
\end{itemize}
Let $\mu^\star := \max_{\bidx} \mathcal{T}_{\bidx}^\star$ denote the value of the optimal design, and let $\mu_{J_N} := \mathcal{T}_{J_N}^\star$ be the true expected value of the recommended design. The Simple Regret is defined as the expected optimality gap of the recommendation:
\begin{align}\label{equ:obj-1}
\SR_N
\;:=\;
\mathbb{E}_\pi\left[\mu_\star - \mu_{J_N}\right],
\end{align}
where the expectation is taken over the randomness in the sampling process and the noise outcomes.
This metric quantifies the expected revenue or welfare lost if the platform fails to identify the best design. We depart from standard cumulative regret in bandit literature, which prioritizes earning rewards during the experiment. While cumulative regret is appropriate when the experiment itself is the product (e.g., ad targeting), it conflates the cost of learning with the value of deployment. In our setting, the experiment is a transient information-gathering phase (e.g., a pilot study) intended to guide a permanent launch. We therefore optimize for post-experiment performance, accepting higher ``exploration costs'' during the pilot to ensure the long-term deployment is optimal.

Furthermore, we prefer Simple Regret over the Probability of Correct Selection (PCS) \parencite{glynn2004large}, which minimizes the binary error $\mathbb{P}(\hat{\imath} \neq i^\star)$ that only distinguishes correct vs.\ incorrect identification. While intuitive, PCS creates a misalignment in high-dimensional spaces: if the second-best design is only marginally worse than the optimum, selecting it is a statistical error but practically optimal. Simple Regret treats such errors as negligible, penalizing mistakes strictly in proportion to their economic suboptimality. With finite samples and a combinatorial design space, exact optimality is often unattainable; our goal is therefore to find a policy that minimizes the opportunity cost $\SR_N$ given a fixed traffic budget $N$.

\section{Algorithm}
A purely randomized experimental design that selects a subset of feature combinations and applies tensor completion is inherently non-adaptive: it allocates the same exploration budget to demonstrably poor variations as it does to promising ones. Operationally, this static approach is highly inefficient. If a specific feature level (e.g., a ``Red'' checkout button) is uniformly inferior across all contexts, a static design wastes a fixed fraction of user traffic on combinations containing that button until the experiment concludes.

This highlights the critical distinction between statistical estimation and managerial decision-making. While a static design might minimize global reconstruction error, a product manager's primary goal is to identify the optimal policy while minimizing regret. Given that many interventions in digital experiments are systematically inferior---with failure rates often exceeding 50\% \parencite{kohavi2020trustworthy,kohavi2022testing}---an effective algorithm must be capable of ``failing fast.'' It must discard underperforming factor levels early so that the remaining traffic can be concentrated on high-potential candidates.

A natural alternative would be an optimistic algorithm based on upper confidence bounds (UCB). We do not pursue that route for two reasons. First, our objective is fixed-budget simple regret rather than cumulative regret, so the design problem is to maximize the quality of the final recommendation rather than to optimize rewards during the learning process. Second, in our setting the reward surface is represented by a partially observed low-rank tensor under adaptive sampling. A UCB-style method would therefore require high-probability confidence sets for predicted rewards over a nonconvex tensor parameterization. Constructing confidence sets that are both sufficiently tight to guide exploration and computationally tractable at platform scale is, to our knowledge, not currently available in this setting. Our two-stage elimination design avoids this difficulty by relying instead on structural prediction in Stage I (\TStage) to screen out weak levels and on direct empirical comparison in Stage II (\VS) to identify the final policy.

\subsection{Two-stage design: tensor stage and vector stage}
In the \TStage, we begin without prior knowledge of the treatment effects. We allocate a small fraction of traffic to a randomized subset of interventions to ``cover'' the factor space. The goal here is not perfect predictive accuracy, but rather sufficient precision to confidently screen out sub-optimal factor levels. 

To formalize this elimination criterion, we define the Factor Level Marginal Contribution (FLMC). For a factor $k$ and level $i$, the FLMC is defined as the maximum estimated effect among all combinations in the current design space that contain level $i$:
\begin{align}
	\mu^{(\ell)}_{k,i}\;:=\;\max_{\bj\in \prod_{k'\neq k}\cA^{(\ell)}_{k'}} \cT^{(\ell)}_{(i,\bj)}
\end{align}
where $\cA_k^{(1)} = \{1,\dots,d\}$ and $\Aell := \cA_1^{(\ell)}\times\cdots\times\cA_m^{(\ell)}$ denotes the design space at round $\ell$.

Managerially, the FLMC represents the ``performance ceiling'' of a specific feature. For example, the FLMC of a ``Blue Button'' is the highest conversion rate achievable by any valid combination that includes it. This metric provides a safety net: it ensures a product manager does not prematurely discard a high-potential feature simply because it was randomly paired with poor-performing counterparts in early testing phases. We visualize this concept in the first row of Figure \ref{fig:tensor_matricizations} using an $8 \times 8 \times 8$ tensor matricized along each mode. The value at the left of each row (e.g., 0.601 in Mode-1) represents the maximum potential of that specific level across all valid combinations. FLMC is defined for each factor separately. Figure \ref{fig:tensor_matricizations} (a-c) displays the unfoldings for all three modes (factors) of the initial tensor.

\begin{figure}[h!]
    \centering
    % --- FIRST ROW: 3 Figures ---
    \begin{subfigure}[b]{0.32\textwidth}
        \centering
        \includegraphics[width=\textwidth]{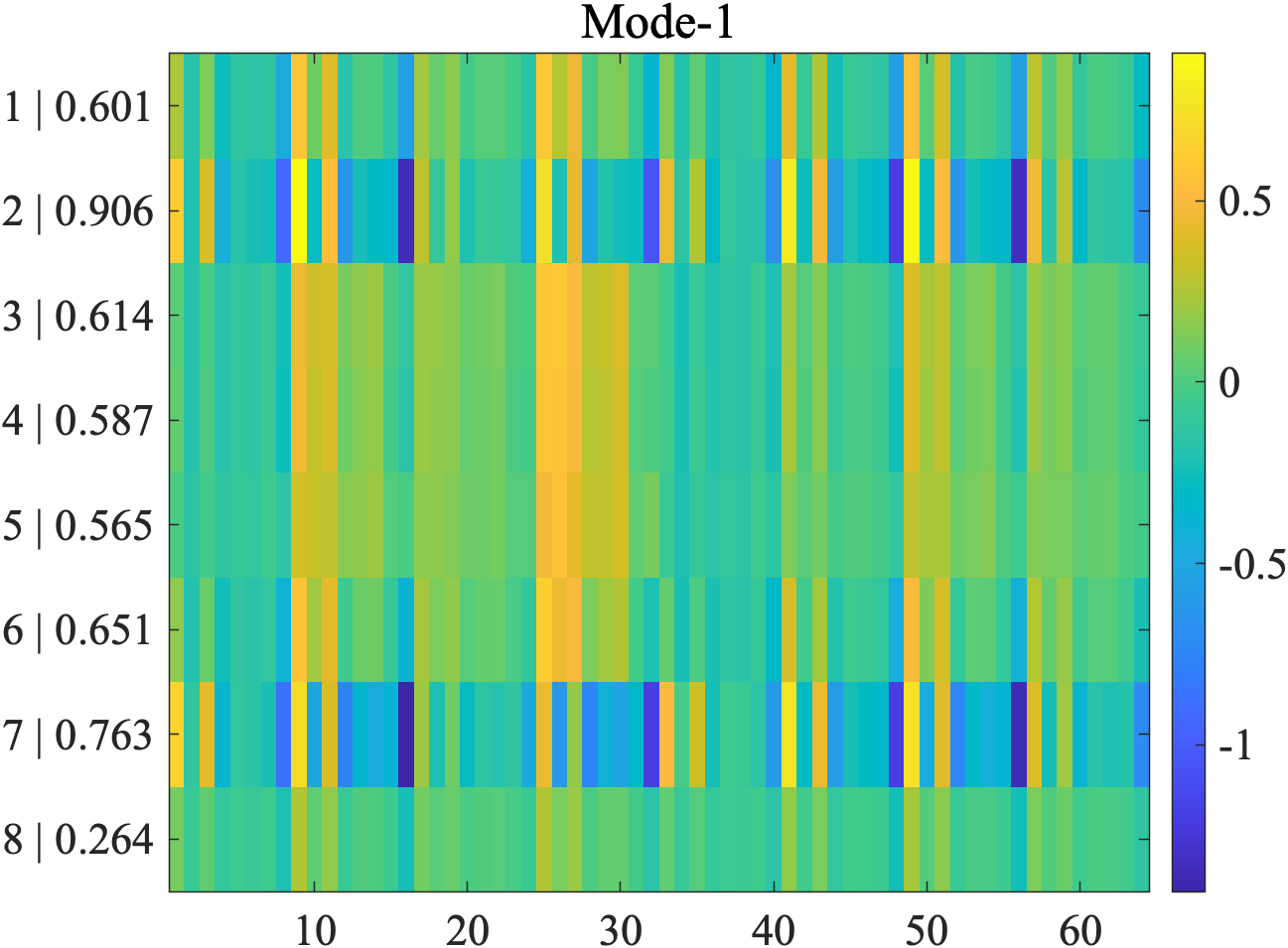}
        \caption{}
        \label{fig:round1_mode1}
    \end{subfigure}
    \hfill
    \begin{subfigure}[b]{0.32\textwidth}
        \centering
        \includegraphics[width=\textwidth]{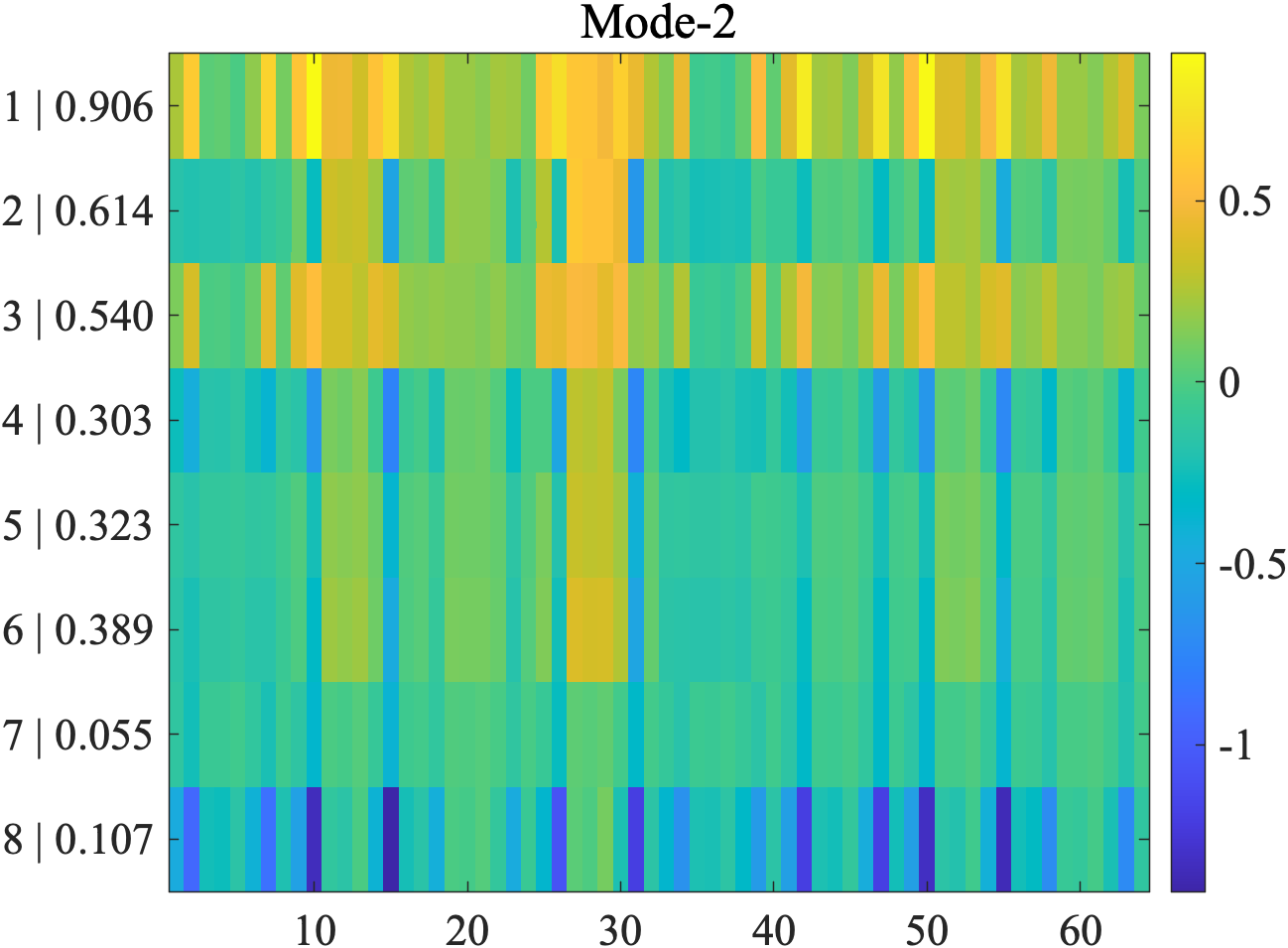}
        \caption{}
        \label{fig:round1_mode2}
    \end{subfigure}
    \hfill
    \begin{subfigure}[b]{0.32\textwidth}
        \centering
        \includegraphics[width=\textwidth]{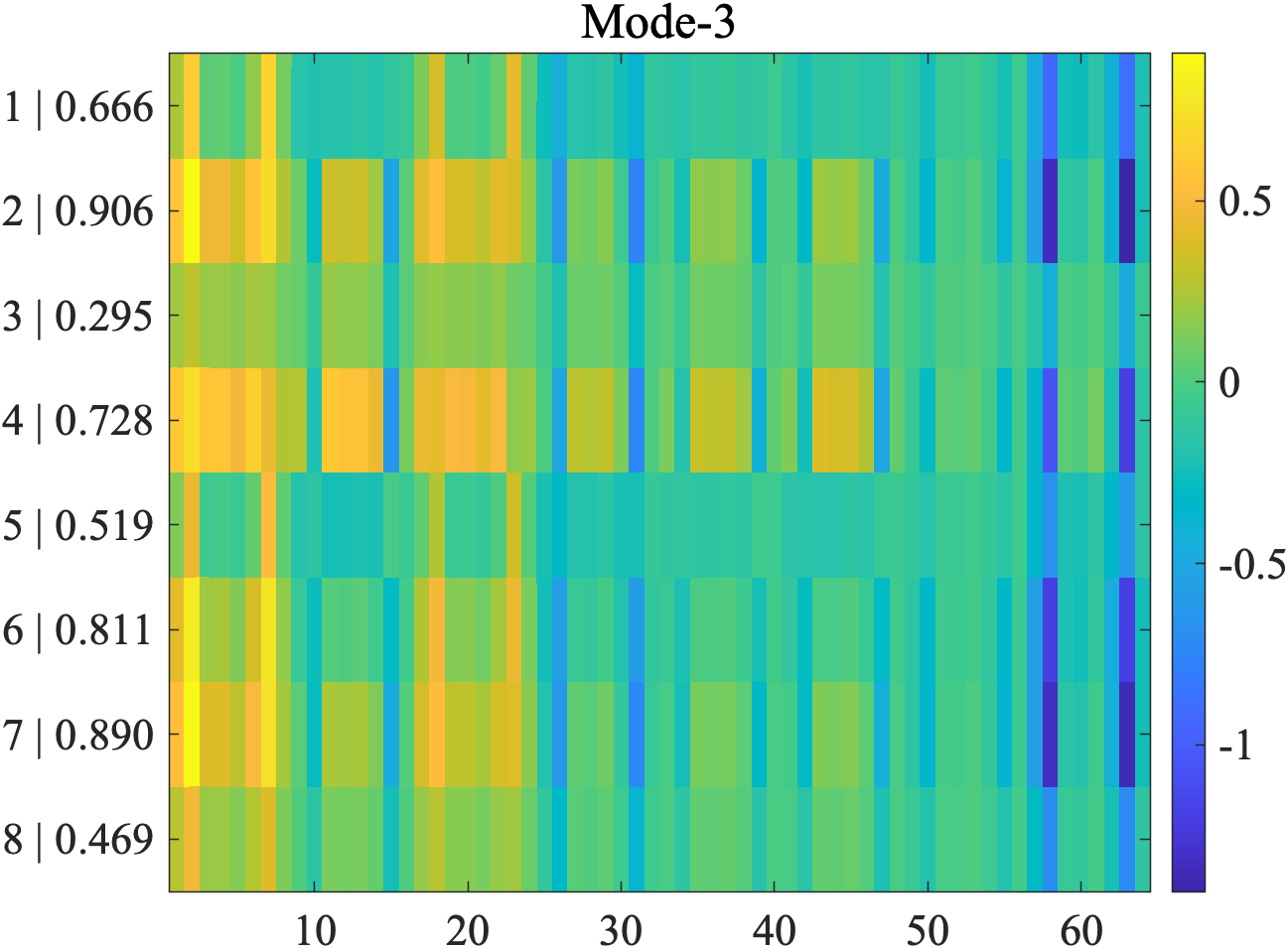}
        \caption{}
        \label{fig:round1_mode3}
    \end{subfigure}

    \vspace{1.5em} % Adds vertical space between the rows

    % --- SECOND ROW: 2 Figures ---
    \begin{subfigure}[b]{0.45\textwidth}
        \centering
        \includegraphics[width=\textwidth]{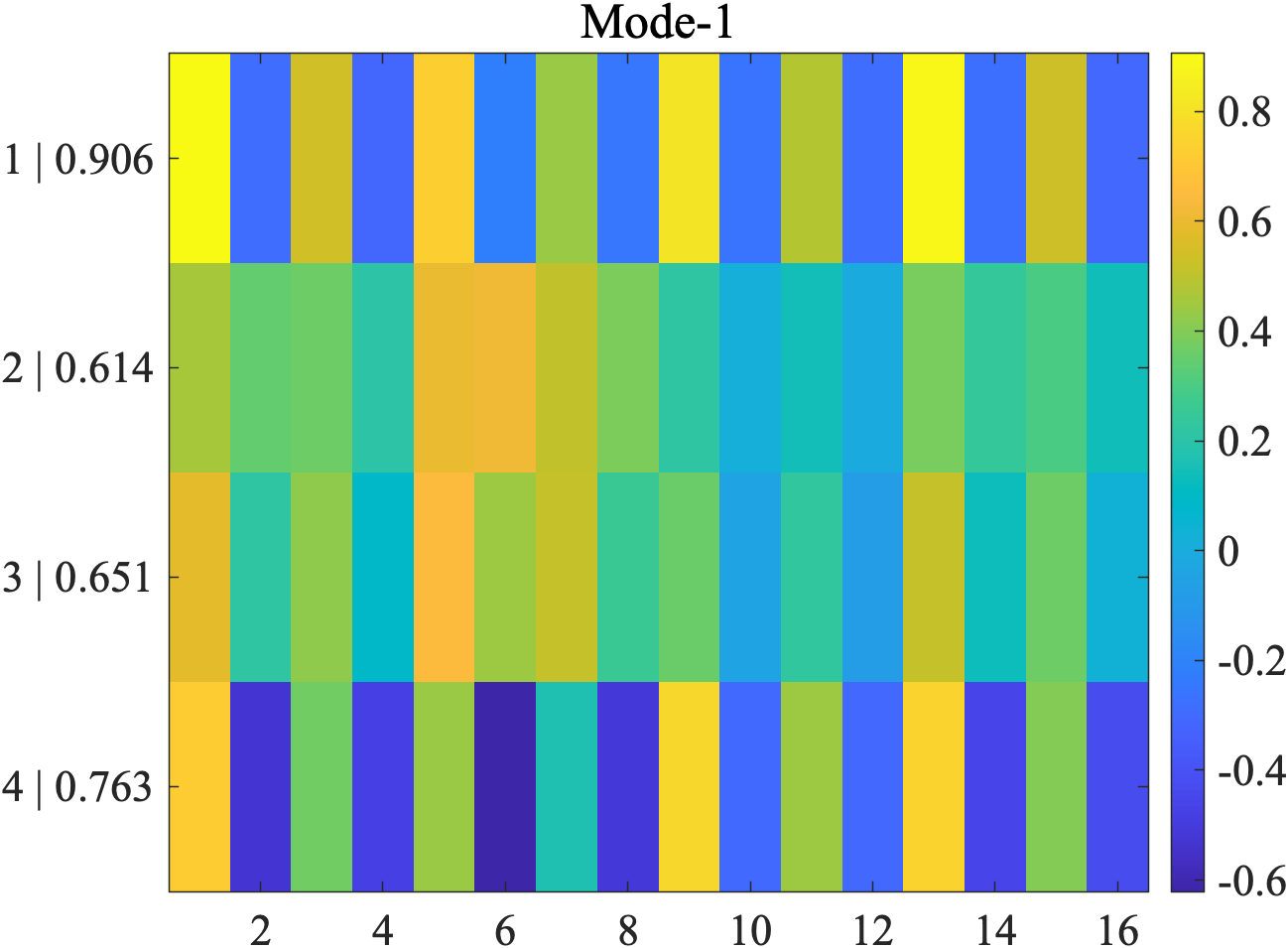}
        \caption{}
        \label{fig:round2_mode1}
    \end{subfigure}
    \hspace{0.05\textwidth} % Keeps the bottom two centered but slightly spaced apart
    \begin{subfigure}[b]{0.45\textwidth}
        \centering
        \includegraphics[width=\textwidth]{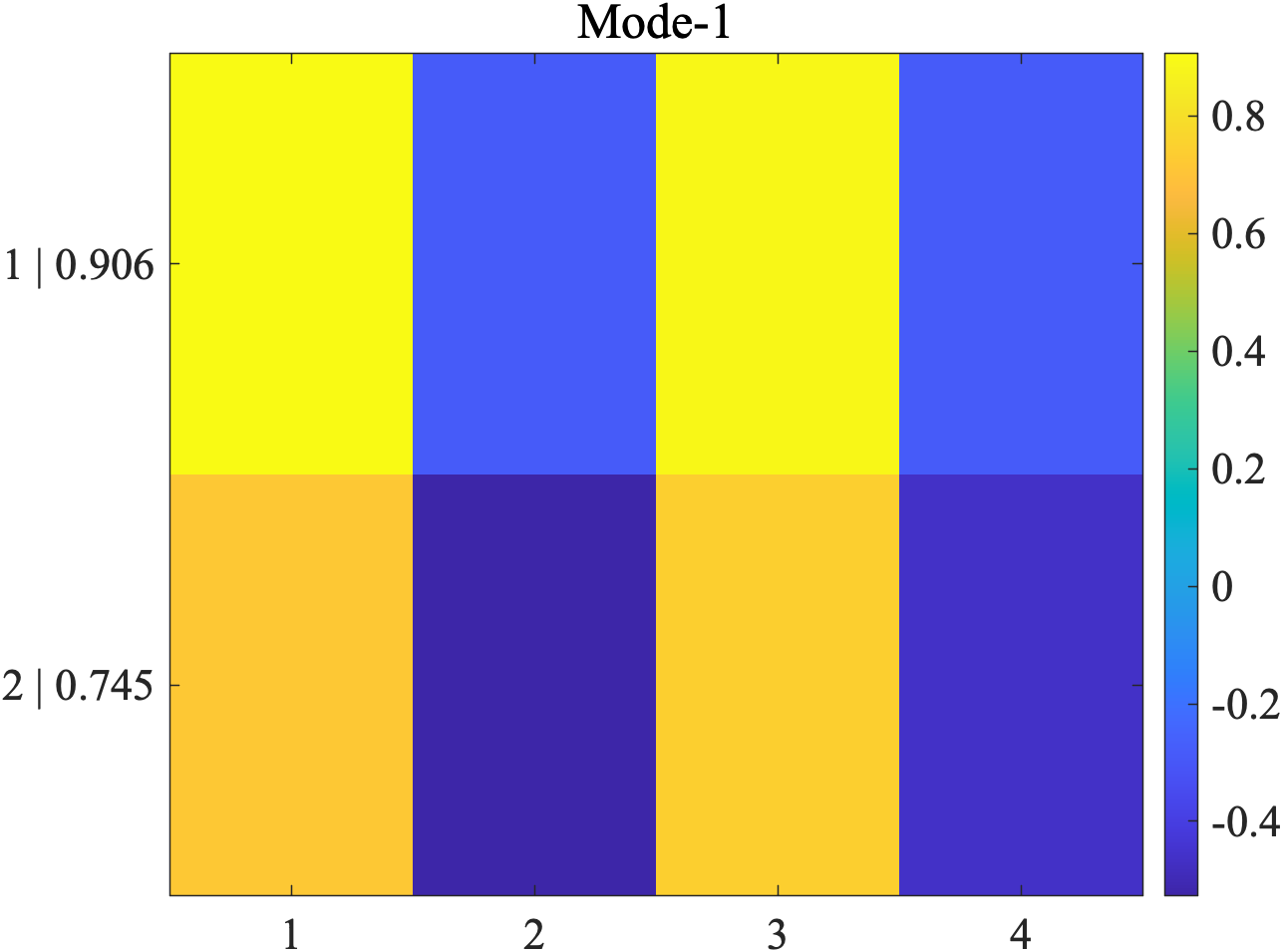}
        \caption{}
        \label{fig:round3_mode1}
    \end{subfigure}

    \caption{Evolution of row-wise maxima during tensor halving. The top row (a-c) displays the unfoldings for Modes 1, 2, and 3 of the original tensor. The bottom row (d-e) tracks the Mode-1 unfolding across the subsequent first and second halving rounds.}
    \label{fig:tensor_matricizations}
\end{figure}
At the end of each round $\ell$, we rank the levels within each factor based on their FLMC. We then prune the design space by eliminating a fraction (e.g., the bottom 50\%) of levels for each factor. It is important to note that a 50\% elimination is not a rigid requirement; rather, it is an instance of a broader level-wise elimination procedure designed to rapidly reduce the dimensionality of the design space. Managerially, this strategy exponentially cuts wasted budget on poor performers while maintaining enough breadth to avoid missing complex synergies between features. Computationally, if the sub-tensors satisfy the theoretical conditions, this level-wise elimination reduces the time-to-insight from a polynomial timeline ($d^m$) to a logarithmic one ($\log d$), enabling managers to find the optimal policy in a fraction of the time. This elimination is processed for each factor individually and in parallel.

In our $8 \times 8 \times 8$ example in Figure \ref{fig:tensor_matricizations}, rows 1, 4, 5, and 8 of Mode-1 fall into the bottom half during the first round and are eliminated, yielding the reduced tensor shown in Figure \ref{fig:tensor_matricizations} (d). A critical phenomenon to observe during this sequential halving is the shifting of maximum values. For instance, the FLMC of a specific level might be 0.7629 in round two, but drop to 0.7449 in round three. The value 0.7629 disappears because the specific cross-factor combination that generated it was deleted when Mode-2 or Mode-3 was sliced. The new maximum, 0.7449, is simply the highest-scoring ``runner-up'' surviving within the remaining space. Discarding a sub-optimal level in one factor can accidentally eliminate the necessary synergies that drove the peak performance of another factor.

This elimination strategy concentrates the search on a denser, high-value sub-tensor. 

\begin{lemma}\label{lem:low-rank}
If $\cT^\star$ has multilinear rank $(r_1,\dots,r_m)$, then any subtensor obtained by restricting each mode to a subset of its indices has multilinear rank at most $(r_1,\dots,r_m)$. In particular, eliminating levels (reducing $d$) does not increase rank.
\end{lemma}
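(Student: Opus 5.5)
The plan is to argue directly from the Tucker representation in \eqref{eq:tucker}, generalized to $m$ modes. If $\cT^\star$ has multilinear rank $(r_1,\dots,r_m)$, it admits a factorization
\[
\cT^\star_{i_1\cdots i_m}=\sum_{p_1=1}^{r_1}\cdots\sum_{p_m=1}^{r_m}\cG_{p_1\cdots p_m}\,U_1(i_1,p_1)\cdots U_m(i_m,p_m),
\]
with $U_k\in\mathbb{R}^{d_k\times r_k}$. Fix index subsets $\cA_k\subseteq[d_k]$ and let $\cS$ be the subtensor of $\cT^\star$ on $\cA_1\times\cdots\times\cA_m$. Let $U_k^{\cA}$ be the row-submatrix of $U_k$ with rows indexed by $\cA_k$. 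The same identity, restricted to $i_k\in\cA_k$, shows that $\cS$ has a Tucker decomposition with the same core $\cG$ and factor matrices $U_k^{\cA}$.

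The second step converts this into a rank bound through the mode-$k$ unfoldings. The multilinear rank is $r_k=\operatorname{rank}(\cT^\star_{(k)})$. The unfolding of $\cS$ satisfies
\[
\cS_{(k)}=U_k^{\cA}\,\cG_{(k)}\,\big(U_m^{\cA}\otimes\cdots\otimes U_{k+1}^{\cA}\otimes U_{k-1}^{\cA}\otimes\cdots\otimes U_1^{\cA}\big)^{\top}.
\]
Its column space therefore lies in the column space of $U_k^{\cA}$, which has at most $r_k$ columns. Hence $\operatorname{rank}(\cS_{(k)})\le r_k$ for every $k$.

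A more elementary route avoids factor matrices altogether. The matrix $\cS_{(k)}$ is a submatrix of $\cT^\star_{(k)}$: it keeps the rows indexed by $\cA_k$ and the columns indexed by $\prod_{k'\neq k}\cA_{k'}$. The rank of a submatrix never exceeds the rank of the full matrix, because its rows are restrictions of a subset of rows of the full matrix. This gives the same bound immediately, and I would present it as the main argument, with the Tucker form as the interpretation.

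The final claim about eliminating levels follows by applying the bound to each round's surviving sets $\cA^{(\ell)}_k$. Since these sets are nested, the rank bound holds uniformly across rounds. No step here is a real obstacle. The only point needing care is the indexing convention for the unfolding: one must check that restricting the modes $k'\neq k$ corresponds exactly to selecting a column subset of $\cT^\star_{(k)}$. This holds because the unfolding is a bijective rearrangement of entries, with column index a function of $(i_{k'})_{k'\neq k}$.
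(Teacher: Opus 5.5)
Your proof is correct, and your first route is the one the paper relies on. The paper gives no standalone proof of this lemma. Its only justification is implicit, in the discussion after Theorem~\ref{thm:TwoStage-gapind}, where the restricted tensor is written as $\cT^{(\ell)}=\cG\times_1(S_1^{(\ell)}U_1)\times\cdots\times_m(S_m^{(\ell)}U_m)$ with row-selection matrices $S_k^{(\ell)}$. That is exactly your restricted Tucker factorization: same core, row-restricted factors, so each unfolding's column space lies in that of $S_k^{(\ell)}U_k$.

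The argument you chose as the main one is a decomposition-free variant of the same selection idea. You observe that $\mathcal M_k$ of the subtensor is the submatrix of $\mathcal M_k(\cT^\star)$ on rows $\cA_k$ and columns $\prod_{k'\neq k}\cA_{k'}$. This is shorter, and it shows the lemma is nothing more than rank monotonicity under taking submatrices, valid for any tensor.

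The Tucker form has a different advantage: it exhibits the explicit factorization the paper later uses to discuss $\lambda_{\min}^{(\ell)}$ and $\mu_0^{(\ell)}$ through the conditioning of $S_k^{(\ell)}U_k$. Those restricted factors are no longer orthonormal, which is why the paper must impose the round-wise regularity conditions separately rather than inherit them from this lemma.

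One minor point: you do not need nestedness of the active sets. Each $\cA^{(\ell)}$ is directly a product of subsets of $[d]$, so the bound applies to $\cT^{(\ell)}$ in one step.
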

This stage relies on the tensor completion algorithm to infer values for unobserved arms. The validity of this approach is guaranteed by the property that restricting a low-rank tensor to a subset of indices does not increase its rank (Lemma \ref{lem:low-rank}). We continue this elimination process until a predetermined ``switch round'' $\Lsw$. Managerially, $\Lsw$ represents the threshold where the manager believes the remaining design space is no longer ``low-rank''---meaning the surviving features are so highly competitive and nuanced that a small number of latent factors can no longer capture their complex effects across the sub-tensor. (We discuss the theoretical conditions guiding the selection of $\Lsw$ in Remark \ref{rem:conditions}).

After $\Lsw$ rounds, the design space has been successfully distilled to a highly concentrated set of top-performing candidates. However, continuing to rely purely on the tensor model's estimations carries a significant risk of model misspecification due to the breakdown of this low-rank assumption. Treating a dense, high-performing sub-tensor as low-rank might artificially smooth over the true, idiosyncratic differences between the absolute best arms.

To mitigate this, we switch to a \VS, where we treat the surviving interventions as distinct, standalone variations in a multi-armed bandit framework. In this stage, we utilize the Sequential Halving (SH) algorithm (Algorithm \ref{alg:VectorSH}). This is a multi-round procedure where every surviving arm (intervention) receives an equal share of the user traffic to establish an empirical mean estimation. As users enter the digital platform, they are randomly assigned to one of these surviving arms. After a set period, we evaluate the empirical average treatment effect of each arm. We then discard the bottom half of the empirical performers and funnel all remaining traffic into the next round. This process continues until only the single best policy remains.

\subsection{Two-stage algorithm}

We summarize the full procedure in Algorithm \ref{alg:TwoStage_TensorVector}. Let the total budget be $N = \Nsw + \Nps$, where $\Nsw$ is allocated to the tensor stage and $\Nps$ to the vector stage. In \TStage, $\Nsw$ is further divided into per-round budgets $\{N_\ell\}_{\ell=1}^{\Lsw}$. \TStage\ uses low-rank tensor completion to aggressively shrink the design space by eliminating weak factor levels based on estimated marginal contributions, while \VS\ treats the surviving combinations as distinct arms and refines the choice using a standard best-arm identification procedure.

The proposed design is a general framework rather than one tied to a particular tensor-structure assumption. In practice, the tensor-stage algorithm should be chosen to match the structural assumptions that are most appropriate for the application. In Sections \ref{sec:theory} and \ref{sec:real_data}, we specialize to Tucker low-rank structure to develop theoretical guarantees and to implement the design in a real-world setting. The corresponding tensor completion method is described in Appendix~\ref{sec:appx_alg}, where we present a representative nonconvex approach based on Riemannian gradient descent. Randomization in \TStage\ reduces the experimental burden by avoiding exhaustive evaluation of all treatment combinations, but it also introduces estimation error. This trade-off motivates the sequential, elimination-based structure of the design.

\begin{algorithm}[!ht]
\caption{Two-stage algorithm}
\label{alg:TwoStage_TensorVector}
\DontPrintSemicolon
\KwIn{Total budget $N = \Nsw + \Nps$; switch round $\Lsw$; per-round budgets $\{N_\ell\}_{\ell=1}^{\Lsw}$}
\KwOut{A selected intervention index $\widehat{\boldsymbol{i}}$}
\BlankLine
\textbf{Initialize:} For each factor $k\in\{1,\dots,m\}$, set $\cA_k^{(1)}\gets \{1,\dots,d\}$; set $\ell\gets 1$.\;
\BlankLine
\textbf{\TStage\  ($\ell=1,\dots,\Lsw$):}\;
\While{$\ell \le \Lsw$ \textbf{and} $\prod_{k=1}^m |\cA_k^{(\ell)}| > 1$}{
  Let $\Aell \gets \cA_1^{(\ell)}\times\cdots\times\cA_m^{(\ell)}$.\;
  Sample $N_\ell$ indices uniformly at random from $\Aell$ with replacement and observe outcomes $\{(\bidx_t,Y_t)\}_{t=1}^{N_\ell}$.\;
  Fit a tensor completion model on $\{(\bidx_t,Y_t)\}_{t=1}^{N_\ell}$ to obtain $\widehat{\cT}^{(\ell)}$. \;
  \For{$k=1$ \KwTo $m$}{
     For each $i \in \cA_k^{(\ell)}$, compute the marginal contribution
     \[
     \hat\mu^{(\ell)}_{k,i}=\max_{\bj\in\prod_{k'\neq k}\cA_{k'}^{(\ell)}} \widehat{\cT}^{(\ell)}_{(i,\bj)}.
     \]
     Update the active level set by pruning by median:
     \[
     \cA_k^{(\ell+1)} \gets \Big\{i\in\cA_k^{(\ell)}:\ \hat\mu^{(\ell)}_{k,i} \ge \mathrm{median}\big\{\hat\mu^{(\ell)}_{k,i}: i\in\cA_k^{(\ell)}\big\}\Big\}.
     \]
     At each pruning step, we keep exactly the top $\left\lceil\left|\cA_k^{(\ell)}\right| / 2\right\rceil$ indices by empirical score, breaking ties arbitrarily so that $\left|\cA_k^{(\ell+1)}\right|=d_{\ell+1}$.
  }
  $\ell\gets \ell+1$.\;
}
\BlankLine
\textbf{\VS\ :}\;
Let $\cA^{(\Lsw+1)} \gets \cA_1^{(\Lsw+1)}\times\cdots\times\cA_m^{(\Lsw+1)}$ denote the remaining arms and let $\Nps \gets N - \sum_{\ell=1}^{\Lsw} N_\ell$.\;
Run Algorithm~\ref{alg:VectorSH} (sequential halving) on arm set $\cA^{(\Lsw+1)}$ with budget $\Nps$.\;
Let $\widehat{\boldsymbol{i}}$ be the arm returned by the sequential halving stage.\;
\Return $\widehat{\boldsymbol{i}}$.\;
\end{algorithm}

\begin{algorithm}[!ht]
\caption{Sequential Halving (SH)}
\label{alg:VectorSH}
\DontPrintSemicolon
\KwIn{Budget $\Nps$, surviving arms $\mathcal{S}_1$ with size $K = |\mathcal{S}_1|$}
\KwOut{Selected arm $J_N$}
\BlankLine
Let $L \gets \left\lceil \log_2 K \right\rceil$.\;
\For{$\ell=1$ \KwTo $L$}{
  For each arm $i\in\mathcal S_\ell$, sample it
    \[
    T_\ell \;\gets\; \left\lfloor \frac{\Nps}{|\mathcal S_\ell|\,L} \right\rfloor
    \]
  times and compute its empirical mean reward $\hat\mu_i$.\;
  Let $\mathcal S_{\ell+1}$ be the $\left\lceil |\mathcal S_\ell|/2 \right\rceil$ arms in $\mathcal S_\ell$ with the largest $\hat\mu_i$.\;
}
\Return $J_N$ as the unique arm in $\mathcal S_{L+1}$.\;
\end{algorithm}

\section{Theoretical Results}\label{sec:theory}
This section states the assumptions under which our two-stage design is provably effective and
summarizes its performance guarantees. 
The theory formalizes two complementary guarantees: (i) a worst-case simple-regret bound that holds
without any separation assumptions, and (ii) an instance-specific identification bound that quantifies
how quickly the algorithm improves when the data exhibit clear performance gaps.
Throughout, boldface lowercase letters denote vectors (e.g., $\mathbf{x}$), uppercase letters denote
matrices, and calligraphic letters denote tensors.

\subsection{Assumptions}
We first establish structural properties of the true effect tensor $\cT^\star$ and introduce the
tensor notation used in our guarantees. Let $\mathbb{O}_{d,r}:=\{U\in\mathbb R^{d\times r}:U^\top U=I_r\}$
denote the Stiefel manifold. For a tensor $\cT\in\mathbb{R}^{d_1\times\cdots\times d_m}$ and a matrix
$V\in\mathbb R^{d_j\times r_j}$, we write $\cT\times_j V$ for the mode-$j$ multilinear product.
We also write $\mathcal M_j(\cT)$ for the mode-$j$ matricization of $\cT$, which stacks mode-$j$ fibers
as columns.

\begin{assumption}[Low-rank structure]\label{assump:lowrank}
Let $\cT^\star \in \mathbb{R}^{d_1 \times \cdots \times d_m}$ be an order-$m$ tensor representing the true effect tensor over all $\prod_kd_k$ intervention combinations. We assume that $\cT^\star$ admits a low-rank representation in Tucker form. Specifically, there exist integers $r_1,\dots,r_m$ and orthogonal factor matrices $U_k \in \mathbb{O}_{d_k, r_k}$ for $k = 1,\dots,m$, together with a core tensor $\cG \in \mathbb{R}^{r_1 \times \cdots \times r_m}$ such that:
\[
\cT^\star = \cG \times_1 U_1 \times_2 U_2 \cdots \times_m U_m
\]
meaning $\cT^\star$ has Tucker multilinear rank at most $(r_1,\dots,r_m)$. Assume the true tensor $\cT^\star$ satisfies standard $\mu_0$-incoherence condition that
   $\max_{k\in[m]}(d_k/r_k)\|U_k\|_{2,\infty}^2 \leq \mu_0$, where
  $\|U\|_{2,\infty}:=\max_{i\in[d]}\|U_{i,\cdot}\|_2$.
\end{assumption}
Let $\sigma_r(\cdot)$ denote the $r$-th largest singular value of a matrix. The signal strength of $\cT$ is defined by the smallest positive singular value across all of its matricizations:
\(
\lambda_{\min } := \min_{k\in[m]}\sigma_{r_k}\left(\mathcal{M}_k(\cT^\star)\right).
\)
Similarly, define $\lambda_{\max }:= \max _j \sigma_1\left(\mathcal{M}_j(\cT^\star)\right)$. The condition number of $\cT^\star$ is defined by $\kappa:= \lambda_{\max } \lambda_{\min }^{-1}$. For simplicity of exposition, we focus on the balanced case
$d_1=\cdots=d_m=d$ and $r_1=\cdots=r_m=r$.
Thus, $\cT^\star \in \mathbb{R}^{d\times\cdots\times d}$, and the effective complexity of the tensor is governed by a single rank parameter $r$. While we focus on Tucker low-rankness to establish our theoretical guarantees, our methodology is not strictly bound to this specific assumption. The choice of low-rank structure ultimately depends on the manager's behavioral assumptions regarding the feature space. These results can be naturally extended to other low-rank paradigms, such as CP \parencites{carroll1970analysis} and Tensor-Train decompositions \parencites{fannes1992finitely,oseledets2011tensortrain}.
Without loss of generality, we additionally assume that $d$ is a power of two so that the Stage-I
``keep-the-top-half'' pruning rule yields exact halving at each round.
Define \( d_\ell := d/2^{\ell-1}, \ell=1,2,\ldots, \)
and recall that $\cA_k^{(\ell)}\subseteq[d]$ denotes the active level set
in mode $k$ at the beginning of Stage-I round $\ell$.
With deterministic tie-breaking to keep exactly the top half,
the algorithm maintains $|\cA_k^{(\ell)}| = d_\ell$ for all
$ k\in[m]$ and all $\ell\le \Lsw+1$, 
and hence the active Cartesian set has cardinality
\(
|\cA^{(\ell)}|
=
\prod_{k=1}^m |\cA_k^{(\ell)}|
=
d_\ell^{\,m}.
\)
If $d$ is not a power of two, the same statements hold with
$d_\ell:=\lceil d/2^{\ell-1}\rceil$ (and corresponding floors/ceilings in the pruning),
which only changes constants and does not affect the rates; we omit these
bookkeeping details throughout.

\begin{assumption}[Sampling and noise model across rounds]\label{assump:sampling-noise}
Let $\{\mathcal F_\ell\}_{\ell\ge1}$ be the natural filtration generated by the algorithm up to the
beginning of round $\ell$. In particular, the active design set
$\cA^{(\ell)}=\cA_1^{(\ell)}\times\cdots\times\cA_m^{(\ell)}$ is $\mathcal F_\ell$-measurable. In round $\ell$, conditional on $\mathcal F_\ell$, the algorithm collects $N_\ell$ independent samples as follows.
There exist i.i.d.\ queried intervention indices
\(
\bidx_{\ell,1},\ldots,\bidx_{\ell,N_\ell}\ \in\ \cA^{(\ell)}
\)
such that:
\begin{enumerate}
\item Conditional on $\mathcal F_\ell$,
\(
\bidx_{\ell,t}\ \stackrel{\text{i.i.d.}}{\sim}\ \mathrm{Unif}\big(\cA^{(\ell)}\big), t=1,\ldots,N_\ell.
\)

\item  Each observation satisfies
\(
Y_{\ell,t}
=
\cT^\star_{\bidx_{\ell,t}}\;+\;\xi_{\ell,t}, t=1,\ldots,N_\ell.
\)
Conditional on $\mathcal F_\ell$, the noise variables
$\{\xi_{\ell,t}\}_{t=1}^{N_\ell}$ are independent, mean-zero, and $\sigma^2$-sub-Gaussian, i.e. for all
$s\in\mathbb R$,
\(
\mathbb E\!\left[\exp\!\left(s\,\xi_{\ell,t}\right)\,\middle|\,\mathcal F_\ell\right]
\;\le\;
\exp\!\left({\sigma^2 s^2/2}\right).
\)
Moreover, the noise variables are independent across rounds conditional on the corresponding filtrations.
\end{enumerate}
\end{assumption}
\noindent
Assumption~\ref{assump:sampling-noise} matches standard experimental practice: conditional on the platform state (and the algorithm's history), users are randomized uniformly within the active design set, and outcomes provide unbiased noisy measurements of the underlying treatment effect tensor.

\begin{assumption}[Tensor completion tail bound]\label{assump:completion-tail}
Let $\widehat{\cT}$ denote a tensor-completion estimator of $\cT^\star$ (defined in Assumption~\ref{assump:lowrank}) constructed from partial, noisy observations with total traffic budget $N = |\Omega|$. Define the sampling rate by $p := N/d^m$, the fraction of all possible intervention combinations that are experimentally evaluated. We assume there exist absolute constants $c,C_m>0$ such that for all $t>0$, supposing that $r, \kappa, \mu_0 = O(1)$:
\[
\bP\Big(\infnorm{\widehat{\cT}-\cT^{\star}} \ge t \Big) \le C_m \exp\!\Bigg\{-\,c\,t^2\,\frac{\lambda_{\min}^2}{\sigma^2}\,\frac{p}{\df\,\log d}\frac{1}{\infnorm{\cT^\star}^2}  \Bigg\},
\]
where $\infnorm{\cX}$ denotes the maximum absolute entry of tensor $\cX$, $\sigma^2$ is the variance proxy of the noise, and $\df = m(dr-r^2)+r^m$ denotes the effective degrees of freedom of the low-rank tensor model.
\end{assumption}
Assumption~\ref{assump:completion-tail} can be achieved by a range of computationally efficient algorithms, such as vanilla gradient descent \parencite{cai2022nonconvex}, Riemannian gradient descent \parencite{wang2023implicit}, or online Riemannian gradient descent (RGM)\parencite{li2024onlinea}, under standard conditions on incoherence, signal-to-noise ratio, and sample size. 
Most tensor-completion results are stated in Frobenius norm, which controls average reconstruction error. Our decision problem is different: elimination and final recommendation depend on comparing individual entries and row-wise maxima, so uniform entrywise control is the relevant object. For this reason, we formulate the completion requirement through an $\ell_\infty$-type tail bound. Such guarantees are available for several nonconvex tensor-completion methods under incoherence, sufficient sampling, and signal-strength conditions. See Appendix~\ref{sec:appx_alg} for one representative nonconvex tensor-completion route, based on Riemannian gradient descent, that yields entrywise error guarantees under standard incoherence, sample-size, and signal-to-noise conditions. The key requirement for our analysis is uniform control over the active tensor entries, not merely good average recovery. We now summarize these conditions and interpret them in our experimental setting.

\begin{remark}[Incoherence, signal-to-noise ratio, and unbiased randomization]\label{rem:conditions}
\leavevmode
\begin{itemize}
    \item \textbf{Incoherence:} 
  Small incoherence $\mu_0$ (see Assumption \ref{assump:lowrank}) means that the “energy” of the singular vectors is spread relatively evenly across coordinates. Incoherence prevents the subspace from aligning too closely with the canonical basis (i.e., from having a few extremely “spiky” entries), which would otherwise make completion impossible unless those specific entries are observed. Managerially, incoherence implies that the success of the platform is driven by broad behavioral themes rather than a single ``needle-in-a-haystack'' combination. If a platform's value was entirely concentrated in one specific, unobserved combination of color, flow, and coupon, no algorithm could ``predict'' it without testing it directly. Small $\mu_0$ guarantees that by observing a strategic subset of combinations, we can reliably infer the rest of the design landscape.
  
    \item \textbf{Signal-to-noise ratio (SNR):} In tensor completion literature, the underlying low-rank signal must be sufficiently strong relative to noise and sampling rate so that recovery is both information-theoretically and algorithmically feasible. A typical requirement takes the form $\lambda_{\min }/\sigma \gg \sqrt{(d^m)^{3/2}/N }$ and \( N \gg (d^m)^{1/2}, \) where $\lambda_{\min}$ denotes the smallest nonzero singular value of the matricization of the tensor and $\sigma$ is the noise level. Such conditions are common across various tensor models \parencites{zhang2018tensor,xia2021statistically,xia2022inference,li2024onlinea} and are believed to be essentially necessary for polynomial-time recovery algorithms. By contrast, the minimax-optimal statistical rate can be substantially weaker, $\lambda_{\min }/\sigma \gg \sqrt{d^{m+1}/N}$ and 
\( N \gg d,\) but achieving recovery at this regime is known to be NP-hard \parencite{barak2022noisy}. 
 In digital experiments, $\lambda_{\min}$ captures the strength of latent behavioral factors, while $\sigma$ represents the inherent ``noise'' or variance of user behavior. A high SNR means the underlying consumer preferences are strong enough to be detected despite this volatility. This condition warns managers that for highly noisy metrics, a larger ``exploration portfolio'' is required to ground the tensor model before elimination can safely begin.
\item \textbf{Sampling rate:}  
Tensor-completion theory typically assumes a random sampling model in which each tensor entry is included in the observation set $\Omega$ independently with probability $p$. In our setting, $p = N/d^m$, where $N = |\Omega|$ is the number of experimentally evaluated combinations and $d^m$ is the total number of possible combinations. A common requirement is
\(
p \gg d^{-m/2},
\)
which implies the budget condition
\(
N \gg d^{m/2}.
\)
Thus, successful recovery does not require testing all $d^m$ interventions. Instead, it is sufficient to allocate traffic on the order of the square root of the design-space size, because the low-rank structure allows information to be shared across related combinations. From a managerial perspective, this condition provides a concrete traffic threshold for when tensor-based experimentation is feasible. Suppose the platform has traffic budget $B$ in a given round, and each intervention requires $b$ user-level observations to estimate its treatment effect with the desired noise level. Then the number of intervention combinations that can be evaluated in that round is
\(
N = B/b,
\)
so the sampling condition becomes a corresponding requirement on platform traffic:
\(
B \gg b\, d^{m/2}.
\)
This interpretation makes clear how statistical recovery conditions translate into operational experimentation constraints. In particular, even when individual intervention-level measurements are noisy, accurate recovery remains possible as long as the total traffic is large enough relative to the size of the design space.
    \item \textbf{Unbiased Randomization:}   In each round, we randomly sample over the set $\Aell$ and use these sample to get the tensor completion estimation $\widehat{\cT}^{(\ell)}$. In the experimental design language, $\Aell$ is the active intervention set and when the user comes to the platform, he or she will be randomly assigned to one of the intervention in the active set $\Aell$. The change of $\Aell$ doesnot affect the underlying distribution of the user population. Thus, the samples over different rounds are independent. 
\end{itemize}
\end{remark}

\subsection{Simple regret guarantees}
We analyze the theoretical performance of the Two-Stage Sequential Elimination algorithm along two complementary dimensions:
\begin{itemize}
    \item \textbf{Gap-independent guarantees,} which provide worst-case bounds as a function of the experimental budget $N$ and problem dimensions (e.g., $d,m,r$), without assuming any separation between the best and second-best interventions;
    \item \textbf{Gap-dependent guarantees,} which provide instance-specific bounds that become stronger when the best intervention is better separated from its competitors.
\end{itemize}

The gap-independent analysis provides a uniform benchmark for experimental design. Because it does not rely on unknown features of the instance, it yields a budget-to-performance guarantee that is useful for ex ante planning. The gap-dependent analysis complements this worst-case benchmark by capturing the environments in which adaptive experimentation is especially effective. In many platform applications, many factor levels are clearly inferior and only a small subset remains competitive. In such cases, the tensor stage can safely eliminate large portions of the design space early, so the algorithm identifies a high-performing policy using substantially less traffic than the worst-case analysis alone would imply.

Recall that the total budget $N$ is split into $\Nsw$ samples for the tensor rounds and $\Nps$ samples for the vector rounds. Let $\Lsw$ denote the number of tensor rounds before switching. After $\Lsw$ rounds of median pruning, the number of remaining arms is $\Kps := |\cA^{(\Lsw+1)}| = (d/2^{\Lsw})^m$, and the number of subsequent sequential-halving rounds is $\Lps := \lceil \log_2 \Kps \rceil = m(\log_2 d - \Lsw)$.

\subsubsection{Gap-Independent (Worst-Case) Performance}
\begin{theorem}[Two-stage gap-independent bound]\label{thm:TwoStage-gapind}
Consider the two-stage procedure in Algorithms~\ref{alg:TwoStage_TensorVector}
and~\ref{alg:VectorSH} with a switch after $\Lsw$ tensor rounds. 
For each Stage-I round $\ell \in [L_I]$, define the roundwise sampling rate
\(
p_\ell := N_\ell/|A^{(\ell)}|, 
\)
and
$p_{\min}:=\min_{\ell\in[L_I]} p_\ell.$
Let
\(
N_I := \sum_{\ell=1}^{L_I} N_\ell
\) be the total Stage-I budget, and let $\Nps:=N-N_I$ be the Stage-II budget. Suppose the true effect tensor satisfies Assumption~\ref{assump:lowrank}, the sampling process satisfies Assumption~\ref{assump:sampling-noise}, the completion algorithm satisfies Assumption~\ref{assump:completion-tail}, and the restricted sub-tensors satisfy the analogous round-wise properties of Assumption~\ref{assump:completion-tail} (formalized in Assumption \ref{assump:concentration-appendix} in Appendix \ref{sec:proof}).  In particular, there exist absolute constants $c,C_m>0$ and a deterministic constant $\lambda_{\min}>0$ such that for every Stage-I round $\ell\in[\Lsw]$,
\(
\lambda_{\min}^{(\ell)}:=\min_{k\in[m]}\sigma_r\!\big(\mathcal M_k(\cT^{(\ell)})\big)\ge\lambda_{\min},
\)
and the conditional restricted sup-norm tail bound in
Assumption~\ref{assump:concentration-appendix} holds on $\cA^{(\ell)}$ with this $\lambda_{\min}$ and $p_\ell$. 
Then the simple regret satisfies:
\begin{align}
\mathrm{SReg}_N = O\!\left(\sigma\,\left[\underbrace{\sqrt{\tfrac{\df\,\Lsw^2\,\log d}{p_{\min}}\,\log \Lsw} \; \frac{\infnorm{\cT^\star}}{\lambda_{\min}}}_{\TStage} \;+\; \underbrace{\sqrt{\tfrac{\Kps\,\Lps}{\Nps}\,\log \Lps}}_{\VS}\right]\right),
\label{eq:twostage-sreg-equal-fixed}
\end{align}
where $\df = m(dr-r^2)+r^m$ is the degrees of freedom of the low-rank tensor model. 
\end{theorem}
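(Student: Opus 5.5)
The plan is to decompose the simple regret into a Stage-I loss and a Stage-II loss,
\[
\mu^\star-\mu_{J_N}=\Big(\mu^\star-\max_{\bidx\in\cA^{(\Lsw+1)}}\cT^\star_{\bidx}\Big)+\Big(\max_{\bidx\in\cA^{(\Lsw+1)}}\cT^\star_{\bidx}-\mu_{J_N}\Big),
\]
and bound the expectation of each term separately. The second term is handled conditionally on $\mathcal F_{\Lsw+1}$. Given the surviving arm set of size $\Kps$, Stage II is plain sequential halving with budget $\Nps$. Its samples are fresh and independent of Stage I by Assumption~\ref{assump:sampling-noise}. We then invoke the standard gap-independent simple-regret bound for sequential halving with $\sigma^2$-sub-Gaussian rewards. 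In the form derived by the usual per-round argument, the best surviving arm loses at most $O(\sigma\sqrt{\Kps\Lps\log\Lps/\Nps})$ in value per halving round, summed geometrically. Integrating over $\mathcal F_{\Lsw+1}$ gives the \VS\ term.

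For the first term I would show that each tensor round loses little of the running optimum. Let $M_\ell:=\max_{\bidx\in\cA^{(\ell)}}\cT^\star_{\bidx}$ and $\varepsilon_\ell:=\infnorm{\widehat\cT^{(\ell)}-\cT^{\star}|_{\cA^{(\ell)}}}$. The key deterministic claim is $M_\ell-M_{\ell+1}\le 2\varepsilon_\ell$. The naive argument fails: the optimizer $\bidx^\ast=(i_1^\ast,\dots,i_m^\ast)$ of round $\ell$ may lose one coordinate in one mode while the other modes are pruned simultaneously, which can destroy the partner combinations. I would instead argue as follows. In each mode $k$, a retained level $i$ with $\hat\mu^{(\ell)}_{k,i}\ge\hat\mu^{(\ell)}_{k,i_k^\ast}$ exists (either $i_k^\ast$ itself or one beating it), so its true row-maximum is at least $M_\ell-2\varepsilon_\ell$. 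Converting this into a single surviving combination across all modes is the main obstacle. Row-maxima in different modes are attained at different cells, so simultaneous pruning can in principle lose more.

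My first attempt would consider the cell $\hat\bidx$ maximizing $\widehat\cT^{(\ell)}$ over $\cA^{(\ell)}$. Each coordinate $\hat\bidx_k$ has the largest FLMC in its mode, namely $\max\widehat\cT^{(\ell)}$. So it is never pruned, and $\hat\bidx\in\cA^{(\ell+1)}$. Hence
\[
M_{\ell+1}\ge\cT^\star_{\hat\bidx}\ge\widehat\cT^{(\ell)}_{\hat\bidx}-\varepsilon_\ell\ge\widehat\cT^{(\ell)}_{\bidx^\ast}-\varepsilon_\ell\ge M_\ell-2\varepsilon_\ell.
\]
This sidesteps the cross-mode issue, provided ties are broken so the argmax cell is kept. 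Telescoping over rounds gives $\mu^\star-M_{\Lsw+1}\le 2\sum_{\ell\le\Lsw}\varepsilon_\ell\le 2\Lsw\max_\ell\varepsilon_\ell$.

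It then remains to bound $\mathbb E\max_{\ell\le\Lsw}\varepsilon_\ell$. Conditional on $\mathcal F_\ell$, the round-wise version of Assumption~\ref{assump:completion-tail} (Assumption~\ref{assump:concentration-appendix}) applies on $\cA^{(\ell)}$ with $\lambda_{\min}$ and $p_\ell\ge p_{\min}$. Note that $\df$ for the sub-tensor is at most that of the full tensor, by Lemma~\ref{lem:low-rank} and $d_\ell\le d$. This yields the tail $\bP(\varepsilon_\ell\ge t\mid\mathcal F_\ell)\le C_m\exp\{-ct^2/s^2\}$ with
\[
s^2=\sigma^2\df\log d\,\infnorm{\cT^\star}^2/(\lambda_{\min}^2p_{\min}).
\]
Taking a union bound over the $\Lsw$ rounds and integrating the tail gives $\mathbb E\max_\ell\varepsilon_\ell\lesssim s\sqrt{\log\Lsw}$. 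Multiplying by $2\Lsw$ gives the \TStage\ term $\sigma\sqrt{\df\Lsw^2\log d\log\Lsw/p_{\min}}\,\infnorm{\cT^\star}/\lambda_{\min}$. Adding the two expectations completes the bound.
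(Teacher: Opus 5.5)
Your argument is correct, and it uses the same three ingredients as the paper. First, the estimated argmax cell survives every mode's prune under the tie-breaking convention, so $M_{\ell+1}\ge M_\ell-2\varepsilon_\ell$; this is exactly the paper's Lemma~\ref{lem:stage1-retain}. Second, the round-wise conditional tail of Assumption~\ref{assump:concentration-appendix} is combined over rounds by the tower property and a union bound. Third, an off-the-shelf Sequential Halving bound is applied on the surviving set.

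The difference lies in how these pieces are assembled. The paper works with tail probabilities. For each $\epsilon$ it fixes $t=\epsilon/(4(\Lsw+1))$ and splits on the event $E_{\mathrm I}(t)$ that all tensor rounds are $t$-accurate. It then applies the Sequential Halving tail at level $\epsilon/2$ conditionally on $E_{\mathrm I}(t)$. Finally it integrates the two-term tail $\min\{1,A_{\mathrm I}e^{-B_{\mathrm I}\epsilon^2}\}+\min\{1,A_{\mathrm{II}}e^{-B_{\mathrm{II}}\epsilon^2}\}$.

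You instead split the regret pathwise. The Stage-I loss is bounded deterministically by $2\sum_{\ell\le\Lsw}\varepsilon_\ell$, the Stage-II loss is handled separately, and you take expectations of each. Your assembly is more elementary and needs no coupling of $t$ to $\epsilon$. It is also slightly sharper than you used. By linearity, $\mathbb E\sum_\ell\varepsilon_\ell=\sum_\ell\mathbb E\bigl[\mathbb E(\varepsilon_\ell\mid\mathcal F_\ell)\bigr]\lesssim \Lsw s$, so the Stage-I term holds without the $\sqrt{\log\Lsw}$ factor; a per-round expectation argument for Sequential Halving similarly removes $\log\Lps$.

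The paper's route, in exchange, gives an explicit high-probability tail bound on $\mu_\star-\mu_{J_N}$. That tail-bound form matches the structure of its gap-dependent analysis in Theorem~\ref{thm:gap-dep}.

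One wording fix in your Sequential Halving sketch: the round-$j$ loss scales like $\sigma\sqrt{|\mathcal S_j|\Lps/\Nps}$ up to logarithms. It therefore shrinks by a factor $\sqrt{2}$ per round, which is what makes the sum geometric and dominated by the first round. As you phrased it, with the same loss in every round, summing would cost an extra factor of $\Lps$.
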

The proof is provided in Section~\ref{sec:proof of theorem gapind}.

The bound in~\eqref{eq:twostage-sreg-equal-fixed} decomposes into a contribution from the \TStage\ and a contribution from the \VS. The \TStage\ term reflects how accurately we can recover the low-rank tensor from partial observations; it scales with $\sqrt{\df}$ and is controlled by the sampling rate and the number of tensor rounds. The \VS\ term matches the usual behavior of simple regret over $\Kps$ arms using sequential halving.

To understand the practical value of this result, we compare it against a standard unstructured approach. If a manager were to treat every intervention as a standalone option (a pure vector baseline), the simple regret scales as $\Theta\big(\sqrt{d^m/N}\big)$ \parencite{lattimore2020bandit}, and the required traffic budget to guarantee optimal identification must vastly exceed the total number of combinations ($N \gg d^m$). In contrast, our Two-stage algorithm exploits the low-rank tensor structure, reducing the regret scaling to $\Theta\big(\sqrt{\df/N}\big)$. Because the effective degrees of freedom are exponentially smaller than the total space ($\df \ll d^m$), our algorithm only requires a budget that exceeds the square root of the design space ($N \gg d^{m/2}$). Managerially, this translates to massive savings in experimental traffic while maintaining guarantees for identifying the best policy.

It is important to note that the unstructured/vector ($\Delta_{\min}/\sigma \gtrsim \sqrt{(d^m\log d)/N}$)  and structured/tensor ($\lambda_{\min}/\sigma \gg \sqrt{(d^m)^{3/2}/N}$) guarantees are driven
by different notions of signal. The vector baseline is governed by the minimum reward gap
$\Delta_{\min}$ between the best and second-best interventions, while the tensor stage is governed by a structural signal parameter such as $\lambda_{\min}$ (the smallest nonzero singular value across matricizations). Consequently, neither condition uniformly dominates the other across all instances: when many arms have tiny gaps but the overall response surface exhibits strong latent structure, tensor completion can still be reliable even though $\Delta_{\min}$ is small. A practical implication is that per-user outcomes can be very noisy relative to the effect of a single intervention combination, as long as the aggregate experimental traffic is large enough to reveal the shared low-dimensional structure. In managerial terms, the algorithm does not need each treatment cell to have high signal-to-noise ratio; it only needs the platform to generate enough randomized exposure overall for the latent behavioral drivers to become identifiable.

Because the proposed two-stage algorithm adaptively shrinks the design space, the Stage-I tensor-completion guarantees must hold not only for the original tensor but also for the restricted sub-tensors encountered up to the switch round \(\Lsw\). Lemma~\ref{lem:low-rank} shows that restricting to an active sub-tensor does not increase multilinear rank, so the latent structural complexity does not worsen as the design space contracts. To extend recovery guarantees uniformly over Stage~I, Theorem~\ref{thm:TwoStage-gapind} further imposes round-wise regularity conditions on the active sub-tensors. In particular, we assume
\(
\max_{\ell\le \Lsw}\mu_0^{(\ell)} = O(1), \min_{\ell\le \Lsw}\lambda_{\min}^{(\ell)} > 0.
\)
The quantity \(\mu_0^{(\ell)}\) is the incoherence parameter of the active sub-tensor at round \(\ell\); it measures whether the latent factor directions remain sufficiently diffuse across the surviving levels in each mode. The quantity \(\lambda_{\min}^{(\ell)}\) is the smallest nonzero singular value across the matricizations of the active sub-tensor and captures the strength of the weakest retained latent direction. Under the Tucker model,
\(
\cT^{(\ell)}
=
\cG\times_1(S_1^{(\ell)}U_1)\times\cdots\times_m(S_m^{(\ell)}U_m),
\)
so \(\lambda_{\min}^{(\ell)}\) is controlled by the conditioning of the restricted factor matrices \(S_k^{(\ell)}U_k\). Taken together, these conditions require that Stage~I pruning remove weak levels while preserving a sufficiently diffuse and well-conditioned latent structure. This is plausible in our setting because the underlying effect tensor reflects systematic interactions across multiple factor levels, and the algorithm switches to the vector stage before the active set becomes too small for stable tensor recovery.

Finally, the predetermined switch point $\Lsw$ acts as a managerial safeguard against model misspecification. Rather than aggressively pruning the design space until the dimensions collapse entirely to the underlying rank $r$—which risks accidentally eliminating the true optimal combination if the low-rank assumption is only an approximation—the manager has the conservative option to halt the tensor phase early. By switching to the assumption-free vector phase for the final selection, the algorithm balances the rapid search efficiency of tensor structures with the robust, model-agnostic guarantees of traditional bandit selection.

\subsubsection{Gap-Dependent (Instance-Specific) Performance}
In practice, some factor levels are clearly inferior. Gap-dependent (instance-specific) bounds characterize how the error probability scales with the difficulty of a particular problem instance, rather than in the worst case. When all arms are nearly indistinguishable (very small gaps), the problem is intrinsically hard. In many realistic digital experimentation problems, however, some interventions are clearly better than others and the gaps are not tiny; in such cases, a good algorithm should exploit this structure and achieve substantially smaller error.

We next introduce instance-specific quantities that describe how many factor levels are competitive
and how strongly they are separated. For each mode $k\in[m]$ and level $i\in[d]$, define the marginal contribution
\(
\mu_{k,i}:=\max_{\bj\in[d]^{m-1}} \cT^\star_{(i,\bj)},
\)
the best achievable performance when level $i$ is used in factor $k$ and all other factors are
optimized. Let $\mu_{k,(1)}\ge\cdots\ge \mu_{k,(d)}$ be the order statistics of
$\{\mu_{k,i}\}_{i=1}^d$ and define mode-wise gaps
\(
\Delta_{k,t}:=\mu_{k,(1)}-\mu_{k,(t)}, t=1,\ldots,d.
\)
We aggregate across modes via the row-wise gap sequence
\(
\Delta_t:=\min_{k\in[m]}\Delta_{k,t},
\)
so that $\Delta_t$ measures separation in the ``hardest'' mode.

For a tolerance $\eps>0$, define the number of $\eps$-good levels in each mode,
\(
g_k(\eps):=\bigl|\{i\in[d]:\ \mu_{k,i}\ge \mu_{k,(1)}-\eps\}\bigr|,
g_{\max}(\eps):=\max_{k\in[m]} g_k(\eps).
\)
Finally, define the mode-wise pivot rounds
\(
\ell_k'(\eps):=\max\bigl\{1\le \ell\le \log_2(d)-2:\ \Delta_{k,d_{\ell+2}}>\eps\bigr\},
\ell'(\eps):=\min_{k\in[m]}\ell_k'(\eps).
\)
Intuitively, $\ell'(\eps)$ is the last Stage-I round at which every mode still has an
$\eps$-margin separating its top quarter $d_{\ell+2}=d_\ell/4$ rows from the rest; this ``top-quarter
cushion'' is what makes elimination robust to estimation error under median pruning. Define also the smallest active mode size during Stage~I:
\(
d_{\min}\ :=\ \min_{1\le \ell\le \Lsw} d_\ell \ (=d_{\Lsw}\ \text{under exact halving}).
\)

\begin{theorem}[Gap-dependent guarantee]
\label{thm:gap-dep}
Assume the assumptions of Theorem \ref{thm:TwoStage-gapind} hold, and the Stage-I switch round satisfies
\(
1\ \le\ \Lsw\ \le\ \min\bigl\{\ell'(\eps),\ \lfloor \log_2(d/r)\rfloor\bigr\}.
\)
Define the Stage-I (row-gap) hardness
\(
H_{2,\mathrm{I}}\ :=\ \max_{2\le t\le d}\ t\Delta_t^{-2},
\)
and the row-based Stage-II hardness surrogates
\[
M_{\mathrm{row}}(\eps)
\ :=\
\max_{t\ \ge\ \left\lceil\left(g_{\max }(\eps / 2)+1\right)^{1 / m}\right\rceil}\ 
\frac{t^m}{\Delta_t^{\,2}},
\qquad
H_{\mathrm{row}}(\eps)
\ :=\
\frac{1}{g_{\max}(\eps/2)}\,M_{\mathrm{row}}(\eps).
\] 
Then there exist universal constants $K_0,C_0,c_0>0$ such that if
\[
p_{\min}
\ \ge\
K_0\,\Big(\frac{\sigma}{\lambda_{\min}}\Big)^2\,
\bigl(\log d\bigr)\,
\infnorm{\cT^\star}\,
H_{2,\mathrm{I}}\,
\Bigl(mr+\frac{r^m}{d_{\min}}\Bigr)
\qquad\text{and}\qquad
\Nps\ \ge\ C_0\,M_{\mathrm{row}}(\eps),
\]
the output $J_N$ of the two-stage algorithm satisfies the gap-dependent tail bound
\begin{align*}
\Pr\!\left(\mu_\star-\mu_{J_N}>\eps\right)
\ \le\ 
C_m\,\Lsw\,
\exp\!\Bigg(
-\,\widetilde{\Theta}\!\Big(
\frac{p_{\min}\,\lambda_{\min}^{\,2}}
{\sigma^2\,\infnorm{\cT^\star}\,H_{2,\mathrm{I}}}
\Big)
\Bigg)
\;+\;
\exp\!\Bigg(
-\,\widetilde{\Theta}\!\Big(
\frac{\Nps}{H_{\mathrm{row}}(\eps)}
\Big)
\Bigg),
\end{align*}
where $\mu_\star=\max_{\bidx\in[d]^m}\cT^\star_{\bidx}$ is the optimal mean reward and
$\mu_{J_N}=\cT^\star_{J_N}$ is the (true) mean reward of the selected arm.
The notation $\widetilde{\Theta}(\cdot)$ hides only absolute constants and logarithmic factors in $d$.
\end{theorem}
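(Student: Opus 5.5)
The plan is to split the failure event $\{\mu_\star-\mu_{J_N}>\eps\}$ into a Stage-I failure and a Stage-II failure. Let $\mathcal E_\ell$ be the event that at the end of round $\ell$ every mode $k$ retains a level whose marginal contribution (over the current active set) is within $\eps/2$ of the current best, or more simply retains the coordinate $\bidx^\star_k$ of an $\eps/2$-optimal arm lying in $\cA^{(\ell)}$. On $\bigcap_{\ell\le\Lsw}\mathcal E_\ell$, the surviving Cartesian set $\cA^{(\Lsw+1)}$ contains an arm of value at least $\mu_\star-\eps/2$. It then suffices to show that sequential halving on $\Kps$ arms returns an arm within $\eps/2$ of the best surviving arm. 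A union bound gives
\[
\Pr(\mu_\star-\mu_{J_N}>\eps)\le\sum_{\ell=1}^{\Lsw}\Pr(\mathcal E_\ell^c\mid \mathcal E_1,\dots,\mathcal E_{\ell-1})+\Pr(\text{SH fails}\mid\textstyle\bigcap_\ell \mathcal E_\ell).
\]
The factor $\Lsw$ in the first term of the bound comes from this union.

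\textbf{Stage I.} Fix round $\ell$ and condition on $\mathcal F_\ell$. By Lemma~\ref{lem:low-rank} and the round-wise tail bound (Assumption~\ref{assump:concentration-appendix}), $\infnorm{\widehat\cT^{(\ell)}-\cT^{(\ell)}}\le \eta$ except with probability $C_m\exp(-c\eta^2\lambda_{\min}^2p_\ell/(\sigma^2\df_\ell\log d\,\infnorm{\cT^\star}^2))$, where $\df_\ell\lesssim m d_\ell r+r^m$. On this event every estimated FLMC satisfies $|\hat\mu^{(\ell)}_{k,i}-\mu^{(\ell)}_{k,i}|\le\eta$, since a max is $1$-Lipschitz in sup norm. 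Median pruning keeps the top $d_{\ell+1}$ levels. A good level can therefore be dropped only if at least $d_\ell/2$ levels have estimated score above it. That would require $d_\ell/2$ true FLMCs within $2\eta$ of it, so it suffices that $2\eta<\Delta_{k,d_{\ell+2}}$. The top-quarter cushion guaranteed by $\Lsw\le\ell'(\eps)$ supplies this. I choose $\eta\asymp\Delta_{d_{\ell+2}}$ and write $d_\ell\Delta_{d_{\ell+2}}^{-2}\lesssim H_{2,\mathrm I}$ by the definition $H_{2,\mathrm I}=\max_t t\Delta_t^{-2}$. Then $\eta^2/\df_\ell\gtrsim 1/\bigl(H_{2,\mathrm I}(mr+r^m/d_{\min})\bigr)$, which yields the exponent $p_{\min}\lambda_{\min}^2/(\sigma^2\infnorm{\cT^\star}H_{2,\mathrm I})$ up to $\widetilde\Theta$. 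The lower bound on $p_{\min}$ makes this exponent at least a large constant, so the tail bound applies in the stated regime.

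\textbf{Main obstacle.} The subtle point is that FLMCs in round $\ell$ are maxima over the shrunk set $\cA^{(\ell)}$, not the population $\mu_{k,i}$. The gaps $\Delta_{k,t}$ are defined on the full tensor, and pruning in one mode can lower the FLMCs of another (the ``shifting maxima'' phenomenon). I would handle this by tracking a single $\eps/2$-optimal arm $\bidx^\star$. As long as all its coordinates survive, $\mu^{(\ell)}_{k,\bidx^\star_k}\ge\mu_\star-\eps/2$ and FLMCs can only decrease for other levels. It therefore suffices to compare against the full-tensor order statistics, which upper bound the competitors; the ranking argument only needs that the top-quarter competitors are fewer than $d_\ell/2$. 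I would check this monotonicity carefully, and the $\eps/2$ tolerance (hence $g_{\max}(\eps/2)$) enters here.

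\textbf{Stage II.} Condition on $\bigcap_\ell\mathcal E_\ell$ and apply the standard sequential-halving analysis of Karnin et al.\ to $\Kps$ arms with tolerance $\eps/2$. The probability that SH outputs an arm more than $\eps/2$ below the best surviving arm is at most $\exp(-\Nps/(C\,H\log\Kps))$, with $H=\max_i i\,\Delta_{(i)}^{-2}$ counting only arms outside the $\eps/2$-good set. I then bound the arm-level hardness by the row-based surrogates. The number of $\eps/2$-good arms is at least $g_{\max}(\eps/2)$, so arms of rank $\ge g_{\max}+1$ matter. An arm of rank $j$ has gap at least $\Delta_t$ with $t\approx j^{1/m}$, because if $t^m$ arms lie above it then some mode has $t$ levels above. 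This gives $\max_j j\Delta_{(j)}^{-2}\le M_{\mathrm{row}}(\eps)$. Dividing out the guaranteed count $g_{\max}(\eps/2)$ of good arms, via the $\eps$-good variant of SH, yields $H_{\mathrm{row}}(\eps)$. The condition $\Nps\ge C_0M_{\mathrm{row}}$ ensures each round has at least one sample per arm. Combining the two terms gives the stated bound.
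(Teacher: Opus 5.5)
There is a genuine gap, and it sits in the invariant you carry from Stage I into Stage II. The rest of your architecture is the paper's: a union over Stage-I rounds of one restricted sup-norm event at scale $\eta\asymp\Delta_{d_{\ell+2}}$, transferred to all FLMCs by the $1$-Lipschitz fiber max; the top-quarter cushion from $\Lsw\le\ell'(\eps)$; the bookkeeping $d_\ell\Delta_{d_{\ell+2}}^{-2}\le 4H_{2,\mathrm I}$ and $\df_\ell\le mrd_\ell+r^m$; and an $\eps$-good Sequential Halving bound whose hardness is compared with row quantities through the product counting bound.

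Your events $\mathcal E_\ell$ only guarantee that the coordinates of \emph{one} $\eps/2$-optimal arm survive. Stage II, however, uses that at least $g_{\max}(\eps/2)$ $\eps/2$-good arms survive. That count fixes both the lower limit of the max in $M_{\mathrm{row}}(\eps)$ and the divisor in $H_{\mathrm{row}}(\eps)$. The event $\bigcap_\ell\mathcal E_\ell$ does not imply it, since it is consistent with every other good level having been pruned.

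The same weakness causes two further mismatches. First, if the tracked arm is not the global maximizer, survivor gaps are measured from $\tilde\mu_\star<\mu_\star$. They are then not the gaps $\mu_\star-\mu_x$ that your counting argument compares with $\Delta_t$. Second, the $\eps/2+\eps/2$ split means you invoke the $\eps$-good SH bound at tolerance $\eps/2$. Its divided complexity divides by the number of $\eps/4$-good survivors, not by $g_{\max}(\eps/2)$, so you do not land on $H_{\mathrm{row}}(\eps)$.

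The missing idea is a closure property that lets you track \emph{all} globally $\eps/2$-good rows at once, which is the paper's event $G_\ell$. Suppose $i$ is $\eps/2$-good in mode $k$ and $x^{(i)}$ attains $\mu_{k,i}$. Then $\mu_{h,x^{(i)}_h}\ge\cT^\star_{x^{(i)}}=\mu_{k,i}\ge\mu_\star-\eps/2$ for every mode $h$. So if all good rows are active at round $\ell$, then $x^{(i)}\in\cA^{(\ell)}$, and the restricted score equals the global one: $\mu^{(\ell)}_{k,i}=\mu_{k,i}$.

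Your ranking argument then protects every good row simultaneously on the same sup-norm event, with no extra union bound. A competitor outranking $i$ must have global score at least $\mu_\star-\eps/2-2\eta$. You therefore need $2\eta<\Delta_{k,d_{\ell+2}}-\eps/2$; your stated condition $2\eta<\Delta_{k,d_{\ell+2}}$ is only right for the global maximizer. The choice $\eta=\Delta_{d_{\ell+2}}/8$ satisfies this because $\Delta_{d_{\ell+2}}>\eps$.

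On the resulting event, three things follow:
\begin{itemize}
\item the global maximizer survives;
\item the injective map $i\mapsto x^{(i)}$ yields at least $g_{\max}(\eps/2)$ surviving $\eps/2$-good arms;
\item you can apply the $\eps$-good SH bound once, at tolerance $\eps$ against $\mu_\star$, which gives exactly $M_{\mathrm{row}}(\eps)$ and $H_{\mathrm{row}}(\eps)$.
\end{itemize}
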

The proof is provided in Section~\ref{sec:proof of gap-dep thm}.

Theorem~\ref{thm:gap-dep} provides a high-probability identification guarantee and decomposes the
failure probability into two terms corresponding to the two stages. The Stage-I term controls the
risk of eliminating globally $\eps/2$-good factor levels during tensor screening; the Stage-II term
controls the risk that Sequential Halving fails to select an $\eps$-optimal intervention from the
surviving set.

The key instance-specific quantity in Stage~I is
\(
H_{2,\mathrm I}=\max_{t\ge2} t\,\Delta_t^{-2},
\)
a row-wise analogue of the classical vector complexity $H_2=\max_{i\ge2} i/\delta_i^2$, but defined
over the $d$ ordered {row} gaps rather than the full $K=d^m$ arm gaps. Managerially, $H_{2,\mathrm I}$
captures how quickly the algorithm can identify a small set of promising levels within each factor:
when most levels are clearly dominated (large $\Delta_t$ for moderate $t$), Stage~I can prune
aggressively with low risk. Stage~II is governed by the row-based surrogates $M_{\mathrm{row}}(\eps)$ and $H_{\mathrm{row}}(\eps)$, which translate row-wise competitiveness into an effective number of near-optimal {arms} of order $t^m$. When only a few levels per factor remain competitive, the number of plausible near-optimal combinations collapses from $d^m$ to a much smaller subset, and Sequential Halving benefits accordingly.

\section{Semi-Synthetic Evaluation on E-Commerce Bundling Data}
\label{sec:real_data}

This section provides a data-driven evaluation of the proposed method in a large-scale product-bundling problem in online retail. Product bundling refers to the practice of offering two or more distinct items as a single package, often at a promotional price. A familiar example is a platform offering pasta, pasta sauce, and grated cheese together as an ``Italian dinner kit,'' or a fast-food chain selling a burger, fries, and a drink as a meal combo. In each case, the seller does not merely price products separately, but designs a bundle intended to increase demand, improve convenience, or raise basket size.

Product bundling has a long history in economics and marketing. A large classical literature studies when bundling is profitable, how firms should choose between pure components, pure bundles, and mixed bundling, and how these choices depend on demand heterogeneity, complementarity or substitutability across products, and welfare considerations \parencites{adams1976commodity,schmalensee1982commodity,venkatesh2003optimal,derdenger2013dynamic}. Much of this literature, however, is developed for a monopolist or an integrated firm that produces the component goods and then decides how to package and price them. By contrast, modern e-commerce often presents a different environment: a downstream retailer or online platform may bundle products supplied by different firms, and may do so for promotional, recommendation, or basket-expansion purposes rather than as a pure manufacturing or pricing decision.

This distinction is important because, despite the breadth of the bundling literature, there is still limited guidance on how a retailer should search over a massive set of feasible promotional bundles and identify high-performing ones under realistic traffic constraints. In online retail, the marginal cost of creating a candidate bundle is close to zero, but the design space is enormous. For example, selecting only two items from a catalog of $100{,}000$ products already yields $\binom{100{,}000}{2} \approx 5\times 10^9$ possible pairs, and the number grows even faster for larger bundles. Identifying an attractive bundle in such a space is therefore a fundamentally combinatorial experimentation problem. Our proposed design addresses this challenge by providing a practically implementable way for retailers and platforms to use limited experimental traffic to screen a vast bundle space and efficiently identify promising combinations.

\subsection{Data Description and Tensor Construction}
We utilize a large-scale dataset from Alibaba’s Taobao platform, China's largest e-commerce marketplace \footnote{\url{https://tianchi.aliyun.com/dataset/649}}, containing approximately 100 million interaction records across 4.16 million items and 9,439 unique categories. We use these data to build an offline proxy environment rather than a live randomized experiment. To construct a representative ground-truth tensor $\mathcal{X}$, we first identify the top 100 most popular items based on total user interactions and isolate the three categories with the highest density of these products. These categories contain 21, 10, and 8 of the top-tier items, respectively, collectively accounting for 39\% of the platform's most popular products. We model the interactions among these items as a 3-mode tensor of dimension $21 \times 10 \times 8$, where each entry $\mathcal{X}_{i,j,k}$ represents the popularity of a specific three-item bundle, quantified by the number of unique user interactions for that combination. To visualize the interaction structure in the primary category, Figure~\ref{fig:heatmap} reports a heatmap of the mode-1 matricization $\mathcal M_1(\cX)$ (the unfolding of the original tensor along mode 1), which highlights clustering patterns among items in the first category.

\begin{figure}[h!]
\centering
\includegraphics[width=.8\textwidth]{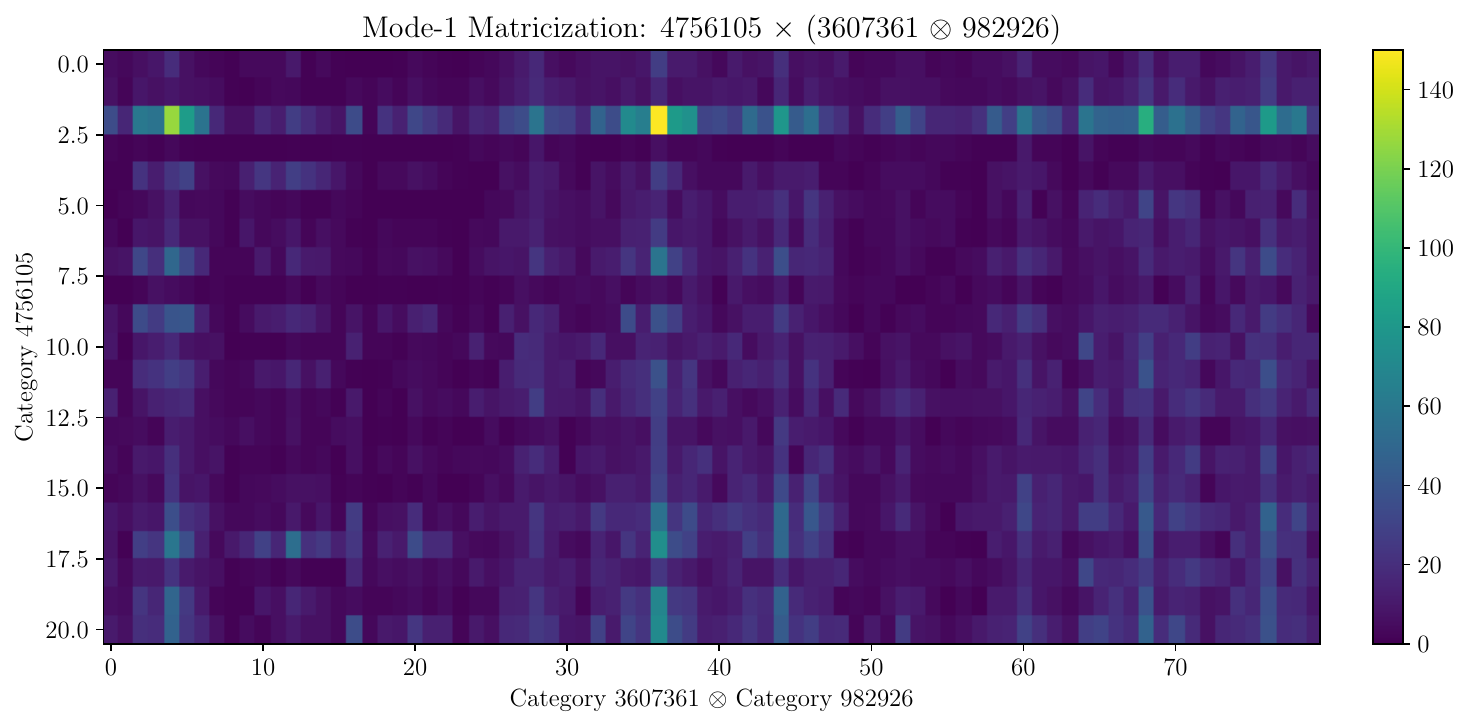}
\caption{Heatmap of the mode-1 matricization $\mathcal M_1(\cX)$ (unfolding of the original tensor along mode 1).}
\label{fig:heatmap}
\end{figure}

The multilinear rank $(r_1, r_2, r_3)$ of the tensor $\mathcal{X} \in \mathbb{R}^{21 \times 10 \times 8}$ is estimated via mode-$n$ unfolding. For each mode $n \in \{1, 2, 3\}$, the tensor is flattened into a matrix $\mathbf{X}_{(n)}$ and decomposed via Singular Value Decomposition (SVD): $\mathbf{X}_{(n)} = \mathbf{U}^{(n)} \mathbf{\Sigma}^{(n)} (\mathbf{V}^{(n)})^\top$. To determine the truncated rank $r_n$, we employ the Cumulative Percentage of Variance (CPV) criterion. For a target energy threshold $\eta = 0.95$, we select the smallest $r_n$ such that: $\sum_{i=1}^{r_n} (\sigma_i^{(n)})^2 / \sum_{j=1}^{I_n} (\sigma_j^{(n)})^2 \geq \eta$. 
As shown in Figure \ref{fig:singular_values}, the singular values exhibit a sharp decay. Using $\eta = 0.95$, we identify the multilinear rank as $(2, 2, 2)$. We normalize the ground-truth tensor to the range $[0, 1]$, resulting in a minimum singular value $\lambda_{\min} = 0.7309$.

\begin{figure}[h!]
\centering
\includegraphics[width=\textwidth]{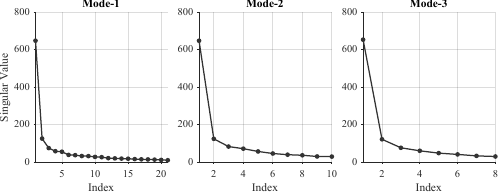}
\caption{Plot of the singular values for each mode unfolding. The rapid decay justifies the low-rank assumption, with the first two components capturing over 95\% of the total energy.}
\label{fig:singular_values}
\end{figure}

\subsection{Experimental Setup and Results}
We simulate a decision-making environment where a retail manager conducts A/B tests to evaluate the profitability of various bundling configurations. In this setting, we treat the constructed tensor as the underlying ground truth of consumer preferences. The estimation noise, $\xi \sim \mathcal{N}(0, \sigma^2)$, encapsulates the uncertainties of the digital marketplace. Specifically, it accounts for market volatility, such as fluctuations in consumer demand driven by seasonal trends or external shocks, and consumer heterogeneity, where the variance in response rates across diverse user segments may not be fully captured within a finite-time A/B test. Furthermore, it reflects observational biases resulting from tracking limitations or the discrepancy between raw clickstream data and actual conversion. 

Because the Taobao data are used to construct an offline proxy environment rather than a live randomized field experiment, we view this exercise as a semi-synthetic evaluation. The real data determine the combinatorial bundle structure and cross-category interaction patterns, while the injected Gaussian noise emulates the finite-sample uncertainty inherent in platform experimentation. This design allows us to evaluate whether the proposed method can exploit realistic interaction structure to improve final policy selection under controlled budget and noise regimes.

We compare three policies. \texttt{Two-stage} is the proposed procedure in Algorithm~\ref{alg:TwoStage_TensorVector}. \texttt{One-shot} uses the entire budget for a single tensor-completion step and then recommends the empirical maximizer of the completed tensor. \texttt{Vector SH} vectorizes the tensor and runs Sequential Halving by treating each bundle as an independent arm. The tensor-completion routine used in both \texttt{Two-stage} and \texttt{One-shot} is Riemannian gradient descent; see Appendix~\ref{sec:appx_alg}.

\begin{figure}[!htp]
\centering
\includegraphics[width=.6\textwidth]{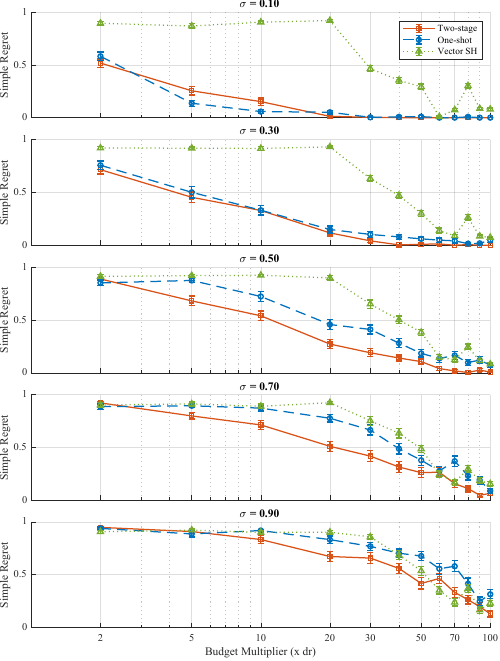}
\caption{Comparative performance across varying noise levels and budgets. The \texttt{Two-stage} algorithm consistently outperforms the \texttt{One-shot} approach, particularly in high-noise and low-budget regimes.}
\label{fig:performance_results}
\end{figure}

The experimental results are summarized over five noise levels, $\sigma \in \{0.1,0.3,0.5,0.7,0.9\}$, and eight budget levels ranging from $2$ to $100$ times the degrees of freedom, where $\df=122$. Varying $\sigma$ from $0.1$ to $0.9$ allows us to evaluate performance across environments ranging from relatively stable markets to highly uncertain product categories. For the \texttt{Two-stage} policy, the Stage-I budget fraction is set to
\(0.3, 0.5, 0.7, 0.8,\) and \(0.9\) at noise levels
\(\sigma=0.1, 0.3, 0.5, 0.7,\) and \(0.9\), respectively. In other words, as \(\sigma\) increases, we allocate a larger share of the total budget to Stage I. This choice reflects the fact that reliable elimination requires a stronger initial structural estimate when observations are more variable. Within Stage I, the budget is split equally across rounds for simplicity and ease of implementation. If an entry is observed multiple times across rounds, we average the corresponding outcomes. Because the design preserves the underlying user population and does not alter the data-generating mechanism across rounds, information collected earlier can be reused in later rounds. All reported results are averaged over 50 independent Monte Carlo trials. Figure~\ref{fig:performance_results} reports the resulting mean regret for the three policies.

A primary finding is the significant performance gap between the tensor-based methods (\texttt{Two-stage} and \texttt{One-shot}) and the \texttt{Vector SH} baseline in low-budget regimes. The total action space contains $21 \times 10 \times 8 = 1,680$ possible bundles (arms). For budgets below $20 \times \df$ (e.g., $T=244, 610, 1220$), the \texttt{Vector SH} method exhibits near-random performance with regret levels exceeding $0.9$. This failure is fundamentally due to the ``exploration overhead'' of sequential halving: SH requires a budget large enough to sample every individual arm multiple times through successive rounds of elimination. In these resource-constrained settings, the budget is insufficient to even visit each arm once, let alone differentiate between them amidst noise. In contrast, by leveraging the low-rank structure, our \texttt{Two-stage} method effectively ``shares'' information across related bundles, allowing it to identify high-performing regions of the action space even when the vast majority of combinations have never been sampled.

The \texttt{Two-stage} algorithm consistently outperforms the \texttt{One-shot} completion policy, particularly as the estimation noise $\sigma$ increases. While \texttt{One-shot} completion utilizes the low-rank assumption to estimate all entries, it is highly sensitive to the realization of noise in the single batch of samples. In high-noise environments (e.g., $\sigma=0.9$), the \texttt{One-shot} estimate often leads to a suboptimal ``greedy'' selection. The \texttt{Two-stage} approach mitigates this by using the first stage to perform a ``principled filtering'' of the combinatorial space, narrowing down the potential candidates to a small subset. The second stage then concentrates the remaining budget on this subset using a sequential halving strategy.

Interestingly, we also observe a phase transition in the performance of \texttt{Vector SH}. When the budget becomes very large, around $70$--$100\times \df$ (for example, $T=12{,}200$), \texttt{Vector SH} begins to approach the performance of the tensor-based methods in some high-noise settings. Intuitively, once the sample size is sufficiently large, the advantage of structural borrowing from the low-rank model becomes less pronounced, while any model misspecification in the tensor approximation becomes relatively more important. In practical retail experimentation, however, managers typically operate in the low-to-medium budget regime, where experimentation is costly and the feasible budget is far below the total number of possible bundles. It is precisely in this practically relevant region that the proposed \texttt{Two-stage} policy delivers the largest gains.

Our results suggest that for e-commerce managers, the ability to exploit cross-category correlations is more valuable than exhaustive testing. By utilizing our algorithm at the product level, firms can generate micro-level insights (e.g., which specific item pairings resonate) and macro-level category strategies without the prohibitive cost of testing the entire combinatorial catalog. This data-driven filtering allows for a more agile response to market trends where the experimental budget is a scarce resource. While our empirical validation focuses on product bundling, the results provide a general blueprint for any high-dimensional design problem. The performance gains we observe demonstrate that whenever 'local' treatment levels share 'global' latent characteristics, centralized tensor designs offer a significant reduction in out-of-sample regret compared to unstructured independent testing.

\subsection{Implementation in Practice}

Several design choices in our procedure can be tailored using practitioner knowledge.

First, the relevant noise level should be interpreted as the variability of the estimated treatment effect under randomized exposure. In the simulation above, noise is introduced exogenously. In practice, however, it reflects the precision with which the platform can measure the outcome of a candidate intervention. Our results show that the method performs well across a wide range of noise levels, but higher-noise environments generally call for a larger Stage-I budget. The reason is that tensor completion requires a sufficiently accurate initial estimate to support reliable elimination. From a managerial perspective, the more uncertain the market environment or the noisier the performance metric, the more budget should be allocated to the first stage to stabilize the structural estimate before refinement.

Second, the choice of switch round $L_I$ should reflect how long the practitioner regards the low-rank approximation as credible along the elimination path. In the bundling application, we consider a $21\times 10\times 8$ tensor and set $L_I=2$, so that Stage I performs two rounds of coordinate-wise halving before switching to Stage II. This choice reflects a practical trade-off. If the manager believes that the low-rank approximation is plausible only for the original full tensor, then a more conservative design would switch earlier, thereby limiting reliance on structural extrapolation and moving sooner to Stage II, where no low-rank assumption is required. By contrast, if the manager believes that the reduced tensors obtained after early elimination continue to preserve the same latent structure, then a larger $L_I$ may yield additional savings by allowing the tensor stage to eliminate more unpromising levels before the final refinement step.

For theoretical transparency, we analyze the procedure under a fixed switch round $L_I$, and in the empirical study we set $L_I=2$. In practice, however, the switch can be selected adaptively. A natural data-driven approach is to reserve a small fraction $\alpha$ of the sampled cells in each tensor round as a validation set, fit the tensor-completion model on the remaining $1-\alpha$ fraction, and monitor out-of-sample prediction accuracy on the holdout cells. Specifically, in round $\ell$, let $V_\ell$ denote the validation set and define the normalized validation error as $\mathrm{RMSE}_\ell := \bigl(|V_\ell|^{-1}\sum_{i\in V_\ell}(Y_i-\widehat \cT_i^{(\ell)})^2\bigr)^{1/2}/\widehat\sigma_\ell$. The procedure then switches from the tensor stage to the vector stage when this validation error ceases to decline materially. One robust rule is to switch if $\mathrm{RMSE}_\ell \geq (1-\tau)\mathrm{RMSE}_{\ell-1}$ for two consecutive rounds, where $\tau$ is a small tolerance parameter, for example between $1\%$ and $3\%$. Requiring persistence over two rounds helps avoid reacting to transitory noise in the validation metric. We view this adaptive switching rule as a practically useful extension of the fixed-$L_I$ design studied in the theory. Establishing formal guarantees for such data-driven switch rules is an important direction for future research.

More broadly, the design offers a flexible template rather than a rigid rule. The structural stage uses low-rank estimation to compress a large action space into a manageable candidate set, while the second stage provides a robust model-agnostic refinement step. This separation makes the method practically attractive: domain knowledge can be incorporated through the Stage-I allocation and switching rule, while the final recommendation remains protected by a second-stage procedure that does not rely on perfect model specification.

%%%%%%%%%%%%%%%%%%%%%%%%%%%%%%%%%%%%%%%%%%%%%%%%%
\clearpage
\renewcommand*{\bibfont}{\footnotesize}
\printbibliography
\newpage

\ECSwitch
% \ECDisclaimer
%%%%%%%%%%%%%%%%%%%%%%%%%%%%%%%%%%%%%%%%%%%%%%%%%%%%%%%%%%
%%% Main head for the e‑companion
\ECHead{Electronic Companion for Policy-Aware Design of Large-Scale Factorial Experiments}
This electronic companion contains two parts. Section~\ref{sec:appx_alg}
illustrates the algorithm through a worked example. Section~\ref{sec:proof}
contains the proofs of the main theoretical results, including the
gap-independent bound in Theorem~\ref{thm:TwoStage-gapind} and the
gap-dependent bound in Theorem~\ref{thm:gap-dep}.

\section{Algorithm Example}\label{sec:appx_alg}
\subsection{A Representative Tensor-Completion Guarantee via Riemannian Gradient Descent}

This appendix summarizes one concrete route, based on Riemannian gradient descent \parencites{vandereycken2013lowrank,wei2016guarantees,cai2022provable,wang2023implicit,cai2023generalized}, for obtaining the tensor-completion guarantee used in Assumption~\ref{assump:completion-tail}. The goal here is not to reproduce the full technical development of the tensor-completion literature, but rather to show that the uniform entrywise control assumed in the main text is consistent with existing computationally efficient methods under standard structural conditions.

Consider an order-$m$ tensor
\(
\cT^\star \in \mathbb{R}^{d \times d \times \cdots \times d}
\)
with multilinear rank
\(
\fb=(r,\dots,r).
\)
We observe a subset of noisy entries
\[
\cY=\cP_{\Omega}(\cT^\star+\cE),
\]
where \(\cP_{\Omega}\) is the sampling operator associated with the observation set
\(
\Omega \subset [d]\times[d]\times\cdots\times[d].
\)
We assume a Bernoulli sampling model in which each entry is included independently with probability \(p\), so that the expected sample size is \(N = p d^m\). The noise tensor \(\cE\) has independent mean-zero sub-Gaussian entries with variance proxy \(\sigma^2\).

A standard low-rank tensor-completion estimator solves the constrained least-squares problem
\[
\min_{\cX \in \mathbb{R}^{d \times \cdots \times d}}
\frac{1}{2p}\bigl\|\cP_{\Omega}(\cX)-\cY\bigr\|_F^2
\qquad
\text{subject to}
\qquad
\rank(\cX)=\fb.
\]
One computationally efficient approach is Riemannian gradient descent (RGM) over the fixed-rank Tucker manifold. The orthogonal projection onto this tangent space admits a closed-form expression, which is used in each Riemannian gradient step. Assume
\[
\cX=\cG \times_1 X_1 \times_2 \cdots \times_m X_m \in \mathbb{M}_{\fr},
\]
where \(\fr=(r_1,\dots,r_m)\), each factor matrix \(X_k\in\mathbb{R}^{d\times r_k}\) has orthonormal columns, and the core tensor
\(
\cG\in\mathbb{R}^{r_1\times\cdots\times r_m}
\)
has full multilinear rank. Let \(\mathbb{M}_{\fr}\) denote the manifold of tensors with multilinear rank \(\fr\). By \textcite{koch2010dynamical}, the tangent space of \(\mathbb{M}_{\fr}\) at \(\cX\) is
\[
T_{\cX}
=
\left\{
\cC \times_1 X_1 \times_2 \cdots \times_m X_m
+
\sum_{k=1}^m
\cG \times_k W_k \times_{j\ne k} X_j
\,\middle|\,
\cC\in\mathbb{R}^{r_1\times\cdots\times r_m},\;
W_k\in\mathbb{R}^{d\times r_k},\;
W_k^\top X_k=0,\; k=1,\dots,m
\right\}.
\]

Let the iterate \(\cX^t\) admit the Tucker decomposition
\[
\cX^t=\cG^t \times_1 X_1^t \times_2 X_2^t \cdots \times_m X_m^t,
\]
where \(X_k^t \in \mathbb{R}^{d\times r}\) has orthonormal columns for each mode \(k\). Then one RGM step takes the form
\[
\cX^{t+1}
=
\operatorname{HOSVD}_{\fb}\!\left(
\cX^t-\frac{1}{p}\,\cP_{T_{\cX^t}}\!\bigl(\cP_{\Omega}(\cX^t)-\cY\bigr)
\right),
\]
where \(T_{\cX^t}\) denotes the tangent space of the Tucker-rank manifold at \(\cX^t\), \(\cP_{T_{\cX^t}}\) is the orthogonal projection onto that tangent space, and \(\operatorname{HOSVD}_{\fb}(\cdot)\) denotes truncation to multilinear rank \(\fb\) (Algorithm \ref{alg:hosvd}).

For completeness, if
\(
\cX=\cG\times_1 X_1\times_2 \cdots \times_m X_m,
\)
then the tangent-space projection of a tensor \(\cZ\) can be written as
\[
\cP_{T_{\cX}}(\cZ)
=
\cZ \times_1 X_1X_1^\top \times_2 \cdots \times_m X_mX_m^\top
+
\sum_{k=1}^m
\cG \times_k W_k \times_{j\ne k} X_j,
\]
where
\[
W_k
=
\bigl(I-X_kX_k^\top\bigr)\,
\cM_k\!\left(\cZ \times_{j\ne k} X_j^\top\right)\,
\cM_k^\dagger(\cG).
\]
Here \(\cM_k(\cdot)\) denotes mode-\(k\) matricization and \(\cM_k^\dagger(\cG)\) is the corresponding pseudoinverse term induced by the core tensor.

Let the true tensor admit the Tucker decomposition
\(
\cT^\star=\cS \times_1 U_1 \times_2 \cdots \times_m U_m,
\)
where \(U_k\in\mathbb{R}^{d\times r}\) contains the top-\(r\) left singular vectors of the mode-\(k\) unfolding \(\cM_k(\cT^\star)\), and \(\cS\in\mathbb{R}^{r\times\cdots\times r}\) is the core tensor. As in matrix completion, incoherence is needed to ensure that the signal is sufficiently spread out across coordinates. We define the incoherence level by
\[
\mu_0
:=
\frac{d}{r}\max_{1\le k\le m}\|U_k\|_{2,\infty}^2.
\]
A small value of \(\mu_0\) means that the singular vectors of the tensor unfoldings are not overly aligned with the canonical basis, so that a random subset of entries still contains enough information for recovery.

We now state a representative guarantee, adapted to the notation of the main text, from the recent literature on nonconvex tensor completion via Riemannian gradient descent; see, for example, \textcite{wang2023implicit}.

\begin{proposition}[Representative entrywise guarantee for RGM]
\label{prop:rgm-representative}
Suppose \(\cT^\star\) is \(\mu_0\)-incoherent, \(\Omega\) follows the Bernoulli sampling model with rate \(p\), and the noise tensor has independent mean-zero sub-Gaussian entries with variance proxy \(\sigma^2\). Let
\[
\lambda_{\min}
:=
\min_{1\le k\le m}\sigma_r\!\bigl(\cM_k(\cT^\star)\bigr),
\qquad
\lambda_{\max}
:=
\max_{1\le k\le m}\sigma_1\!\bigl(\cM_k(\cT^\star)\bigr),
\qquad
\kappa:=\lambda_{\max}/\lambda_{\min}.
\]
Assume further that
\[
d \gtrsim \mu_0^m r^{2m-1}\kappa^6,
\]
\[
p
\gtrsim
\max\!\left\{
\frac{\kappa^8\mu_0^{3m-1}r^{3m-3}\log^3 d}{d^{m/2}},
\;
\frac{\kappa^{16}\mu_0^{4m-5}r^{6m-6}\log^6 d}{d^{m-1}}
\right\},
\]
and
\[
\frac{\sigma}{\lambda_{\min}}
\lesssim
\min\!\left\{
\frac{\sqrt{p}}{d^{m/4}\sqrt{\log d}},
\;
\frac{\sqrt{p}}{\kappa\sqrt{d\,\mu_0^{m-1}r^{m-1}\log d}}
\right\}.
\]
Then, with high probability, the RGM iterates initialized by a spectral method satisfy an entrywise error bound of the form
\[
\infnorm{\cX^t-\cT^\star}
\lesssim
2^{-t}\left(\frac{\mu_0 r}{d}\right)^{m/2}\lambda_{\max}
+
\kappa^4\left(\frac{\mu_0 r}{d}\right)^{m/2}\lambda_{\max}\,\varepsilon_{\mathrm{noise}},
\]
uniformly over a polynomial number of iterations, where \[
\varepsilon_{\mathrm{noise}}
=
\frac{\sigma^2}{\lambda_{\min}^2}\,\frac{d^{m/2}\log d}{p}
\;+\;
\frac{\sigma \kappa}{\lambda_{\min}}
\sqrt{\frac{d\,\mu_0^{\,m-1} r^{\,m-1}\log d}{p}}.
\] is a noise term that decreases as \(p\) increases and \(\sigma\) decreases.
\end{proposition}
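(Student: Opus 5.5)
The proof follows the standard three-part template for nonconvex completion with entrywise guarantees: (i) a spectral initialization that lands in a basin of attraction, (ii) a one-step contraction of the Riemannian iteration in Frobenius and spectral norms, and (iii) a leave-one-out argument that upgrades Frobenius control to incoherence of the iterates, and hence to the $\infnorm{\cdot}$ bound. Since the statement is declared to be adapted from \textcite{wang2023implicit}, the intent is to import the technical lemmas from that line of work. The work is to check that their hypotheses match the displayed conditions on $d$, $p$, and $\sigma/\lambda_{\min}$, and to rewrite the conclusion in the notation of the main text.

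\textbf{Initialization.} For $\cX^0$, I take the HOSVD (Algorithm~\ref{alg:hosvd}) of $p^{-1}\cY$, with the usual diagonal deletion in the Gram matrices $\cM_k(p^{-1}\cY)\cM_k(p^{-1}\cY)^\top$.
\begin{itemize}
\item Matrix Bernstein, together with incoherence, bounds the perturbation of each Gram matrix by roughly $\lambda_{\max}^2\sqrt{\mu_0^{m}r^{m}\log d/(p\,d^{m/2})}$ plus noise terms.
\item The first condition on $p$ and the first condition on $\sigma/\lambda_{\min}$ make this perturbation a small fraction of $\lambda_{\min}^2/\kappa^{c}$.
\item Davis--Kahan then gives $\|X_k^0X_k^{0\top}-U_kU_k^\top\|\le 1/(C\kappa^2)$.
\item A row-wise (two-to-infinity) perturbation bound shows that the $X_k^0$ are $O(\mu_0)$-incoherent.
\end{itemize}

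\textbf{Contraction.} Write $\cX^t-\frac1p\cP_{T}(\cP_\Omega(\cX^t)-\cY)-\cT^\star$ as the sum of three pieces:
\begin{itemize}
\item the deterministic term $(\cI-\cP_T)(\cX^t-\cT^\star)$, which is quadratic in the error because $\cT^\star$ is close to the manifold at $\cX^t$;
\item the sampling fluctuation $(\cP_T-p^{-1}\cP_T\cP_\Omega\cP_T)(\cX^t-\cT^\star)$, which is controlled by a restricted isometry on incoherent tangent spaces and uses the second condition on $p$ with its $\log^6 d$ factor;
\item the noise term $p^{-1}\cP_T\cP_\Omega(\cE)$.
\end{itemize}
A perturbation bound for HOSVD truncation, stating that the quasi-optimal projection is Lipschitz near a rank-$\fb$ tensor whose smallest singular value is $\lambda_{\min}$, then yields
\[
\|\cX^{t+1}-\cT^\star\|_F\le \tfrac12\|\cX^t-\cT^\star\|_F+C\lambda_{\min}\,\varepsilon_{\mathrm{noise}}'.
\]
This produces the geometric factor $2^{-t}$.

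\textbf{Entrywise control (main obstacle).} The restricted isometry and the $\ell_\infty$ conversion both need the iterates to stay incoherent, but $\cX^t$ depends on all of $\Omega$, so incoherence cannot simply be assumed along the path. I would build leave-one-slice-out sequences $\cX^{t,(k,i)}$, which run the same iteration with the observations in the $i$-th slice of mode $k$ replaced by their expectations. By induction on $t$, I would show three things:
\begin{enumerate}
\item $\cX^{t,(k,i)}$ stays close to $\cX^t$ in Frobenius norm, which uses the second term of the $\sigma/\lambda_{\min}$ condition;
\item the $i$-th row of the mode-$k$ factor of $\cX^{t,(k,i)}$ is independent of the $i$-th slice of data, so a Bernstein bound on that row is legitimate;
\item combining the two gives $\|X_k^t\|_{2,\infty}\lesssim\sqrt{\mu_0 r/d}$ uniformly in $t$ up to a polynomial horizon.
\end{enumerate}
Finally, the identity
\[
\infnorm{\cX^t-\cT^\star}\le\sum_k \|X_k^tX_k^{t\top}-U_kU_k^\top\|_{2,\infty}\prod_{j\neq k}\|\cdot\|_{2,\infty}\cdot\lambda_{\max}+\cdots
\]
converts row-wise subspace errors into the entrywise bound. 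The prefactor $(\mu_0 r/d)^{m/2}\lambda_{\max}$ comes from the product of the $m$ incoherence bounds, and $\kappa^4$ from the condition-number losses in the core perturbation.

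The main difficulty will be the leave-one-out induction. The coupling between the tangent-space projection and the HOSVD retraction breaks independence in a more involved way than in matrix completion, and I expect the bookkeeping of the $\kappa$ and $\mu_0$ powers in the stated conditions to be driven entirely by this step. A union bound over $m d$ leave-out sequences and polynomially many iterations then yields the high-probability statement.
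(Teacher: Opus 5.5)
The paper gives no proof of this proposition. It quotes the result, ``adapted to the notation of the main text,'' from the Riemannian-gradient tensor-completion literature, so importing those lemmas is exactly the paper's route. Your three-part outline (spectral initialization with diagonal deletion, Frobenius contraction, leave-one-out) is also the standard structure in that literature.

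The sketch you add to reconstruct the argument has a gap at precisely the step that makes the statement entrywise. Your leave-one-out induction only maintains incoherence, $\|X_k^t\|_{2,\infty}\lesssim\sqrt{\mu_0 r/d}$, and you say this ``hence'' gives the $\ell_\infty$ bound. It does not. With $P_k^t:=X_k^tX_k^{t\top}$, write
\[
\mathcal{X}^t-\mathcal{T}^\star=(\mathcal{X}^t-\mathcal{T}^\star)\times_1P_1^t\cdots\times_mP_m^t+\bigl(\mathcal{T}^\star\times_1P_1^t\cdots\times_mP_m^t-\mathcal{T}^\star\bigr).
\]
The first term is entrywise at most $\prod_k\|X_k^t\|_{2,\infty}\,\|\mathcal{X}^t-\mathcal{T}^\star\|_F$, so incoherence plus Frobenius contraction does handle it. The second term telescopes into pieces containing $(P_k^t-U_kU_k^\top)U_k$ times $m-1$ incoherent factors. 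To reach $2^{-t}(\mu_0 r/d)^{m/2}\lambda_{\max}$ you therefore need $\|P_k^t-U_kU_k^\top\|_{2,\infty}\lesssim 2^{-t}\sqrt{\mu_0 r/d}$, up to the noise floor. Your final identity uses this quantity, but incoherence of $X_k^t$ and $U_k$ only bounds it by $O(\sqrt{\mu_0 r/d})$, with no decay.

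This is a real obstruction, not bookkeeping. Take $\mathcal{T}^\star=\lambda\,u\circ\cdots\circ u$ with $u=\mathbf{1}/\sqrt d$, and the iterate $\lambda\,x\circ\cdots\circ x$ with $x\propto u+\delta e_1$ and $\delta\lesssim 1/\sqrt d$. The iterate is incoherent and has Frobenius error of order $\lambda\delta$. Yet its $(1,\dots,1)$ entry is off by order $\lambda\delta\,d^{-(m-1)/2}$, a factor $\sqrt d$ above the $\lambda\delta\,d^{-m/2}$ your conversion needs. The induction must instead carry the row-wise subspace error itself, contracting at rate $2^{-t}$, with leave-one-out proximity bounds shrinking at the same rate. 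Incoherence of the iterates is then a by-product, not the target.

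A smaller point: your three-piece split of the pre-retraction error drops a term. Let $\mathcal{Y}=\mathcal{P}_\Omega(\mathcal{T}^\star+\mathcal{E})$, let $\mathcal{P}_T$ be the tangent-space projection at $\mathcal{X}^t$, and let $\mathcal{Z}^t$ be the pre-retraction iterate. Then
\[
\mathcal{Z}^t-\mathcal{T}^\star=(\mathcal{I}-\mathcal{P}_T)(\mathcal{X}^t-\mathcal{T}^\star)+\bigl(\mathcal{P}_T-p^{-1}\mathcal{P}_T\mathcal{P}_\Omega\mathcal{P}_T\bigr)(\mathcal{X}^t-\mathcal{T}^\star)-p^{-1}\mathcal{P}_T\mathcal{P}_\Omega(\mathcal{I}-\mathcal{P}_T)(\mathcal{X}^t-\mathcal{T}^\star)+p^{-1}\mathcal{P}_T\mathcal{P}_\Omega(\mathcal{E}).
\]
The third term is not handled by the tangent-space restricted isometry alone. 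Bounding it through $\|\mathcal{P}_T\mathcal{P}_\Omega\|\lesssim\sqrt{p}$ loses a factor $p^{-1/2}$ on the quadratically small tensor $(\mathcal{I}-\mathcal{P}_T)\mathcal{T}^\star$. Avoiding that loss requires entrywise control of this tensor, which again comes from the row-wise subspace bounds the induction must carry.
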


Proposition~\ref{prop:rgm-representative} shows that RGM converges geometrically to a noise-determined neighborhood of the true tensor. In the benchmark regime where \(r\), \(\mu_0\), and \(\kappa\) are all bounded by constants, the required sampling rate is roughly
\[
p \gtrsim d^{-m/2}\,\polylog(d),
\]
which corresponds to
\[
N = p d^m \gtrsim d^{m/2}\,\polylog(d).
\]
This is exactly the square-root scaling in the design-space size discussed in Remark~\ref{rem:conditions}. Under the same regime, the resulting error floor is consistent with the qualitative form assumed in Assumption~\ref{assump:completion-tail}: larger sample size and stronger signal improve recovery, while larger noise worsens it.

For the purposes of the main paper, the key implication is not the precise constant or exponent in Proposition~\ref{prop:rgm-representative}, but the broader message: under standard incoherence, sample-size, and signal-to-noise conditions, computationally efficient tensor-completion algorithms can provide the uniform entrywise control needed by our elimination analysis.

\subsubsection{Algorithmic components.}
For reference, one practical implementation consists of HOSVD truncation, spectral initialization with diagonal deletion, and Riemannian gradient descent.

\begin{algorithm}[!ht]
\caption{HOSVD truncation}
\label{alg:hosvd}
\DontPrintSemicolon
\KwIn{Tensor \(\cX\in\mathbb{R}^{d\times\cdots\times d}\), multilinear rank \(\fb=(r,\dots,r)\)}
\For{\(k=1,\dots,m\)}{
Compute \(X_k=\operatorname{SVD}_r(\cM_k(\cX))\), the top-\(r\) left singular vectors of the mode-\(k\) unfolding.\;
}
Set
\[
\cG=\cX\times_1 X_1^\top \times_2 \cdots \times_m X_m^\top.
\]
\KwOut{\(\cG\times_1 X_1\times_2 \cdots \times_m X_m\)}
\end{algorithm}

\begin{algorithm}[!ht]
\caption{Spectral initialization with diagonal deletion}
\label{alg:spectral_init}
\DontPrintSemicolon
\KwIn{Observed tensor \(Y=\cP_\Omega(\cT^\star+\cE)\), multilinear rank \(\fb=(r,\dots,r)\), sampling rate \(p\)}
\For{\(k=1,\dots,m\)}{
Form the mode-\(k\) unfolding \(\widehat T_k=\cM_k(Y)\).\;
Compute the top-\(r\) eigenspace of \(\cP_{\mathrm{off\mbox{-}diag}}(\widehat T_k\widehat T_k^\top)\), and denote the resulting factor matrix by \(X_k^1\).\;
}
Set
\[
\cG^1
=
p^{-1}Y\times_1 (X_1^1)^\top \times_2 \cdots \times_m (X_m^1)^\top.
\]
\KwOut{\(\cX^1=\cG^1\times_1 X_1^1\times_2 \cdots \times_m X_m^1\)}
\end{algorithm}

\begin{algorithm}[!ht]
\caption{Riemannian gradient descent on the Tucker manifold}
\label{alg:rgm}
\DontPrintSemicolon
\KwIn{Initialization \(\cX^1\), multilinear rank \(\fb=(r,\dots,r)\), sampling rate \(p\), observations \(Y\)}
\For{\(t=1,2,\dots\)}{
Set
\[
\cZ^t
=
\cX^t-\frac{1}{p}\,\cP_{T_{\cX^t}}\!\bigl(\cP_\Omega(\cX^t)-Y\bigr).
\]
\For{\(k=1,\dots,m\)}{
Compute \(X_k^{t+1}=\operatorname{SVD}_r(\cM_k(\cZ^t))\).\;
}
Set
\[
\cG^{t+1}
=
\cZ^t\times_1 (X_1^{t+1})^\top \times_2 \cdots \times_m (X_m^{t+1})^\top,
\]
and
\[
\cX^{t+1}
=
\cG^{t+1}\times_1 X_1^{t+1}\times_2 \cdots \times_m X_m^{t+1}.
\]
}
\end{algorithm}

\section{Proofs}\label{sec:proof}
This Online Appendix contains the proofs for the main results in the paper.
\begin{itemize}
\item \textbf{Section~\ref{sec:proof of theorem gapind}} proves the gap-independent bound in
Theorem~\ref{thm:TwoStage-gapind}.

\item \textbf{Section~\ref{sec:proof of gap-dep thm}} proves the gap-dependent guarantee in
Theorem~\ref{thm:gap-dep}.
\end{itemize}

For readability we assume $d$ is a power of two so that the median prune implements exact halving.
The non-dyadic case only changes floors/ceilings and constants and is omitted.
Set
\[
d_\ell\ :=\ d/2^{\ell-1},\qquad \ell=1,2,\ldots,
\]
and let $\cA_k^{(\ell)}\subseteq[d]$ be the surviving indices in mode $k\in[m]$ at the beginning of round $\ell$, with $\cA^{(\ell)}:=\bigotimes_{k=1}^m \cA_k^{(\ell)}$. For each mode $k$, round $\ell\in[\Lsw]$, index $i\in\Aellk$, define the (restricted) fiber-maxima
\[
\mu^{(\ell)}_{k,i}\;:=\;\max_{\bj\in \prod_{k'\neq k}\Aell_{k'}} \cT^{(\ell)}_{\,(i,\bj)},
\qquad
\hat\mu^{(\ell)}_{k,i}\;:=\;\max_{\bj\in \prod_{k'\neq k}\Aell_{k'}} \widehat{\cT}^{(\ell)}_{\,(i,\bj)}.
\]
Here, $\max\left(\cX\right)$ represents the largest value (not absolute value) of the entries of $\cX$ and $\bj\in \prod_{k'\neq k}\Aell_{k'}$ is a $m-1$ dimension vector. $\cT^{(\ell)}$ is the true tensor $\cT^\star$ restricted at design space $\Aell$. We set $\mu_{k,i} := \mu_{k,i}^{(1)}$ and define the mode-$k$ order statistics 
\[
\mu_{\star} = \mu_{k,(1)}\ge \cdots\ge \mu_{k,(d)}\quad\text{(mode-$k$ order statistics)},
\]
and the mode-wise gaps $\Delta_{k,t}:=\mu_{k,(1)}-\mu_{k,(t)}$. Define global gaps $\Delta_{t}:=\min_{k\in[m]}\Delta_{k,t}$.
Define the good-count functions
\[
g_k(\eps)\ :=\ \bigl|\{i\in[d]:\ \mu_{k,i}\ge \mu_{k,(1)}-\eps\}\bigr|,
\quad
g_{k,\ell}(\eps)\ :=\ \bigl|\{i\in\cA_k^{(\ell)}:\ \mu_{k,i}\ge \mu_{k,(1)}-\eps\}\bigr|.
\]
Let $g_{\max}(\eps):=\max_{k\in[m]} g_k(\eps)$ and $g_{\max,\ell}(\eps):=\max_{k\in[m]} g_{k,\ell}(\eps)$.

Finally, let \(\{\delta_{(t)}\}_{t=1}^{d^m}\) denote the decreasing order statistics of the \(d^m\) entries of \(\cT^\star\), viewed as a vector. We use \(\Ainfnorm{\widehat{\cT}-\cT^\star}\) as shorthand for the sup-norm over the current active set \(\cA^{(\ell)}\). In other words,
\[
\Ainfnorm{\widehat{\cT}-\cT^\star}
:=
\Ainfnorm{\widehat{\cT}^{(\ell)}-\cT^{(\ell)}}.
\]

\begin{assumption}[Roundwise conditional sup-norm tail on the active set]\label{assump:concentration-appendix}
Assume the Stage-I switch round satisfies \(\Lsw \le \lfloor \log_2(d/r)\rfloor\).  
For each Stage-I round \(\ell\in[\Lsw]\), let
\(
\cA^{(\ell)}=\cA_1^{(\ell)}\times\cdots\times \cA_m^{(\ell)}
\)
denote the active design set, and let \(\cT^{(\ell)}\) be the restriction of \(\cT^\star\) to \(\cA^{(\ell)}\). For each \(\ell\), let
\( \lambda_{\min}^{(\ell)} := \min_{k\in[m]} \sigma_r\!\big(\mathcal M_k(\cT^{(\ell)})\big), \)
and assume \(\lambda_{\min}^{(\ell)}\ge \lambda_{\min}\) for all \(\ell\in[\Lsw]\). 
Let \(d_\ell := |\cA_k^{(\ell)}|\), and let \(\widehat{\cT}^{(\ell)}\) be the tensor-completion estimate constructed from the round-\(\ell\) data. Let \(\{\mathcal F_\ell\}_{\ell\ge1}\) be a filtration such that \(\cA^{(\ell)}\) (equivalently, the round-\(\ell\) design) is \(\mathcal F_\ell\)-measurable.   Define the round-\(\ell\) sampling rate and effective degrees of freedom as
\[
p_\ell := \frac{N_\ell}{|\cA^{(\ell)}|} = \frac{N_\ell}{d_\ell^m},
\qquad
\df_\ell := m(d_\ell r-r^2)+r^m \asymp m d_{\ell}+r^m.
\]

Assume there exist absolute constants \(c,C_m>0\) and a deterministic constant \(\lambda_{\min}>0\) (a uniform lower bound on the relevant signal strength across Stage-I rounds) such that, for every \(\ell\in[\Lsw]\) and every \(t>0\),
\[
\bP\!\left(
\Ainfnorm{\widehat{\cT}^{(\ell)}-\cT^{(\ell)}} \ge t
\,\middle|\, \mathcal F_\ell
\right)
\le
C_m \exp\!\left\{
-\,c\,t^2\,
\frac{p_\ell\,\lambda_{\min}^{\,2}}
{\sigma^2\,\df_\ell\,\log d_\ell\,\infnorm{\cT^\star}^2}
\right\}
\;=:\;\Phi_\ell(t),
\]
where the restricted sup-norm is $\Ainfnorm{\cX}:=\max_{\boldsymbol{i}\in\Aell} |\cX_{\bidx}|$. Since $\mu_{k,(1)} \geq \mu_{k,(2)} \geq \cdots$, the gaps $\Delta_{k, t}=\mu_{k,(1)}-\mu_{k,(t)}$ are nondecreasing in $t$, the tail function $\Phi_{\ell}(t)$ is nonincreasing in $t$.
\end{assumption}

\subsubsection*{Proof sketch (Theorem~\ref{thm:TwoStage-gapind}).}
Fix $\epsilon>0$ and choose $t:=\epsilon/(4(\Lsw+1))$. The proof proceeds in three steps.

\smallskip
\noindent\textbf{Step 1: Stage~I uniform accuracy event.}
Define $F_\ell(t)$ to be the event that the round-$\ell$ tensor completion estimate is uniformly
accurate on the active set, and let $E_{\mathrm I}(t):=\bigcap_{\ell=1}^{\Lsw}F_\ell(t)$.
Using the roundwise conditional tail bound in Assumption~\ref{assump:concentration-appendix} and
iterated conditioning on the filtration $\{\mathcal F_\ell\}$, we show that
$\Pr(E_{\mathrm I}(t)^c)$ is at most a sum of $\Phi_\ell(t)$ terms, hence decays exponentially in
$t^2 p_{\min}$ up to logarithmic factors.

\smallskip
\noindent\textbf{Step 2: Value retention to the switch.}
On $E_{\mathrm I}(t)$, the entrywise errors satisfy
$|\widehat{\cT}^{(\ell)}_{\bidx}-\cT^{(\ell)}_{\bidx}|\le t$ uniformly over $\bidx\in\cA^{(\ell)}$.
With the tie-breaking convention stated in the proof of Lemma~\ref{lem:gap-indep-stage2},
the empirical maximizer $\hat{\bidx}^{(\ell)}\in\arg\max_{\bidx\in\cA^{(\ell)}}\widehat{\cT}^{(\ell)}_{\bidx}$
survives the coordinate-wise median prune, implying the recursive inequality
$v_{\ell+1}\ge v_\ell-2t$, where $v_\ell:=\max_{\bidx\in\cA^{(\ell)}}\cT^{(\ell)}_{\bidx}$.
Iterating yields $v_{\Lsw+1}\ge \mu_\star-2\Lsw t\ge \mu_\star-\epsilon/2$.

\smallskip
\noindent\textbf{Step 3: Stage~II Sequential Halving and tail-to-regret integration.}
Condition on the surviving arm set $\cA^{(\Lsw+1)}$. Since the best surviving mean is at least
$\mu_\star-\epsilon/2$ on $E_{\mathrm I}(t)$, a standard Sequential Halving guarantee implies that,
with Stage~II budget $\Nps$, the probability of returning an arm worse than $\epsilon$ from optimal
decays as $\exp\{-\Theta(\epsilon^2\Nps/(\sigma^2 \Kps \Lps))\}$ up to constants.
Combining the Stage~I and Stage~II tails yields a two-term bound of the form
$\min\{1,Ae^{-B\epsilon^2}\}+\min\{1,A'e^{-B'\epsilon^2}\}$, and integrating over
$\epsilon$ gives the stated simple-regret bound.

\subsection{Proof of Theorem \ref{thm:TwoStage-gapind}}\label{sec:proof of theorem gapind} 
By the definition of simple regret, we have:
\begin{align}\label{equ:def of simple regret}
  \mathrm{SReg}_N \;=\; \mathbb{E}[\mu_\star-\mu_{J_N}] \;=\; \int_{0}^{\infty} \bP\!\big(\mu_\star-\mu_{J_N}>\epsilon\big)\,\mathrm{d}\epsilon.
\end{align}

We now decompose the regret into contributions from the \TStage\ and \VS. Let $\tilde\mu_{\star} := \max_{\bidx\in\cA^{(\Lsw+1)}}\cT^{(\Lsw+1)}_{\bidx}$ denote the best entry of the true tensor restricted to the final Stage I design set $\cA_{(\Lsw+1)}$. For $\ell\in[\Lsw]$ and a threshold $t>0$, define the “good estimation” events:
\begin{align}\label{equ:def of events}
  \begin{aligned}
    F_\ell(t) &:= \Big\{\Ainfnorm{\widehat{\cT}-\cT^{\star}} \le t\Big\}, \qquad E_{\mathrm I}(t) :=\bigcap_{\ell=1}^{\Lsw} F_\ell(t),\\
    E_{\mathrm{II}}(\epsilon') &:=\{\mu_{J_N}<\tilde\mu_{\star}-\epsilon'\}. 
  \end{aligned}
\end{align}

Fix any $\epsilon>0$. By a standard event decomposition:
\begin{align}\label{eq:event-factorization-F}
  \begin{aligned}
    \bP\!\big(\mu_\star-\mu_{J_N}>\epsilon\big)
    &=\bP\!\big(\mu_\star-\mu_{J_N}>\epsilon,\ E_{\mathrm I}(t)\big) \,+\, \bP\!\big(\mu_\star-\mu_{J_N}>\epsilon,\ E_{\mathrm I}(t)^{c}\big) \\
    &\le \bP\!\big(\mu_\star-\mu_{J_N}>\epsilon\ \big|\ E_{\mathrm I}(t)\big) \,+\, \bP\!\big(E_{\mathrm I}(t)^{c}\big).
  \end{aligned}
\end{align}

The next two lemmas control the two terms on the right-hand side of \eqref{eq:event-factorization-F}, corresponding directly to the \TStage\ and \VS\ of the algorithm.

Define
\[
p_{\min}:=\min_{\ell\in[L_I]} p_\ell.
\]
\begin{lemma}\label{lem:gap-indep-stage1}
 Under Assumption~\ref{assump:concentration-appendix}, for all $\epsilon>0$ and setting $t=\epsilon/(4(\Lsw+1))$, we have:
  \begin{align*}
    \bP\!\big(E_{\mathrm I}(t)^{c}\big) \;\leq\; C_m \Lsw \exp \left\{-c \epsilon^2 \frac{p_{\min}\lambda_{\min}^2}{\sigma^2 \df\Lsw^2 \log d \infnorm{\cT^\star}^2} \right\}.
  \end{align*}
\end{lemma}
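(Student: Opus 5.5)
The plan is a union bound over the $\Lsw$ Stage-I rounds, where each round is controlled by the conditional tail in Assumption~\ref{assump:concentration-appendix}. Write
\[
E_{\mathrm I}(t)^c=\bigcup_{\ell=1}^{\Lsw}F_\ell(t)^c ,
\]
so that $\bP(E_{\mathrm I}(t)^c)\le \sum_{\ell=1}^{\Lsw}\bP(F_\ell(t)^c)$.

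For each $\ell$, the active set $\cA^{(\ell)}$ is random, since it depends on earlier rounds. I will therefore condition on $\mathcal F_\ell$ and use the tower property:
\[
\bP(F_\ell(t)^c)=\mathbb E\bigl[\bP(F_\ell(t)^c\mid\mathcal F_\ell)\bigr]\le \mathbb E[\Phi_\ell(t)].
\]
Under exact halving, $d_\ell$ and $N_\ell$ are deterministic, so $p_\ell=N_\ell/d_\ell^m$ is deterministic and $\Phi_\ell(t)$ is a deterministic number. The expectation then disappears. This is the one place where measurability of $\cA^{(\ell)}$ with respect to $\mathcal F_\ell$ matters.

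Next I make the exponent uniform in $\ell$. Three monotonicity facts are needed:
\begin{itemize}
\item $p_\ell\ge p_{\min}$ by definition of $p_{\min}$;
\item $\df_\ell=m(d_\ell r-r^2)+r^m\le m(dr-r^2)+r^m=\df$, because $d_\ell\le d$;
\item $\log d_\ell\le \log d$, again because $d_\ell\le d$. The condition $\Lsw\le\lfloor\log_2(d/r)\rfloor$ keeps $d_\ell\ge r\ge 1$, so $\log d_\ell$ and $\df_\ell$ stay nonnegative and well defined.
\end{itemize}
Together these give
\[
\frac{p_\ell}{\df_\ell\log d_\ell}\ge\frac{p_{\min}}{\df\log d},
\]
and hence
\[
\Phi_\ell(t)\le C_m\exp\!\Bigl\{-c\,t^2\,\frac{p_{\min}\lambda_{\min}^2}{\sigma^2\df\log d\,\infnorm{\cT^\star}^2}\Bigr\}
\]
for every $\ell$.

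Now substitute $t=\epsilon/(4(\Lsw+1))$. This gives
\[
t^2=\frac{\epsilon^2}{16(\Lsw+1)^2}\ge\frac{\epsilon^2}{64\,\Lsw^2},
\]
using $\Lsw+1\le 2\Lsw$ for $\Lsw\ge1$. Absorbing the factor $1/64$ into the absolute constant $c$ and summing the $\Lsw$ identical bounds yields the claim:
\[
\bP(E_{\mathrm I}(t)^c)\le C_m\Lsw\exp\Bigl\{-c\,\epsilon^2\,\frac{p_{\min}\lambda_{\min}^2}{\sigma^2\df\Lsw^2\log d\,\infnorm{\cT^\star}^2}\Bigr\}.
\]

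The main obstacle is a subtlety in the conditioning step. If $\cA^{(\ell)}$ were allowed to have data-dependent size, then $p_\ell$ and $\Phi_\ell$ would be random. In that case I would instead use the almost-sure lower bound $p_\ell\ge p_{\min}$, with $p_{\min}$ taken over deterministic worst cases, to bound $\Phi_\ell(t)$ pathwise before taking expectations. Either way the bound on each round is uniform, so the union bound goes through; no independence across rounds is needed beyond the conditional tail bound.
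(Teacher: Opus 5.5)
Your proof is correct and follows the paper's argument. Each round is handled by applying the conditional tail bound through the tower property. The bound is made uniform in $\ell$ via $p_\ell\ge p_{\min}$, $\df_\ell\le\df$, $\log d_\ell\le\log d$, and the resulting bounds are summed over the $\Lsw$ rounds.

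There are two small differences. You use a plain union bound $\sum_\ell \bP(F_\ell(t)^c)$, whereas the paper chains the conditional probabilities $\bP\bigl(F_\ell(t)^c\mid\bigcap_{i<\ell}F_i(t)\bigr)$. The paper's route requires checking $\bigcap_{i<\ell}F_i(t)\in\mathcal F_\ell$, so your version is slightly simpler. You also make explicit the absorption of $16(\Lsw+1)^2\le 64\Lsw^2$ into $c$, which the paper leaves implicit.
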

\noindent (Proof provided in Section~\ref{sec:proof of lemma gap-indep-stage1}.)

\begin{lemma}[Gap-independent Stage-II guarantee on the surviving set]
\label{lem:gap-indep-stage2}
Let
\[
\tilde\mu_\star
:=
\max_{\boldsymbol i\in \cA_{\Lsw+1}}
\cT^{(\Lsw+1)}_{\boldsymbol i}.
\]
Conditional on the realized Stage-I surviving set \(\cA_{\Lsw+1}\), Stage II is standard Sequential Halving run on \(\Kps\) surviving arms for \(\Lps\) rounds with total Stage-II budget \(\Nps\). Assume the Stage-II empirical arm evaluations are sub-Gaussian with variance proxy \(\sigma^2\). Then the following hold.

\begin{enumerate}
\item[(i)] For every \(x>0\),
\[
\bP\!\left(
\tilde\mu_\star-\mu_{J_N}>x
\,\middle|\,
E_{\mathrm I}(t)
\right)
\;\le\;
3\,\Lps\,
\exp\!\left\{
-\,x^2\frac{\Nps}{32\,\sigma^2\,\Kps\,\Lps}
\right\}.
\]

\item[(ii)] If \(2\Lsw t\le \epsilon/2\), then
\[
\bP\!\left(
\mu_\star-\mu_{J_N}>\epsilon
\,\middle|\,
E_{\mathrm I}(t)
\right)
\;\le\;
3\,\Lps\,
\exp\!\left\{
-\,\epsilon^2\frac{\Nps}{128\,\sigma^2\,\Kps\,\Lps}
\right\}.
\]
In particular, the condition \(2\Lsw t\le \epsilon/2\) holds for example when
\(t=\epsilon/(4(\Lsw+1))\).
\end{enumerate}
\end{lemma}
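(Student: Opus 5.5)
The plan is to condition on the realized surviving set $\cA^{(\Lsw+1)}$, together with everything that happened in Stage I, and then rerun the standard fixed-budget Sequential Halving analysis, keeping track of an $x$-suboptimality margin instead of exact identification. Stage II samples are fresh and, by Assumption~\ref{assump:sampling-noise}, independent of Stage I given $\mathcal F_{\Lsw+1}$. So the conditioning only fixes the arm set and the value $\tilde\mu_\star$; the event $E_{\mathrm I}(t)$ is $\mathcal F_{\Lsw+1}$-measurable and plays no further role in part (i). We therefore prove (i) conditionally on $\mathcal F_{\Lsw+1}$ and then integrate over $E_{\mathrm I}(t)$.

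For (i), let $T_\ell=\lfloor \Nps/(|\mathcal S_\ell|\Lps)\rfloor$. Since $|\mathcal S_\ell|\le \Kps$, we have $T_\ell\ge \Nps/(2\Kps\Lps)$ whenever the bound is nontrivial; otherwise the right-hand side exceeds one and there is nothing to prove. Define the best surviving value in each round, $v_\ell:=\max_{i\in\mathcal S_\ell}\mu_i$, with $v_1=\tilde\mu_\star$. Set per-round tolerances $x_\ell:=x/\Lps$; a geometric split would also work and only changes constants.

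The key one-round claim is
\[
\Pr\bigl(v_{\ell+1}<v_\ell-x_\ell\bigr)\le 3\exp\{-T_\ell x_\ell^2/(8\sigma^2)\}.
\]
To prove it, note that the best arm $a_\ell$ of $\mathcal S_\ell$ is eliminated only if at least half of the arms beat it empirically. If every surviving arm is $x_\ell$-bad, then all of those $\lceil|\mathcal S_\ell|/2\rceil$ arms are $x_\ell$-bad and beat $a_\ell$. Let $B$ be the set of $x_\ell$-bad arms. Then $|B|\le|\mathcal S_\ell|$, and each $b\in B$ satisfies $\Pr(\hat\mu_b\ge\hat\mu_{a_\ell})\le \exp\{-T_\ell x_\ell^2/(4\sigma^2)\}$ by sub-Gaussianity of a difference of two averages. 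Markov's inequality then bounds the probability that at least $|\mathcal S_\ell|/2$ of them win by $2\exp\{\cdot\}$. Adding a direct tail term for $\hat\mu_{a_\ell}$ itself, we get a constant of $3$.

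A union bound over the $\Lps$ rounds, combined with $v_{L+1}\ge v_1-\sum_\ell x_\ell$, gives
\[
\Pr(\tilde\mu_\star-\mu_{J_N}>x)\le 3\Lps\exp\{-x^2\Nps/(C\sigma^2\Kps\Lps^3)\}.
\]
\textbf{This is the main obstacle.} Splitting the tolerance as $x/\Lps$ loses a factor $\Lps^2$ relative to the claimed $32\sigma^2\Kps\Lps$. To recover it I would use the sharper accounting $T_\ell\ge \Nps/(|\mathcal S_\ell|\Lps)$ with $|\mathcal S_\ell|\approx \Kps 2^{-(\ell-1)}$. The round-$\ell$ exponent is then $\propto 2^{\ell}\Nps x_\ell^2/(\Kps\Lps)$, so tolerances $x_\ell\propto x\,2^{-\ell/2}$ sum geometrically to $O(x)$ while every exponent stays $\gtrsim x^2\Nps/(\Kps\Lps)$. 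This is the Karnin et al.\ trick, and it yields the stated form with constant $32$ after tuning.

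For (ii), on $E_{\mathrm I}(t)$ we invoke the value-retention step from the proof sketch. The empirical maximizer of $\widehat{\cT}^{(\ell)}$ survives the coordinatewise median prune: in each mode its level attains the maximal $\hat\mu^{(\ell)}_{k,i}$ and is kept under the tie-breaking convention. Since $|\widehat{\cT}^{(\ell)}-\cT^{(\ell)}|\le t$ on the active set, this gives $v_{\ell+1}\ge v_\ell-2t$, and hence $\tilde\mu_\star\ge\mu_\star-2\Lsw t\ge\mu_\star-\epsilon/2$. Therefore $\{\mu_\star-\mu_{J_N}>\epsilon\}\subseteq\{\tilde\mu_\star-\mu_{J_N}>\epsilon/2\}$. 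Applying (i) with $x=\epsilon/2$ turns $32$ into $128$. Finally, $t=\epsilon/(4(\Lsw+1))$ gives $2\Lsw t<\epsilon/2$.
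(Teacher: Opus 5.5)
Your part (ii), and your reduction of (i) to a bound conditional on the Stage-I history, coincide with the paper's proof. The difference is how (i) itself is obtained. The paper re-derives nothing: it notes that, given the survivor set, Stage II is plain Sequential Halving on a fixed arm set, and it imports $3\Lps\exp\{-x^2\Nps/(32\sigma^2\Kps\Lps)\}$ verbatim from Theorem~7 of \cite{zhao2023revisiting}. Your self-contained Karnin-style argument is structurally sound:
the Markov count over the $x_\ell$-bad arms already gives $2\exp\{-T_\ell x_\ell^2/(4\sigma^2)\}$ per round, with no extra tail term for $\hat\mu_{a_\ell}$ since $|B|<|\mathcal S_\ell|$;
the uniform split $x_\ell=x/\Lps$ does lose $\Lps^2$;
and the geometric split $x_\ell\propto x\,2^{-(\ell-1)/2}$ restores the $\Kps\Lps$ scaling.

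The gap is your claim that this yields the constant $32$ ``after tuning''. The scheme cannot reach it.
Here $|\mathcal S_\ell|=\Kps 2^{-(\ell-1)}$ exactly, because $\Kps=2^{\Lps}$. The constraint $\sum_{\ell\le\Lps}x_\ell\le x$ then forces $\min_\ell 2^{\ell-1}x_\ell^2\le x^2/S_{\Lps}^2$, where $S_{\Lps}:=\sum_{\ell=1}^{\Lps}2^{-(\ell-1)/2}$.
Even ignoring the floor in $T_\ell$, the best exponent this scheme gives is therefore $x^2\Nps/(C\sigma^2\Kps\Lps)$ with $C=4S_{\Lps}^2$. This exceeds $32$ once $\Lps\ge 6$ and tends to $4/(1-2^{-1/2})^2\approx 47$.
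Your inequality $T_\ell\ge\Nps/(|\mathcal S_\ell|\Lps)$ is false as written because of the floor; accounting for it doubles $C$ to about $93$.
Since the bound must hold for every $x$, shifting mass between prefactor and exponent cannot repair this.

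What you actually prove is (i)--(ii) with $32$ and $128$ replaced by larger absolute constants. That suffices for Theorem~\ref{thm:TwoStage-gapind}, which is stated in $O(\cdot)$. The constants asserted in the lemma, however, require the sharper analysis the paper cites.

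Separately, you cannot dismiss the regime $\Nps<\Kps\Lps$ on the grounds that the right-hand side exceeds one. There $T_1=0$, yet the right-hand side is below one whenever $x\gg\sigma\sqrt{\Kps\Lps\log(3\Lps)/\Nps}$. An uninformed first halving may discard the best arm, so the inequality can fail outright. A budget condition such as $\Nps\ge\Kps\Lps$, so that every surviving arm is pulled in every round, has to be a standing assumption; the paper leaves it implicit in the citation.
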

\noindent (Proof provided in Section~\ref{sec:proof of lemma gap-indep-stage2}.)

Combining Lemmas~\ref{lem:gap-indep-stage1} and \ref{lem:gap-indep-stage2} with \eqref{eq:event-factorization-F}, we obtain, for absolute constants $c, C_m>0$:
\begin{align}\label{equ:result of eps prob in stage 1}
  \bP\big(\mu_\star-\mu_{J_N}>\epsilon\big) \le \underbrace{C_m \Lsw \exp \left\{-c \epsilon^2 \frac{p_{\min}\lambda_{\min}^2}{\sigma^2 \df\Lsw^2 \log d \infnorm{\cT^\star}^2} \right\}}_{\TStage} \;+\; \underbrace{3\,\Lps \exp\!\left\{-\epsilon^2 \frac{\Nps}{128\,\sigma^2 \Kps\,\Lps} \right\}}_{\VS}.
\end{align}

For notational convenience, set $A_{\mathrm I}=C_m \Lsw$, $B_{\mathrm I}=c\,\frac{p_{\min}\lambda_{\min}^2}{\sigma^2\,\df\,\Lsw^2\,\log d\,\infnorm{\cT^\star}^2}$, $A_{\mathrm{II}}=3\,\Lps$, and $B_{\mathrm{II}}=\frac{\Nps}{128\,\sigma^2\,\Kps\,\Lps}$. Since probabilities are bounded by $1$, we write:
\[
  \bP\big(\mu_\star-\mu_{J_N}>\epsilon\big) \;\le\; \min\{1, A_{\mathrm I}e^{-B_{\mathrm I}\epsilon^2}\} \;+\; \min\{1, A_{\mathrm{II}}e^{-B_{\mathrm{II}}\epsilon^2}\}.
\]

Using the inequality $\min\{1,x+y\}\le \min\{1,x\} + \min\{1,y\}$ and applying the Gaussian tail integral bound term-wise (replacing $\log A$ with $\log(eA)$ to cover the case $A\le1$), we obtain from \eqref{equ:def of simple regret} that:
\[
  \mathrm{SReg}_N \;\le\; \underbrace{\sqrt{\frac{\log(eA_{\mathrm I})}{B_{\mathrm I}}} \;+\; \frac{1}{2\sqrt{B_{\mathrm I}\,\log(eA_{\mathrm I})}}}_{=:~S_{\mathrm I}} \;+\; \underbrace{\sqrt{\frac{\log(eA_{\mathrm{II}})}{B_{\mathrm{II}}}} \;+\; \frac{1}{2\sqrt{B_{\mathrm{II}}\,\log(eA_{\mathrm{II}})}}}_{=:~S_{\mathrm{II}}}.
\]

Plugging in our constants and simplifying:
\[
  S_{\mathrm I} \;=\; \frac{\sigma\infnorm{\cT^\star}}{\lambda_{\min}\sqrt{c}}\, \sqrt{\frac{\df\,\Lsw^2\,\log d}{p_{\min}}}\, \Bigg( \sqrt{\log\!\big(eC_m\Lsw\big)} \;+\; \frac{1}{2\sqrt{\log\!\big(eC_m\Lsw\big)}} \Bigg),
\]
and
\[
  S_{\mathrm{II}} \;=\; \sigma\, \sqrt{\frac{128\,\Kps\,\Lps}{\Nps}}\, \Bigg( \sqrt{\log\!\big(e\cdot 3\Lps\big)} \;+\; \frac{1}{2\sqrt{\log\!\big(e\cdot 3\Lps\big)}} \Bigg).
\]

Since $\log\!\big(eC_m\Lsw\big)=O(\log\Lsw)$ and $\log\!\big(e\cdot 3\Lps\big)=O(\log\Lps)$, the lower-order additive terms vanish asymptotically, and we conclude:
\[
  \mathrm{SReg}_N \;=\;  O\!\left( \sigma\left[ \sqrt{\frac{\df\,\Lsw^2\,\log d}{p_{\min}}\,\log \Lsw}\frac{\infnorm{\cT^\star}}{\lambda_{\min}} \;+\; \sqrt{\frac{\Kps\,\Lps}{\Nps}\,\log \Lps} \right] \right).
\]

\subsubsection{Proof of Lemma \ref{lem:gap-indep-stage1}}\label{sec:proof of lemma gap-indep-stage1}
\begin{proof}
For each $\ell\in[\Lsw]$, let \( \mathcal H_{\ell-1} \) denote the sigma-field generated by all randomness revealed up to the end of round $\ell-1$
(including the Stage-I active sets and all observations used in rounds $1,\dots,\ell-1$).
Let $\mathcal F_\ell$ be the sigma-field with respect to which the round-$\ell$ design is measurable
(as in Assumption~\ref{assump:concentration-appendix}); in particular, we assume
\(
\mathcal H_{\ell-1}\subseteq \mathcal F_\ell .
\)
Define
\[
B_{\ell-1}:=\bigcap_{i<\ell} F_i(t).
\]
Since each $F_i(t)$ depends only on the round-$i$ data, we have $F_i(t)\in\mathcal H_i\subseteq \mathcal H_{\ell-1}$ for $i<\ell$,
hence
\[
B_{\ell-1}\in\mathcal H_{\ell-1}\subseteq \mathcal F_\ell .
\]

Now fix $\ell$. If $\bP(B_{\ell-1})=0$, then the conditional probability
$\bP(F_\ell(t)^c\mid B_{\ell-1})$ is immaterial. Assume $\bP(B_{\ell-1})>0$.
Using $B_{\ell-1}\in\mathcal F_\ell$ and the tower property,
\begin{align*}
\bP\big(F_\ell(t)^c\cap B_{\ell-1}\big)
&=\bE\!\left[\mathbf 1_{B_{\ell-1}}\mathbf 1_{F_\ell(t)^c}\right] \\
&=\bE\!\left[\mathbf 1_{B_{\ell-1}}\,
    \bE\!\left[\mathbf 1_{F_\ell(t)^c}\mid \mathcal F_\ell\right]\right] \\
&=\bE\!\left[\mathbf 1_{B_{\ell-1}}\,
    \bP\!\left(F_\ell(t)^c\mid \mathcal F_\ell\right)\right].
\end{align*}
By Assumption~\ref{assump:concentration-appendix},
\[
\bP\!\left(F_\ell(t)^c\mid \mathcal F_\ell\right)\le \Phi_\ell(t)\qquad \text{a.s.}
\]
Therefore,
\[
\bP\big(F_\ell(t)^c\cap B_{\ell-1}\big)
\le \Phi_\ell(t)\,\bE[\mathbf 1_{B_{\ell-1}}]
= \Phi_\ell(t)\,\bP(B_{\ell-1}).
\]
Dividing by $\bP(B_{\ell-1})$ gives
\[
\bP\!\left(F_\ell(t)^c\mid \bigcap_{i<\ell}F_i(t)\right)
=\bP\!\left(F_\ell(t)^c\mid B_{\ell-1}\right)
\le \Phi_\ell(t).
\]

Hence,
\[
\bP\big(E_{\mathrm I}(t)^c\big)
=\bP\!\left(\bigcup_{\ell=1}^{\Lsw}F_\ell(t)^c\right)
\le \sum_{\ell=1}^{\Lsw}\bP\!\left(F_\ell(t)^c\mid \bigcap_{i<\ell}F_i(t)\right)
\le \sum_{\ell=1}^{\Lsw}\Phi_\ell(t),
\]
where we used $1-\prod_{\ell=1}^{\Lsw} a_\ell\le \sum_{\ell=1}^{\Lsw}(1-a_\ell)$ with
$a_\ell=\bP\!\big(F_\ell(t)\,\big|\,\cap_{i<\ell}F_i(t)\big)\in[0,1]$.
Using \cref{assump:concentration-appendix} and $\df_\ell\le\df$, $\log d_\ell\le\log d$, $t^2=\epsilon^2/(16(\Lsw+1)^2)$, we have
\begin{align*}
\sum_{\ell=1}^{\Lsw } \Phi_{\ell}(t) \leq C_m \Lsw  \exp \left\{-c \epsilon^2 \frac{p_{\min} \lambda_{\min}^2}{\sigma^2 \df\Lsw^2 \log d \infnorm{\cT^\star}^2} \right\}.
\end{align*}
\end{proof}

%%%%%%%%%%%%%%%%%%%%%%%%%%%%%%%%%%%

\subsubsection{Proof of Lemma \ref{lem:gap-indep-stage2}}
\label{sec:proof of lemma gap-indep-stage2}
For each round $\ell \le \Lsw$, define
\[
v_\ell \;:=\; \max_{\boldsymbol i\in \cA_\ell}\cT^{(\ell)}_{\boldsymbol i},
\qquad
\hat v_\ell \;:=\; \max_{\boldsymbol i\in \cA_\ell}\widehat{\cT}^{(\ell)}_{\boldsymbol i},
\qquad
\hat{\boldsymbol i}^{(\ell)}\in\arg\max_{\boldsymbol i\in \cA_\ell}\widehat{\cT}^{(\ell)}_{\boldsymbol i}.
\]
Here, $v_\ell$ denotes the maximum entry of the (round-$\ell$) true effect tensor over the round-specific candidate set $\cA_\ell$.
Likewise, $\hat v_\ell$ is the maximum entry of the estimated effect tensor $\widehat{\cT}^{(\ell)}$ over $\cA_\ell$. At each Stage-I round $\ell$, pick a deterministic maximizer $\hat{\boldsymbol{i}}^{(\ell)} \in \arg \max _{\boldsymbol{i} \in \cA^{(\ell)}}{\widehat{\cT_{\boldsymbol{i}}}}^{(\ell)}$ 
When pruning mode $k$, if ties at the cutoff occur, break ties so that $\hat{i}_k^{(\ell)} \in \cA_k^{(\ell+1)}$.
(This is always feasible because $\hat{\mu}_{k, \hat{i}_k^{(\ell)}}^{(\ell)}=\hat{v}_{\ell}$ is a top score; if $\hat{i}_k^{(\ell)}$ would be excluded only due to ties, swap it in with another tied index.)
	\begin{lemma}[Stage-I value retention via estimated maximizer]\label{lem:stage1-retain}
On the event $F_\ell(t)$ defined at \cref{equ:def of events}, we have
\[
v_{\ell+1}\ \ge\ v_\ell - 2t.
\]
Consequently, on $E_{\mathrm I}(t)=\bigcap_{\ell=1}^{\Lsw}F_\ell(t)$,
\[
v_{\Lsw+1}\ \ge\ \mu_\star - 2\Lsw\,t.
\]
\end{lemma}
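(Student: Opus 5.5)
The plan is to show that the estimated maximizer $\hat{\boldsymbol i}^{(\ell)}$ survives the coordinate-wise median prune in every mode. Its true value then lower-bounds $v_{\ell+1}$, and on $F_\ell(t)$ that true value is within $2t$ of $v_\ell$.

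\textbf{Step 1 (survival of $\hat{\boldsymbol i}^{(\ell)}$).} Fix a mode $k$ and write $\hat{\boldsymbol i}^{(\ell)}=(\hat i_1,\dots,\hat i_m)$. By definition of the fiber-maxima, $\hat\mu^{(\ell)}_{k,\hat i_k}\ge \widehat{\cT}^{(\ell)}_{\hat{\boldsymbol i}^{(\ell)}}=\hat v_\ell$. Since every $\hat\mu^{(\ell)}_{k,i}\le \hat v_\ell$, we get $\hat\mu^{(\ell)}_{k,\hat i_k}=\hat v_\ell$, so $\hat i_k$ attains the largest score in mode $k$. Because the prune keeps the top $d_\ell/2\ge 1$ indices, and ties are broken in favor of $\hat i_k$ as stipulated just before the lemma, we have $\hat i_k\in\cA_k^{(\ell+1)}$. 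Each mode is pruned using the same $\widehat{\cT}^{(\ell)}$ and the same round-$\ell$ active set $\cA^{(\ell)}$. Hence the pruning in different modes does not interact, and $\hat{\boldsymbol i}^{(\ell)}\in\cA_1^{(\ell+1)}\times\cdots\times\cA_m^{(\ell+1)}=\cA^{(\ell+1)}$.

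\textbf{Step 2 (value comparison).} Let $\boldsymbol i^\star_\ell\in\arg\max_{\cA^{(\ell)}}\cT^{(\ell)}$. On $F_\ell(t)$ every entry of $\widehat{\cT}^{(\ell)}$ is within $t$ of the corresponding entry of $\cT^{(\ell)}$ on $\cA^{(\ell)}$. Therefore
\[
v_{\ell+1}\ \ge\ \cT^\star_{\hat{\boldsymbol i}^{(\ell)}}\ \ge\ \widehat{\cT}^{(\ell)}_{\hat{\boldsymbol i}^{(\ell)}}-t\ \ge\ \widehat{\cT}^{(\ell)}_{\boldsymbol i^\star_\ell}-t\ \ge\ \cT^\star_{\boldsymbol i^\star_\ell}-2t\ =\ v_\ell-2t.
\]
The first inequality uses Step 1 together with the fact that $\cT^{(\ell+1)}$ is the restriction of $\cT^\star$ to $\cA^{(\ell+1)}$. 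If $\ell=\Lsw$, $v_{\Lsw+1}$ is the best true value on the surviving set, which is exactly $\tilde\mu_\star$.

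\textbf{Step 3 (iteration).} On $E_{\mathrm I}(t)$ every $F_\ell(t)$ holds, so we can chain the bound from $\ell=1$ to $\Lsw$. Since $\cA^{(1)}=[d]^m$ gives $v_1=\mu_\star$, telescoping yields $v_{\Lsw+1}\ge\mu_\star-2\Lsw t$.

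The only delicate point is Step 1. We must make sure the pruning rule in Algorithm~\ref{alg:TwoStage_TensorVector} uses scores computed over the round-$\ell$ set $\cA^{(\ell)}$ for all modes, and not sequentially updated sets. We also need the tie-breaking convention so that $\hat i_k$ is retained even if many indices share the maximal score. Once those are checked, the rest is a direct triangle-inequality argument.
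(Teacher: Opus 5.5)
Your proposal is correct and follows essentially the same route as the paper. You show that the estimated maximizer $\hat{\boldsymbol i}^{(\ell)}$ survives the prune in every mode, since its fiber score equals $\hat v_\ell$ and ties are broken in its favor. You then bound its true value within $2t$ of $v_\ell$ on $F_\ell(t)$ and telescope from $v_1=\mu_\star$. Your direct comparison against a true maximizer $\boldsymbol i^\star_\ell$ is simply the unpacked form of the paper's appeal to the Lipschitz property of the max.
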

Proof in Section \ref{sec:Proof of Lemma stage1-retain}. 
\begin{proof}
We prove the two parts in order.

\medskip
\noindent\textbf{Proof of (i).}
Fix a realization of the Stage-I history such that the surviving set is
\(\cA_{\Lsw+1}=A\), where \(|A|=\Kps\). Conditional on this realization,
Stage II is exactly the standard Sequential Halving procedure on the
deterministic arm set \(A\), using only the Stage-II samples. Its best arm value is
\[
\tilde\mu_\star(A)
:=
\max_{\boldsymbol i\in A}\cT^\star_{\boldsymbol i}.
\]
Since \(\cT^{(\Lsw+1)}\) is just the restriction of \(\cT^\star\) to \(A\), we also have
\[
\tilde\mu_\star(A)
=
\max_{\boldsymbol i\in A}\cT^{(\Lsw+1)}_{\boldsymbol i}.
\]

Therefore, by Theorem~7 of \cite{zhao2023revisiting}, for every \(x>0\),
\[
\bP\!\left(
\tilde\mu_\star(A)-\mu_{J_N}>x
\,\middle|\,
\cA_{\Lsw+1}=A
\right)
\;\le\;
3\,\Lps\,
\exp\!\left\{
-\,x^2\frac{\Nps}{32\,\sigma^2\,\Kps\,\Lps}
\right\}.
\]
The right-hand side is uniform over all realizations \(A\) with \(|A|=\Kps\).
Now take conditional expectation over the random survivor set \(\cA_{\Lsw+1}\)
given \(E_{\mathrm I}(t)\). Using the tower property,
\begin{align*}
\bP\!\left(
\tilde\mu_\star-\mu_{J_N}>x
\,\middle|\,
E_{\mathrm I}(t)
\right)
&=
\bE\!\left[
\bP\!\left(
\tilde\mu_\star-\mu_{J_N}>x
\,\middle|\,
\cA_{\Lsw+1}
\right)
\,\middle|\,
E_{\mathrm I}(t)
\right] \\
&\le
3\,\Lps\,
\exp\!\left\{
-\,x^2\frac{\Nps}{32\,\sigma^2\,\Kps\,\Lps}
\right\}.
\end{align*}
This proves part (i).

\medskip
\noindent\textbf{Proof of (ii).}
By Lemma~\ref{lem:stage1-retain}, on the event \(E_{\mathrm I}(t)\),
\[
v_{\Lsw+1}\ge \mu_\star-2\Lsw t.
\]
Since
\[
\tilde\mu_\star
=
\max_{\boldsymbol i\in \cA_{\Lsw+1}}
\cT^{(\Lsw+1)}_{\boldsymbol i}
=
v_{\Lsw+1},
\]
it follows that on \(E_{\mathrm I}(t)\),
\[
\tilde\mu_\star\ge \mu_\star-2\Lsw t.
\]
Under the assumption \(2\Lsw t\le \epsilon/2\), we obtain on \(E_{\mathrm I}(t)\),
\[
\tilde\mu_\star\ge \mu_\star-\frac{\epsilon}{2}.
\]
Hence, again on \(E_{\mathrm I}(t)\),
\[
\mu_\star-\mu_{J_N}>\epsilon
\quad\Longrightarrow\quad
\tilde\mu_\star-\mu_{J_N}
>
\epsilon-\frac{\epsilon}{2}
=
\frac{\epsilon}{2}.
\]
Equivalently,
\[
\{\mu_\star-\mu_{J_N}>\epsilon\}\cap E_{\mathrm I}(t)
\subseteq
\left\{\tilde\mu_\star-\mu_{J_N}>\frac{\epsilon}{2}\right\}\cap E_{\mathrm I}(t).
\]
Taking conditional probabilities given \(E_{\mathrm I}(t)\), we get
\[
\bP\!\left(
\mu_\star-\mu_{J_N}>\epsilon
\,\middle|\,
E_{\mathrm I}(t)
\right)
\le
\bP\!\left(
\tilde\mu_\star-\mu_{J_N}>\frac{\epsilon}{2}
\,\middle|\,
E_{\mathrm I}(t)
\right).
\]
Now apply part (i) with \(x=\epsilon/2\):
\begin{align*}
\bP\!\left(
\mu_\star-\mu_{J_N}>\epsilon
\,\middle|\,
E_{\mathrm I}(t)
\right)
&\le
3\,\Lps\,
\exp\!\left\{
-\,(\epsilon/2)^2\frac{\Nps}{32\,\sigma^2\,\Kps\,\Lps}
\right\} \\
&=
3\,\Lps\,
\exp\!\left\{
-\,\epsilon^2\frac{\Nps}{128\,\sigma^2\,\Kps\,\Lps}
\right\}.
\end{align*}
This proves part (ii).

Finally, if \(t=\epsilon/(4(\Lsw+1))\), then
\[
2\Lsw t
=
\frac{\epsilon\Lsw}{2(\Lsw+1)}
\le
\frac{\epsilon}{2},
\]
so the displayed condition is satisfied.
\end{proof}

\subsubsection{Proof of Lemma \ref{lem:stage1-retain}}\label{sec:Proof of Lemma stage1-retain}

\begin{proof}
The estimated maximizer survives the median prune in every mode.
For any mode $k$,
\[
\hat\mu^{(\ell)}_{k,\hat i^{(\ell)}_k}
=\max_{\bj}\widehat{\cT}^{(\ell)}_{(\hat i^{(\ell)}_k,\bj)}
=\widehat{\cT}^{(\ell)}_{\hat{\boldsymbol i}^{(\ell)}}
=\hat v_\ell,
\]
which is the largest entry in the list $\{\hat\mu^{(\ell)}_{k,i}: i\in\cA_k^{(\ell)}\}$ and hence at least its median. Therefore $\hat i^{(\ell)}_k\in\cA_k^{(\ell+1)}$ for all $k$, so $\hat{\boldsymbol i}^{(\ell)}\in\cA^{(\ell+1)}$.

On $F_\ell(t)$, Lemma~\ref{lem:max-Lip} yields $|\widehat{\cT}^{(\ell)}_{\boldsymbol i}-\cT^{(\ell)}_{\boldsymbol i}|\le t$ for all $\boldsymbol i$, hence
\[
\cT^{(\ell)}_{\hat{\boldsymbol i}^{(\ell)}}\ \ge\ \widehat{\cT}^{(\ell)}_{\hat{\boldsymbol i}^{(\ell)}}-t
=\max_{\boldsymbol i\in \cA_\ell}\widehat{\cT}^{(\ell)}_{\boldsymbol i}-t
\ \ge\ \max_{\boldsymbol i \in \cA_\ell}\cT^{(\ell)}_{\boldsymbol i}-2t
= v_\ell-2t.
\]
where we used $\max_{\boldsymbol i}\widehat{\cT}^{(\ell)}_{\boldsymbol i}\ge \max_{\boldsymbol i}\cT^{(\ell)}_{\boldsymbol i}-t$ and Lipschitzness of max function.
Since $\hat{\boldsymbol i}^{(\ell)}\in\cA^{(\ell+1)}$ and $\cT^{(\ell+1)}$ is the restriction of $\cT^\star$ to $\cA^{(\ell+1)}$,
\(
\cT^{(\ell)}_{\hat{\boldsymbol i}^{(\ell)}}=\cT^{(\ell+1)}_{\hat{\boldsymbol i}^{(\ell)}}
\),
so
\[
v_{\ell+1}
=\max_{\boldsymbol i\in\cA^{(\ell+1)}} \cT^{(\ell+1)}_{\boldsymbol i}
\ \ge\
\cT^{(\ell+1)}_{\hat{\boldsymbol i}^{(\ell)}}
=\cT^{(\ell)}_{\hat{\boldsymbol i}^{(\ell)}}
\ \ge\
v_\ell - 2t.
\]
Iterating over $\ell=1,\dots,\Lsw$ and noting $v_1=\mu_\star$ gives $v_{\Lsw+1}\ge \mu_\star-2\Lsw t$.
\end{proof}

\begin{lemma}[Fiber-max is 1-Lipschitz in the restricted sup-norm]\label{lem:max-Lip}
Fix a Stage-I round \(\ell\in[\Lsw]\), a mode \(k\in[m]\), and an index \(i\in \cA_k^{(\ell)}\). Define
\[
\hat\mu^{(\ell)}_{k,i}
:=\max_{\bj\in \prod_{k'\neq k}\cA_{k'}^{(\ell)}}
\widehat{\cT}^{(\ell)}_{(i,\bj)},
\qquad
\mu^{(\ell)}_{k,i}
:=\max_{\bj\in \prod_{k'\neq k}\cA_{k'}^{(\ell)}}
\cT^{(\ell)}_{(i,\bj)}.
\]
Then
\[
\bigl|\hat\mu^{(\ell)}_{k,i}-\mu^{(\ell)}_{k,i}\bigr|
\;\le\;
\Ainfnorm{\widehat{\cT}-\cT^{\star}}.
\]
In particular, on the event
\[
F_\ell(t):=\left\{\Ainfnorm{\widehat{\cT}-\cT^{\star}}\le t\right\},
\]
we have
\[
\bigl|\hat\mu^{(\ell)}_{k,i}-\mu^{(\ell)}_{k,i}\bigr|\le t.
\]
\end{lemma}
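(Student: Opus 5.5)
The statement to prove is Lemma~\ref{lem:max-Lip}. The plan is the standard observation that a maximum over a fixed finite index set is $1$-Lipschitz with respect to the sup-norm. Both $\hat\mu^{(\ell)}_{k,i}$ and $\mu^{(\ell)}_{k,i}$ are maxima over the same finite set $J:=\prod_{k'\neq k}\cA_{k'}^{(\ell)}$, which is nonempty because each active set has $d_\ell\ge 1$ elements. For every $\bj\in J$ the entry $(i,\bj)$ lies in the active set $\cA^{(\ell)}$, since $i\in\cA_k^{(\ell)}$. Hence
\[
\bigl|\widehat{\cT}^{(\ell)}_{(i,\bj)}-\cT^{(\ell)}_{(i,\bj)}\bigr|\le \delta:=\Ainfnorm{\widehat{\cT}-\cT^{\star}}
\]
by the definition of the restricted sup-norm as a maximum over $\cA^{(\ell)}$.

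The key steps, in order, are as follows. First, take a maximizer $\bj^\star\in J$ of the true fiber, so that $\mu^{(\ell)}_{k,i}=\cT^{(\ell)}_{(i,\bj^\star)}$. Then
\[
\hat\mu^{(\ell)}_{k,i}\ge \widehat{\cT}^{(\ell)}_{(i,\bj^\star)}\ge \cT^{(\ell)}_{(i,\bj^\star)}-\delta=\mu^{(\ell)}_{k,i}-\delta.
\]
Second, take a maximizer $\hat\bj\in J$ of the estimated fiber. Symmetrically,
\[
\mu^{(\ell)}_{k,i}\ge \cT^{(\ell)}_{(i,\hat\bj)}\ge \widehat{\cT}^{(\ell)}_{(i,\hat\bj)}-\delta=\hat\mu^{(\ell)}_{k,i}-\delta.
\]
Combining the two one-sided inequalities gives $|\hat\mu^{(\ell)}_{k,i}-\mu^{(\ell)}_{k,i}|\le\delta$. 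The "in particular" clause then follows immediately, since $\delta\le t$ on $F_\ell(t)$ by definition.

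No step is a genuine obstacle. The only point requiring care is bookkeeping: the comparison must use only entries inside $\cA^{(\ell)}$, so that the restricted sup-norm, rather than the global one, suffices. It also relies on $\cT^{(\ell)}$ agreeing with $\cT^\star$ on $\cA^{(\ell)}$, which holds because $\cT^{(\ell)}$ is defined as the restriction of $\cT^\star$. The same argument with $J$ replaced by all of $\cA^{(\ell)}$ yields $|\hat v_\ell-v_\ell|\le\delta$, which is the form used in the proof of Lemma~\ref{lem:stage1-retain}.
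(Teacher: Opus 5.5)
Your proof is correct and follows essentially the same route as the paper: both show that the maximum over the finite fiber $\prod_{k'\neq k}\mathcal{A}_{k'}^{(\ell)}$ is $1$-Lipschitz in the restricted sup-norm, via two symmetric one-sided inequalities. The only cosmetic difference is that the paper bounds every entry pointwise and then maximizes both sides, whereas you evaluate at explicit maximizers; either works because the fiber is finite and nonempty.
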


\subsubsection{Proof of Lemma \ref{lem:max-Lip}}
\begin{proof}
Let
\[
S:=\prod_{k'\neq k}\cA_{k'}^{(\ell)},
\qquad
A_{\bj}:=\widehat{\cT}^{(\ell)}_{(i,\bj)},
\qquad
B_{\bj}:=\cT^{(\ell)}_{(i,\bj)}
\quad (\bj\in S).
\]
Set
\[
\Delta:=\max_{\bj\in S}|A_{\bj}-B_{\bj}|.
\]
For any \(\bj\in S\),
\[
A_{\bj}\le B_{\bj}+|A_{\bj}-B_{\bj}|\le B_{\bj}+\Delta.
\]
Taking maxima over \(\bj\in S\) yields
\[
\max_{\bj\in S} A_{\bj}\le \max_{\bj\in S} B_{\bj}+\Delta.
\]
By symmetry (swapping \(A\) and \(B\)),
\[
\max_{\bj\in S} B_{\bj}\le \max_{\bj\in S} A_{\bj}+\Delta.
\]
Combining the two inequalities gives
\[
\left|\max_{\bj\in S}A_{\bj}-\max_{\bj\in S}B_{\bj}\right|\le \Delta.
\]
Substituting the definitions of \(A_{\bj}\) and \(B_{\bj}\), we obtain
\[
\bigl|\hat\mu^{(\ell)}_{k,i}-\mu^{(\ell)}_{k,i}\bigr|
\le
\max_{\bj\in S}
\bigl|\widehat{\cT}^{(\ell)}_{(i,\bj)}-\cT^{(\ell)}_{(i,\bj)}\bigr|.
\]
Since every \((i,\bj)\in \cA^{(\ell)}\), the right-hand side is bounded by
\[
\Ainfnorm{\widehat{\cT}-\cT^{\star}},
\]
which proves the claim. The final statement follows immediately from the definition of \(F_\ell(t)\).
\end{proof}

\newpage
\subsection{Proof of Theorem \ref{thm:gap-dep}}
\label{sec:proof of gap-dep thm}
\subsubsection*{Proof sketch (Theorem~\ref{thm:gap-dep}).}
Fix $\eps>0$ and a switch round $\Lsw\le \ell'(\eps)$.

\smallskip
\noindent\textbf{Stage~I (tensor rounds): safety of globally good rows.}
For each round $\ell$, define the separation margin $D_\ell(\eps)$ between any active
globally $\eps/2$-good row and the lower band of rows with stagewise rank $>d_{\ell+2}$.
Lemma~\ref{lem:stage-sep} shows that on $G_\ell$ (no good rows have been dropped so far),
\[
D_\ell(\eps)\ \ge\ \tfrac12\,\Delta_{d_{\ell+2}}\ \ge\ \eps/2.
\]
Define the roundwise accuracy event
\[
E_\ell:=\Big\{\Ainfnorm{\widehat{\cT}^{(\ell)}-\cT^{(\ell)}}\le \tfrac14\,D_\ell(\eps)\Big\}.
\]
Using the fiber-max Lipschitz property (Lemma~\ref{lem:max-Lip}), $E_\ell$ implies that empirical row
scores preserve the strict ordering between globally $\eps/2$-good rows and the lower band,
so the median prune cannot discard any globally $\eps/2$-good row. Formally,
Lemma~\ref{lem:stageI-safety} shows $G_\ell\cap E_\ell\subseteq G_{\ell+1}$, yielding the recursion
\[
\bP(G_{\Lsw+1}^c)\ \le\ \sum_{\ell=1}^{\Lsw}\bP(E_\ell^c\cap G_\ell).
\]
Assumption~\ref{assump:concentration-appendix} then bounds each term via the conditional tail function
$\Phi_\ell(\cdot)$ evaluated at $D_\ell(\eps)/4$, which is in turn controlled by
$\Delta_{d_{\ell+2}}/8$ and the Stage~I hardness $H_{2,\mathrm I}$, producing the Stage~I failure bound.

\smallskip
\noindent\textbf{Stage~II (vector rounds): reduction to standard Sequential Halving.}
On the event $G_{\Lsw+1}$, the global maximizer entry $x_\star$ remains in the post-switch active set
$\cA^{(\Lsw+1)}$. Conditional on $\mathcal F_{\Lsw+1}$, Stage~II observations are fresh and independent,
so Algorithm~\ref{alg:VectorSH} is a standard $K=|\cA^{(\Lsw+1)}|$-arm Sequential Halving instance.
We apply the general Sequential Halving tail bound \cite{zhao2023revisiting}
and upper bound its instance complexity by the row-based surrogates
$M_{\mathrm{row}}(\eps)$ and $H_{\mathrm{row}}(\eps)$. This comparison uses the structural
fact that $\eps$-good entries must lie in $\eps$-good rows (Lemma~\ref{lem:good-in-good-rows})
and the counting bound (Corollary~\ref{cor:counting}), together with the guarantee that row-goodness
survives Stage~I (Lemma~\ref{lem:row-good-induces-entry-good}).

\smallskip
\noindent\textbf{Combine Stage~I and Stage~II.}
Finally, we decompose
\[
\bP(\mu_\star-\mu_{J_N}>\eps)
\ \le\ 
\bP(G_{\Lsw+1}^c)\ +\ \bP(\mu_\star-\mu_{J_N}>\eps\mid G_{\Lsw+1}),
\]
and plug in the Stage~I and Stage~II bounds to obtain the claimed gap-dependent guarantee.

\subsubsection{Notation}
The dyadic construction yields the standard brackets
\[
d_{\ell_k'+3}\ \le\ g_k(\eps)\ \le\ d_{\ell_k'+2},
\qquad
d_{\ell'+3}\ \le\ g_{\max}(\eps)\ \le\ d_{\ell'+2},
\]
and the aligned gap bounds
\[
\Delta_{k,\,d_{\ell_k'+3}}\ \le\ \eps\ <\ \Delta_{k,\,d_{\ell_k'+2}},
\qquad
\Delta_{\,d_{\ell'+3}}\ \le\ \eps\ <\ \Delta_{\,d_{\ell'+2}}.
\]

Consider \TStage\ first. Fix a switch round \(\Lsw\) with \(\Lsw\le \ell'\).
(Here we use the convention that Stage-I keeps exactly the top \(d_\ell/2\) rows in each mode, with ties broken arbitrarily, so that \(|\cA_k^{(\ell)}|=d_\ell\) for all \(\ell\le \Lsw+1\).)

For each mode \(k\), let the sorted row scores at Stage~\(\ell\) be
\[
\mu_{k,(1)}^{(\ell)} \ge \mu_{k,(2)}^{(\ell)} \ge \cdots \ge \mu_{k,(d_\ell)}^{(\ell)}.
\]
For each mode \(k\) and tolerance \(\eta>0\), define the global and stagewise good-row counts
\[
g_k(\eta)
\;:=\;
\bigl|\{i\in[d]:\ \mu_{k,i}\ge \mu_{k,(1)}-\eta\}\bigr|,
\qquad
g_{\max}(\eta)\;:=\;\max_{k\in[m]} g_k(\eta),
\]
\[
g_{k,\ell}(\eta)
\;:=\;
\bigl|\{i\in\cA_k^{(\ell)}:\ \mu_{k,i}\ge \mu_{k,(1)}-\eta\}\bigr|,
\qquad
g_{\max,\ell}(\eta)\;:=\;\max_{k\in[m]} g_{k,\ell}(\eta),
\]
where \(\cA_k^{(\ell)}\subseteq[d]\) denotes the active index set in mode \(k\) at the beginning of round \(\ell\).
At initialization, \(\cA_k^{(1)}=[d]\) for all \(k\), hence
\[
g_{k,1}(\eta)=g_k(\eta),
\qquad
g_{\max,1}(\eta)=g_{\max}(\eta).
\]

For each mode \(k\) and stage \(\ell\), define the minimal margin between any \(\eps/2\)-good row and any
row in the lower band (stagewise ranks \(>d_{\ell+2}\)) at Stage~\(\ell\):
\[
D_{k,\ell}(\eps)
:=
\min_{\substack{1\le i\le g_{k,\ell}(\eps/2)\\ d_{\ell+2}+1\le j\le d_\ell}}
\big(\mu^{(\ell)}_{k,(i)}-\mu^{(\ell)}_{k,(j)}\big),
\qquad
D_\ell(\eps):=\min_{k\in[m]} D_{k,\ell}(\eps).
\]

For every globally $\eps / 2$-good row $i$ in mode $k$ that is active at round $\ell$, there exists a maximizer $x^{(i)}$ for $\mu_{k, i}$ whose coordinates are all globally $\eps / 2$-good (hence active under $G_{\ell}$ ). Equivalently, on $G_{\ell}, \mu_{k, i}^{(\ell)}=\mu_{k, i}$ for all globally $\eps/ 2$-good $i \in \cA_k^{(\ell)}$.
By Lemma~\ref{lem:stage-sep}, for \(\ell\le \ell'\) we have
\begin{equation}\label{eq:stageI-D-lower}
D_{k,\ell}(\eps)\ \ge\ \tfrac12\,\Delta_{k,d_{\ell+2}}
\ \ge\ \tfrac12\,\Delta_{d_{\ell+2}}
\ \ge\ \tfrac{\eps}{2},
\qquad
D_\ell(\eps)\ \ge\ \tfrac12\,\Delta_{d_{\ell+2}}.
\end{equation}

We now define the events that \TStage\ preserves all globally \(\eps/2\)-good rows.
For each round \(\ell\ge 1\), let
\[
E_\ell
\;:=\;
\Big\{
\Ainfnorm{\widehat{\cT}^{(\ell)}-\cT^{(\ell)}}
\ \le\ \tfrac14\,D_\ell(\eps)
\Big\},
\]
where \(\cT^{(\ell)}\) is the restriction of \(\cT^\star\) to the active set \(\cA^{(\ell)}\), and
\(\Ainfnorm{\cdot}\) denotes the sup-norm over \(\cA^{(\ell)}\).

Define inductively the “no-loss so far’’ events
\[
G_1 := \Omega,
\qquad
G_{\ell+1}
\;:=\;
G_\ell
\cap
\Big\{\text{all globally \(\eps/2\)-good rows in every mode survive round \(\ell\)}\Big\}.
\]
Equivalently, \(G_{\ell+1}\) is the event that, after completing rounds \(1,\dots,\ell\), every row
\(i\) with \(\mu_{k,i}\ge \mu_{k,(1)}-\eps/2\) in any mode \(k\) is still active, i.e.
\[
G_{\ell+1}
\quad\Longrightarrow\quad
g_{k,\ell+1}(\eps/2) = g_k(\eps/2)
\text{ for all }k\in[m],
\quad\text{and hence}\quad
g_{\max,\ell+1}(\eps/2) = g_{\max}(\eps/2).
\]

The next lemma shows that the roundwise restricted \(\ell_\infty\) control is sufficient to guarantee \(G_{\ell+1}\).
Recall for every mode $k\in[m]$, one has $\mu_{k,(1)}=\mu_\star$, where
$\mu_\star:=\max_{\bidx\in[d]^m}\cT^\star_{\bidx}$ and
$\mu_{k,i}:=\max_{\bj\in[d]^{m-1}}\cT^\star_{(i,\bj)}$.
Indeed, $\mu_{k,i}\le \mu_\star$ for all $i$, while choosing $i=x^\star_k$ from a maximizer
$x^\star\in\arg\max_{\bidx}\cT^\star_{\bidx}$ gives $\mu_{k,i}\ge \cT^\star_{x^\star}=\mu_\star$.

\begin{lemma}[Stage-I safety under restricted $\ell_\infty$ control]
\label{lem:stageI-safety}
Fix $1\le \ell\le \Lsw\le \ell'$.
Recall that Stage~I prunes each mode-$k$ active set $\cA_k^{(\ell)}$ by keeping exactly the
$\lceil|\cA_k^{(\ell)}|/2\rceil$ indices with the largest empirical scores
$\{\hat\mu^{(\ell)}_{k,i}: i\in \cA_k^{(\ell)}\}$, breaking ties arbitrarily so that
$|\cA_k^{(\ell+1)}|=d_{\ell+1}$.

Let $G_\ell$ denote the event that all {globally} $\eps/2$-good rows in every mode
are active at the beginning of round $\ell$, i.e.
\[
G_\ell
:=\bigcap_{k=1}^m
\Big\{\{i\in[d]:\ \mu_{k,i}\ge \mu_{k,(1)}-\eps/2\}\subseteq \cA_k^{(\ell)}\Big\},
\]
and let
\[
E_\ell
:=\Big\{\ \Ainfnorm{\widehat{\cT}^{(\ell)}-\cT^{(\ell)}}\le \tfrac14\,D_\ell(\eps)\ \Big\},
\]
where $\cT^{(\ell)}$ is the restriction of $\cT^\star$ to $\cA^{(\ell)}$ and
$D_\ell(\eps):=\min_{k\in[m]}D_{k,\ell}(\eps)$ with
$D_{k,\ell}(\eps)$ defined as in Lemma~\ref{lem:stage-sep}.
(Under $\ell\le\ell'$, Lemma~\ref{lem:stage-sep} ensures $D_\ell(\eps)>0$.)

Then on the event $G_\ell\cap E_\ell$, no globally $\eps/2$-good row is discarded at round
$\ell$ in any mode. Equivalently,
\[
G_\ell\cap E_\ell\ \subseteq\ G_{\ell+1}.
\]
Consequently,
\[
G_{\ell+1}^c\ \subseteq\ G_\ell^c\ \cup\ E_\ell^c,
\qquad\text{and}\qquad
G_{\ell+1}^c\cap G_\ell\ \subseteq\ E_\ell^c\cap G_\ell.
\]
In particular,
\begin{equation}\label{eq:stageI-event-inclusion}
\bP\!\left(G_{\ell+1}^c \mid G_\ell\right)
\ \le\
\bP\!\left(E_\ell^c \mid G_\ell\right).
\end{equation}
\end{lemma}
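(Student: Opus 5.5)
The plan is to work on $G_\ell\cap E_\ell$, fix a mode $k$ and a globally $\eps/2$-good row $i$, and show that its empirical score $\hat\mu^{(\ell)}_{k,i}$ is strictly larger than the empirical score of every active row whose stagewise true rank exceeds $d_{\ell+2}$. Once this holds, the count does the rest. The lower band $\{j:\ \text{stagewise rank}>d_{\ell+2}\}$ has $d_\ell-d_{\ell+2}=\tfrac34 d_\ell$ members. A row that beats all of them has at most $d_{\ell+2}-1<d_\ell/2$ rows ranked above it. Hence it sits among the top $d_{\ell+1}=d_\ell/2$ empirical scores regardless of how ties are broken, so it survives the keep-the-top-half prune. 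Applying this to every mode gives $G_\ell\cap E_\ell\subseteq G_{\ell+1}$. The remaining inclusions follow by elementary set algebra: $G_{\ell+1}^c\subseteq G_\ell^c\cup E_\ell^c$, and intersecting with $G_\ell$ gives the second one. Dividing by $\bP(G_\ell)$, when it is positive, gives \eqref{eq:stageI-event-inclusion}.

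\textbf{Step 1 (the true gap).} On $G_\ell$, every globally $\eps/2$-good row $i$ of mode $k$ is active. By the remark preceding the lemma, $\mu^{(\ell)}_{k,i}=\mu_{k,i}$, because some maximizer of $\mu_{k,i}$ has all coordinates globally $\eps/2$-good and hence active. Row $i$ therefore appears among the stagewise top $g_{k,\ell}(\eps/2)$ rows. I need to check that its stagewise rank is at most $g_{k,\ell}(\eps/2)$: any active row with a larger restricted score has $\mu_{k,j}\ge\mu^{(\ell)}_{k,j}>\mu_{k,i}$, so it is itself globally $\eps/2$-good. With that in hand, the definition of $D_{k,\ell}(\eps)$ gives $\mu^{(\ell)}_{k,i}-\mu^{(\ell)}_{k,j}\ge D_{k,\ell}(\eps)\ge D_\ell(\eps)$ for every $j$ in the lower band. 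Lemma~\ref{lem:stage-sep} guarantees $D_\ell(\eps)\ge \eps/2>0$, since $\ell\le\ell'$.

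\textbf{Step 2 (transfer to empirical scores).} On $E_\ell$, Lemma~\ref{lem:max-Lip} gives $|\hat\mu^{(\ell)}_{k,i}-\mu^{(\ell)}_{k,i}|\le D_\ell/4$ for every active $i$. Therefore
\[
\hat\mu^{(\ell)}_{k,i}-\hat\mu^{(\ell)}_{k,j}\ \ge\ D_\ell-\tfrac{D_\ell}{2}\ =\ \tfrac{D_\ell}{2}\ >\ 0,
\]
which is a strict inequality. This strictness is what makes the tie-breaking irrelevant.

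The main obstacle is Step 1: making sure the restricted scores $\mu^{(\ell)}_{k,\cdot}$ of good rows coincide with their global values, and that good rows really occupy the top stagewise ranks, so that the definition of $D_{k,\ell}$ applies to them. If the rank identification is delicate, I would instead prove directly that $\mu^{(\ell)}_{k,i}\ge \mu_{k,(1)}-\eps/2$ while every lower-band row satisfies $\mu^{(\ell)}_{k,j}\le\mu_{k,(1)}-\Delta_{k,d_{\ell+2}}$. The second bound holds because restricted scores are at most global scores, and at least $d_{\ell+2}$ active rows outrank $j$ whose global scores dominate. This route yields a margin of at least $\Delta_{k,d_{\ell+2}}-\eps/2\ge\tfrac12\Delta_{k,d_{\ell+2}}$, which can play the role of $D_\ell$.
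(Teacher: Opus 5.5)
Your proposal is correct and follows essentially the same route as the paper's proof. On $G_\ell$ you identify $\mu^{(\ell)}_{k,i}=\mu_{k,i}$ for good rows and place them within the top $g_{k,\ell}(\eps/2)$ stagewise ranks, since any row with a larger restricted score is itself globally $\eps/2$-good, which is the same fact the paper uses. You then invoke $D_{k,\ell}(\eps)$, transfer to empirical scores via Lemma~\ref{lem:max-Lip} with slack $D_\ell(\eps)/4$, count that each good row lands within the top $d_{\ell+2}\le d_{\ell+1}$ empirical ranks, and finish with the same set algebra. Your primary rank-identification argument goes through, so the fallback is not needed. It would in any case only prove the claim for a smaller event $E_\ell$ whose threshold uses $\Delta_{d_{\ell+2}}-\eps/2$ in place of $D_\ell(\eps)$.
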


% ============================================================
% Standard recursion / union bound step (polished)
% ============================================================
By Lemma~\ref{lem:stageI-safety}, we have
\[
G_{\ell+1}^c\ \subseteq\ G_\ell^c\ \cup\ (E_\ell^c\cap G_\ell),
\]
and hence
\[
\bP(G_{\ell+1}^c)
\ \le\
\bP(G_\ell^c)\ +\ \bP(E_\ell^c\cap G_\ell).
\]
Iterating this recursion over $\ell=1,\dots,\Lsw$ yields the Stage-I failure union bound
\begin{equation}\label{eq:stageI-failure-union}
\bP(G_{\Lsw+1}^c)
\ \le\
\sum_{\ell=1}^{\Lsw}\bP(E_\ell^c\cap G_\ell).
\end{equation}

We now invoke the roundwise conditional tensor-completion tail bound in Assumption~\ref{assump:concentration-appendix}.
Let \(\{\mathcal F_\ell\}_{\ell\ge1}\) be the filtration from that assumption.
Since \(\cA^{(\ell)}\) is \(\mathcal F_\ell\)-measurable, both \(D_\ell(\eps)\) and \(G_\ell\) are \(\mathcal F_\ell\)-measurable.
Therefore, by the tower property, Assumption~\ref{assump:concentration-appendix} and Equation \eqref{eq:stageI-D-lower},
\begin{align}
\bP(E_\ell^c\cap G_\ell)
&=
\mathbb E\!\left[\mathbf 1_{G_\ell}\bP(E_\ell^c\mid \mathcal F_\ell)\right]\nonumber\\
&\le
\mathbb E\!\left[\mathbf 1_{G_\ell}\,
\Phi_\ell\!\left(\tfrac14 D_\ell(\eps)\right)\right]\nonumber\\
&\le
\Phi_\ell\!\left(\tfrac18 \Delta_{d_{\ell+2}} \right),\label{eq:stageI-cond-step}
\end{align}
where
\[
\Phi_\ell(t)
=
C_m \exp\!\left\{
-\,c\,t^2\,
\frac{p_\ell\,\lambda_{\min}^{\,2}}
{\sigma^2\,\df_\ell\,\log d_\ell\,\infnorm{\cT^\star}^2}
\right\}.
\]

Applying \eqref{eq:stageI-cond-step} with \(t=\tfrac18 \Delta_{d_{\ell+2}}\) and using \eqref{eq:stageI-D-lower} yields
\begin{align}
\bP(E_\ell^c\cap G_\ell)
&\le
C_m \exp\!\Bigg\{-\,\frac{c}{64}\,\Delta_{d_{\ell+2}}^{2}\,
\frac{p_\ell\lambda_{\min}^2}{\sigma^2\,\df_\ell\,\log d_\ell \,\infnorm{\cT^\star}^2}\Bigg\}.
\label{eq:stageI-E-ell-bound}
\end{align}

Define the Stage-I instance complexity
\[
H_{2,\mathrm{I}}
\;:=\;
\max_{2\le t\le d}\ \frac{t}{\Delta_t^{2}}.
\]
Then for every $\ell$ we have $\Delta_{d_{\ell+2}}^2 \ge d_{\ell+2}/H_{2,\mathrm{I}}$.
Since $d_{\ell+2}=d_\ell/4$ and
\[
\df_\ell \;=\; m(d_\ell r-r^2)+r^m \;\le\; mr\,d_\ell+r^m,
\]
it follows that
\begin{equation}\label{eq:ratio-df-lower}
\frac{d_{\ell+2}}{\df_\ell}
\;=\;
\frac{d_\ell/4}{\df_\ell}
\;\ge\;
\frac{d_\ell/4}{mr\,d_\ell+r^m}
\;=\;
\frac{1}{4\left(mr+\frac{r^m}{d_\ell}\right)}.
\end{equation}
Plugging $\Delta_{d_{\ell+2}}^2 \ge d_{\ell+2}/H_{2,\mathrm{I}}$ and
\eqref{eq:ratio-df-lower} into \eqref{eq:stageI-E-ell-bound} yields: there exists a universal
constant $c_0>0$ such that, for every $\ell\in[\Lsw]$,
\begin{equation}\label{eq:stageI-E-ell-bound-simple}
\bP(E_\ell^c\cap G_\ell)
\;\le\;
C_m\exp\!\left\{
-\,c_0\,
\frac{p_\ell\,\lambda_{\min}^2}
{\sigma^2\,H_{2,\mathrm{I}}\,\log d_\ell\,\infnorm{\cT^\star}^2}
\cdot
\frac{1}{\left(mr+\frac{r^m}{d_\ell}\right)}
\right\}.
\end{equation}

To obtain a uniform bound over rounds, note that $d_\ell\ge d_{\min}:=\min_{\ell\le\Lsw}d_\ell$
and $\log d_\ell\le \log d$ for all $\ell\le\Lsw$. Hence,
\[
mr+\frac{r^m}{d_\ell}
\ \le\
mr+\frac{r^m}{d_{\min}},
\qquad
\log d_\ell \ \le\ \log d,
\]
and therefore \eqref{eq:stageI-E-ell-bound-simple} implies the bound
\begin{equation}\label{eq:stageI-E-ell-bound-uniform}
\bP(E_\ell^c\cap G_\ell)
\;\le\;
C_m\exp\!\left\{
-\,c_0\,
\frac{p_\ell\,\lambda_{\min}^2}
{\sigma^2\,H_{2,\mathrm{I}}\,\log d\,\infnorm{\cT^\star}^2}
\cdot
\frac{1}{\left(mr+\frac{r^m}{d_{\min}}\right)}
\right\}.
\end{equation}

Let \(p_{\min}:=\min_{1\le \ell\le \Lsw} p_\ell\). Combining \eqref{eq:stageI-failure-union} with
\eqref{eq:stageI-E-ell-bound-uniform} gives
\begin{equation}\label{eq:stageI-final}
\bP\!\big(G_{\Lsw+1}^c\big)
\ \le\
C_m \Lsw\,
\exp\!\left\{
-\,c_0\,
\frac{p_{\min}\,\lambda_{\min}^2}
{\sigma^2\,H_{2,\mathrm{I}}\,\log d\,\infnorm{\cT^\star}^2}
\cdot
\frac{1}{\left(mr+\frac{r^m}{d_{\min}}\right)}
\right\}.
\end{equation}

Equivalently, if the Stage-I sampling rates satisfy
\begin{equation}\label{eq:stageI-budget-cond}
p_{\min}
\ \ge\
K\,\Big(\frac{\sigma}{\lambda_{\min}}\Big)^2\,
\bigl(\log d\bigr)\,
\infnorm{\cT^\star}^{\,2}\,
H_{2,\mathrm{I}}\,
\Bigl(mr+\frac{r^m}{d_{\min}}\Bigr)
\end{equation}
for a sufficiently large universal constant \(K>0\), then
\(\bP(G_{\Lsw+1}^c)\le C_m\Lsw\exp\{-\widetilde{\Theta}(1)\}\), with the exponent scaling as in
\eqref{eq:stageI-final}.

\subsubsection{Comparison to the vector case.}
In the classical vector best-arm identification setting, one observes noisy
rewards for $n$ independent arms with means $(\mu_i)_{i=1}^n$, and Stage-I
elimination procedures (such as successive rejects or sequential halving) operate
directly on arm-wise empirical means. To bound the probability that too many
$\eps$-good arms are eliminated, the analysis typically conditions on the
population order, enumerates all possible choices of (i) a set $A$ of $\eps$-good
arms that are mistakenly discarded and (ii) a set $B$ of suboptimal arms that are
incorrectly retained, and then applies a union bound over all such pairs
$(A,B)$. This combinatorial union produces binomial factors of the form
$\binom{g(\eps)}{t}\binom{n-q}{s-q+t}$ and the tail bounds are driven by
arm-wise concentration inequalities (often using independence across arms).

In our tensor setting, Stage~I does not estimate each row or arm separately.
Instead, we fit a single tensor completion model $\widehat{\cT}^{(\ell)}$ at each
round and assume a {global} $\ell_\infty$ tail bound on its error
(Assumption~\ref{assump:completion-tail}). By the Lipschitz property of the
row-wise marginals (Lemma~\ref{lem:max-Lip}), this global bound simultaneously
controls the deviations of {all} row scores $\hat\mu^{(\ell)}_{k,i}$ in all
modes. As a consequence, on the event that
$\infnorm{\widehat{\cT}^{(\ell)}-\cT^\star}$ is small, no $\eps/2$-good row can
be misordered below any lower-band row in any mode. Thus every $\eps/2$-good row
must survive Stage~$\ell$, and the Stage-I failure event $G_{\ell+1}^c$ is already
contained in the single ``large $\ell_\infty$ error'' event. This allows us to
bypass the combinatorial union over subsets $(A,B)$ used in the vector case and to control Stage~I directly via the global tensor tail bound.

%%%%%%%%%%%%%%%%%%%%%%%%%%%%%%%%%%%%%%%%%%%%%%%%%%%%%%%%%%%%%
\subsection{The Second Stage (vector rounds $\ell\ge \Lsw +1$)}\label{sec:stageII}

In this subsection, we analyze the entrywise (vector) Sequential Halving phase.
Throughout, we work on the high-probability event
\[
G_{\Lsw+1}
\ :=\
\Bigl\{\text{no globally $\eps/2$-good row in any mode is dropped during \TStage}\Bigr\},
\]
from \cref{lem:stageI-safety}.

\begin{corollary}[Row-based Stage-II guarantee for Sequential Halving]
\label{cor:stageII-row-based}
Work on the event $G_{\Lsw+1}$, so that the globally best entry $x_\star$ belongs to the
post-switch active set $\cA^{(\Lsw+1)}$. Assume that, conditionally on $\mathcal F_{\Lsw+1}$,
Stage-II pulls are fresh and independent, and each observation noise is $\sigma^2$-sub-Gaussian.

Recall the row gaps and row good-count:
\[
\Delta_t:=\min_{k\in[m]}\bigl(\mu_{k,(1)}-\mu_{k,(t)}\bigr),
\qquad
g_{\max}(\eta):=\max_{k\in[m]}\bigl|\{i:\ \mu_{k,i}\ge \mu_{k,(1)}-\eta\}\bigr|.
\]
Define the row-based complexities
\[
M_{\mathrm{row}}(\eps)
:=\max_{t\ \ge\ \left\lceil\left(g_{\max }(\eps / 2)+1\right)^{1 / m}\right\rceil}\ \frac{t^m}{\Delta_t^{\,2}},
\qquad
H_{\mathrm{row}}(\eps)
:=\frac{1}{g_{\max}(\eps/2)}\,M_{\mathrm{row}}(\eps).
\]
Then there exist universal constants $C_0,c_0>0$ such that if the Stage-II budget satisfies
\(
\Nps \ \ge\ C_0\, M_{\mathrm{row}}(\eps),
\)
the Sequential Halving output $J_N$ on $\cA^{(\Lsw+1)}$ obeys
\[
\bP\!\Big(\mu_\star-\mu_{J_N}>\eps \ \Big|\ G_{\Lsw+1}\Big)
\ \le\
\exp\!\Big(-\widetilde{\Theta}\big(\Nps/H_{\mathrm{row}}(\eps)\big)\Big),
\]
where $\widetilde{\Theta}(\cdot)$ hides absolute constants and logarithmic factors in
$|\cA^{(\Lsw+1)}|$.
\end{corollary}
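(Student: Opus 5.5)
Conditional on $\mathcal F_{\Lsw+1}$ and on $G_{\Lsw+1}$, the post-switch set $A:=\cA^{(\Lsw+1)}$ is deterministic. Stage~II uses only fresh, independent sub-Gaussian pulls, so it is exactly standard Sequential Halving on $K=|A|$ arms with budget $\Nps$. I would therefore invoke the general instance-dependent $\eps$-good tail bound for Sequential Halving of \cite{zhao2023revisiting}, the same source as Theorem~7 used in Lemma~\ref{lem:gap-indep-stage2}. In the form I need, it states the following. Let $\delta_{(1)}\ge\delta_{(2)}\ge\cdots$ be the sorted arm means in $A$ and $g_A(\eps)$ the number of $\eps/2$-good arms. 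Then $\bP(\tilde\mu_\star-\mu_{J_N}>\eps)\le \exp(-\widetilde\Theta(\Nps/H_A(\eps)))$, with $H_A(\eps)=g_A(\eps)^{-1}M_A(\eps)$ and $M_A(\eps)=\max_{s>g_A(\eps)} s/(\tilde\mu_\star-\delta_{(s)})^2$, provided $\Nps\gtrsim M_A(\eps)$. The proof then reduces to two substitutions. First, on $G_{\Lsw+1}$ the global maximizer $x_\star$ survives, since each coordinate of $x_\star$ is a $0$-good row in its mode, so $\tilde\mu_\star=\mu_\star$. Second, I must show $M_A(\eps)\lesssim M_{\mathrm{row}}(\eps)$ and $g_A(\eps)\gtrsim g_{\max}(\eps/2)$, which gives $H_A\lesssim H_{\mathrm{row}}$.

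\textbf{Lower bound on the good count.} I would use Lemma~\ref{lem:row-good-induces-entry-good}. Take the mode $k$ attaining $g_{\max}(\eps/2)$. For each globally $\eps/2$-good row $i$ in that mode, the maximizer $x^{(i)}$ of $\mu_{k,i}$ has all coordinates globally $\eps/2$-good, hence active on $G_{\Lsw+1}$. That entry lies in $A$ with value at least $\mu_\star-\eps/2$. Distinct rows $i$ give distinct entries, so $g_A(\eps)\ge g_{\max}(\eps/2)$.

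\textbf{Upper bound on the complexity.} This is the main obstacle: converting entrywise gaps into row gaps. By Lemma~\ref{lem:good-in-good-rows}, an entry with gap $\le\eta$ must lie in rows that are $\eta$-good in every mode. Hence the number of entries of $A$ whose value is at least $\mu_\star-\eta$ is at most $\prod_k g_k(\eta)\le g_{\max}(\eta)^m$, which is Corollary~\ref{cor:counting}. Now fix a rank $s$ and set $\eta=\mu_\star-\delta_{(s)}$. Then $s\le g_{\max}(\eta)^m$. Writing $t:=g_{\max}(\eta)$, we get $s\le t^m$. Because $\Delta_t=\min_k\Delta_{k,t}$, the hardest mode has its $t$-th row within $\eta$ of the top, so $\Delta_t\le\eta$. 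Together,
\[
\frac{s}{(\mu_\star-\delta_{(s)})^2}\le \frac{t^m}{\Delta_t^2}.
\]
The restriction $s>g_A(\eps)\ge g_{\max}(\eps/2)$ forces $t^m\ge s\ge g_{\max}(\eps/2)+1$, that is, $t\ge\lceil(g_{\max}(\eps/2)+1)^{1/m}\rceil$. This matches the range of the maximum in $M_{\mathrm{row}}$, so $M_A\le M_{\mathrm{row}}$.

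The delicate points are two. One is the monotonicity and rounding bookkeeping needed for $\Delta_t\le\eta$; I may need $g_{\max}$ attained at a single mode, possibly losing a constant. The other is whether the cited SH bound is stated with $\eps/2$ or $\eps$ thresholds; I expect constant-factor adjustments to be absorbed into $C_0$ and $\widetilde\Theta$. Finally, the condition $\Nps\ge C_0M_{\mathrm{row}}$ implies the SH budget condition, and integrating over $\mathcal F_{\Lsw+1}$ on $G_{\Lsw+1}$ by the tower property yields the stated conditional bound, with logarithmic factors in $|A|$ (from the $\log_2 K$ rounds) hidden in $\widetilde\Theta$.
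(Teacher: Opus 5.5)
Your proposal is correct and follows essentially the same route as the paper. You condition on $\mathcal F_{\Lsw+1}$ so that Stage~II is plain Sequential Halving, apply the instance-dependent tail bound, lower-bound the post-switch good count via Lemma~\ref{lem:row-good-induces-entry-good}, and upper-bound the post-switch complexity via the counting bound of Corollary~\ref{cor:counting}. The only cosmetic difference is the choice of index: you take $t=g_{\max}(\eta_s)$, with $\eta_s$ the $s$-th smallest post-switch gap, and check $s\le t^m$ and $\Delta_t\le\eta_s$ directly, whereas the paper fixes $t=\lceil s^{1/m}\rceil$ and shows the gap quantile is at least $\Delta_t$ by contraposition. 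Both choices place $t$ in the index range of $M_{\mathrm{row}}(\eps)$, and the hedges you raise (attainment of $g_{\max}$ at a single mode, $\eps$ versus $\eps/2$ in the range of $s$) cost nothing.
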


\begin{proof}
Condition on $\mathcal F_{\Lsw+1}$ and on the event $G_{\Lsw+1}$.
Let
\[
\cS:=\cA^{(\Lsw+1)}, \qquad K:=|\cS|.
\]
For each $x\in\cS$, write its (true) mean as $\mu_x:=\cT^\star_x$, and let
\[
\mu_\star:=\max_{x\in\cS}\mu_x.
\]
(Under $G_{\Lsw+1}$, the global maximizer $x_\star$ lies in $\cS$, hence $\mu_\star$ is the global
optimum $\max_{x\in[d]^m}\cT^\star_x$.)
Define the arm gaps
\[
\delta_x:=\mu_\star-\mu_x \ \ge 0,
\]
and the post-switch good-count function
\[
g_{\post}(\eta):=\bigl|\{x\in\cS:\ \delta_x\le \eta\}\bigr|,\qquad \eta\ge 0.
\]
By the Stage-II sampling assumption, conditional on $\mathcal F_{\Lsw+1}$ the observations in Stage II
match the usual stochastic bandit model with $K$ arms, means $\{\mu_x\}_{x\in\cS}$, and
$\sigma^2$-sub-Gaussian noise.

Define the generalized inverse (gap-quantile) function
\[
q_{\post}(s):=\inf\{\eta\ge 0:\ g_{\post}(\eta)\ge s\},\qquad s\in\{1,\dots,K\}.
\]
This is the smallest gap threshold that covers at least $s$ post-switch arms.
(Equivalently, if one sorts post-switch gaps increasingly, $q_{\post}(s)$ equals the $s$-th smallest
gap, but we will not use that notation.)

Now define the post-switch “undivided” and “divided” hardness parameters
\[
M_{\post}(\eps):=\max_{s\ \ge\ g_{\post}(\eps)+1}\ \frac{s}{q_{\post}(s)^2},
\qquad
H_{\post}(\eps):=\frac{1}{g_{\post}(\eps/2)}\,M_{\post}(\eps).
\]

By \cite{zhao2023revisiting}, there exist universal constants
$c,C>0$ such that Sequential Halving with budget $\Nps$ satisfies
\[
\bP\!\Big(\mu_\star-\mu_{J_N}>\eps \ \Big|\ \mathcal F_{\Lsw+1}\Big)
\ \le\
\exp\!\Big(-\widetilde{\Theta}\big(\Nps/H_{\post}(\eps)\big)\Big)
\quad\text{provided that}\quad
\Nps \ \ge\ C\,M_{\post}(\eps).
\]
Thus it remains to show that (i) $M_{\post}(\eps)\le M_{\mathrm{row}}(\eps)$ and
(ii) $H_{\post}(\eps)\le H_{\mathrm{row}}(\eps)$ on $G_{\Lsw+1}$.
This will imply that the assumed budget condition $\Nps\ge C_0 M_{\mathrm{row}}(\eps)$
ensures $\Nps\ge C M_{\post}(\eps)$ (for $C_0$ large enough), and then the exponent
$\Nps/H_{\post}(\eps)$ is at least $\Nps/H_{\mathrm{row}}(\eps)$.

We will show that for every integer $s\ge 1$, with $t:=\lceil s^{1/m}\rceil$,
\begin{equation}\label{eq:qpost-lower}
q_{\post}(s)\ \ge\ \Delta_t.
\end{equation}

Fix $\eta\ge 0$. Define also the global row good-counts (as in your notation)
\[
g_k(\eta):=\bigl|\{i\in[d]:\ \mu_{k,i}\ge \mu_{k,(1)}-\eta\}\bigr|.
\]
By the structural fact “good entries live in good rows” (Lemma~\ref{lem:good-in-good-rows}
and Corollary~\ref{cor:counting}), the number of entries whose gap is at most $\eta$ is bounded by
\[
g_{\post}(\eta)\ \le\ \prod_{k=1}^m g_k(\eta).
\]
Now fix $t\in[d]$ and any $\eta<\Delta_t=\min_k(\mu_{k,(1)}-\mu_{k,(t)})$.
Then for each mode $k$, at most $t-1$ rows can have row-gap $\le \eta$, i.e. $g_k(\eta)\le t-1$.
Hence
\[
g_{\post}(\eta)\ \le\ \prod_{k=1}^m g_k(\eta)\ \le\ (t-1)^m.
\]
Take $t=\lceil s^{1/m}\rceil$, so that $(t-1)^m < s$. Then for every $\eta<\Delta_t$ we have
$g_{\post}(\eta) < s$, which by the definition of $q_{\post}(s)$ implies $q_{\post}(s)\ge \Delta_t$.
This proves \eqref{eq:qpost-lower}.

Let $s\ge g_{\post}(\eps)+1$ and set $t=\lceil s^{1/m}\rceil$.
By \eqref{eq:qpost-lower}, $q_{\post}(s)\ge \Delta_t$, hence
\[
\frac{s}{q_{\post}(s)^2}
\ \le\
\frac{s}{\Delta_t^2}
\ \le\
\frac{t^m}{\Delta_t^2}.
\]
Therefore,
\[
M_{\post}(\eps)
=\max_{s\ge g_{\post}(\eps)+1}\frac{s}{q_{\post}(s)^2}
\ \le\
\max_{t\ge \lceil (g_{\post}(\eps)+1)^{1/m}\rceil}\frac{t^m}{\Delta_t^2}.
\]
On $G_{\Lsw+1}$, we have (your Lemma~\ref{lem:row-good-induces-entry-good})
\[
g_{\post}(\eps)\ \ge\ g_{\post}(\eps/2)\ \ge\ g_{\max}(\eps/2),
\]
so $\lceil (g_{\post}(\eps)+1)^{1/m}\rceil \ge \lceil (g_{\max}(\eps/2)+1)^{1/m}\rceil$.
Thus the tail-max over $t\ge \lceil (g_{\post}(\eps)+1)^{1/m}\rceil$ is bounded by the larger
tail-max over $t\ge \lceil (g_{\max}(\eps/2)+1)^{1/m}\rceil$, yielding
\[
M_{\post}(\eps)\ \le\ M_{\mathrm{row}}(\eps)
\qquad\text{on }G_{\Lsw+1}.
\]

Again on $G_{\Lsw+1}$, Lemma~\ref{lem:row-good-induces-entry-good} gives
$g_{\post}(\eps/2)\ge g_{\max}(\eps/2)$, hence
\[
H_{\post}(\eps)
=\frac{1}{g_{\post}(\eps/2)}\,M_{\post}(\eps)
\ \le\
\frac{1}{g_{\max}(\eps/2)}\,M_{\mathrm{row}}(\eps)
=
H_{\mathrm{row}}(\eps).
\]

Choose $C_0$ large enough so that $\Nps\ge C_0 M_{\mathrm{row}}(\eps)$ implies
$\Nps\ge C M_{\post}(\eps)$. Then the Sequential Halving theorem yields, conditional on
$\mathcal F_{\Lsw+1}$,
\[
\bP\!\Big(\mu_\star-\mu_{J_N}>\eps \ \Big|\ \mathcal F_{\Lsw+1}\Big)
\ \le\
\exp\!\Big(-\widetilde{\Theta}\big(\Nps/H_{\post}(\eps)\big)\Big)
\ \le\
\exp\!\Big(-\widetilde{\Theta}\big(\Nps/H_{\mathrm{row}}(\eps)\big)\Big).
\]
Finally, since $G_{\Lsw+1}\in\mathcal F_{\Lsw+1}$, conditioning further on $G_{\Lsw+1}$ preserves the
same bound, proving the corollary.
\end{proof}

%%%%%%%%%%%%%%%%%%%%%%%%%%%%%%%%%%%%%%%%%%%%% Proof of Lemma %%%%%%%%%%%%%%%%%%%%%%%%%%%%%%%%%%%%%%%%%%%%%
\subsubsection{Proof of Lemma \ref{lem:stageI-safety}}\label{sec:proof of lemma stageI-safety}
\begin{proof}
Fix a mode \(k\in[m]\) and a round \(\ell\le \Lsw\). Work on the event \(G_\ell\cap E_\ell\).

Let \(i\in[d]\) be any globally \(\eps/2\)-good row in mode \(k\), i.e.
\[
\mu_{k,i}\ \ge\ \mu_{k,(1)}-\eps/2.
\]
Since \(G_\ell\) holds, this row is still active, so \(i\in \cA_k^{(\ell)}\).

By definition of \(\mu_{k,i}\), there exists an index tuple
\[
x^{(i)}=(x_1^{(i)},\dots,x_m^{(i)})\in[d]^m
\quad\text{with}\quad
x_k^{(i)}=i
\]
such that
\[
\mu_{x^{(i)}}=\mu_{k,i}.
\]
Since \(\mu_{k,i}\ge \mu_\star-\eps/2\), the entry \(x^{(i)}\) is \(\eps/2\)-good:
\[
\mu_\star-\mu_{x^{(i)}}\le \eps/2.
\]
For each mode \(h\in[m]\),
\[
\mu_{h,x_h^{(i)}}\ \ge\ \mu_{x^{(i)}}
\quad\Longrightarrow\quad
\mu_\star-\mu_{h,x_h^{(i)}}\ \le\ \mu_\star-\mu_{x^{(i)}}\ \le\ \eps/2.
\]
Hence every coordinate \(x_h^{(i)}\) is a globally \(\eps/2\)-good row in its mode. Since \(G_\ell\) holds, all such rows survive to round \(\ell\), so
\[
x^{(i)}\in \cA^{(\ell)}.
\]
Therefore the maximizer \(x^{(i)}\) is still feasible in the restricted row maximum, and thus
\[
\mu_{k,i}^{(\ell)}
=
\max_{\bj\in \prod_{k'\neq k}\cA_{k'}^{(\ell)}} \cT^\star_{(i,\bj)}
=
\mu_{k,i}.
\]

On the other hand, for any active row \(u\in \cA_k^{(\ell)}\),
\[
\mu_{k,u}^{(\ell)}
\le
\mu_{k,u},
\]
since the restricted maximization is over a subset of the full index set.

It follows that the globally \(\eps/2\)-good active rows in mode \(k\) are contained among the top
\(g_{k,\ell}(\eps/2)\) rows in the stagewise ordering
\[
\mu_{k,(1)}^{(\ell)} \ge \mu_{k,(2)}^{(\ell)} \ge \cdots \ge \mu_{k,(d_\ell)}^{(\ell)}.
\]
(Indeed, each globally \(\eps/2\)-good active row has restricted score at least \(\mu_{k,(1)}-\eps/2\), while any active row that is not globally \(\eps/2\)-good has restricted score strictly below \(\mu_{k,(1)}-\eps/2\).)

Fix any stagewise rank \(q\ge d_{\ell+2}+1\), and let \(j_q\in \cA_k^{(\ell)}\) denote a row index attaining the \(q\)-th stagewise score:
\[
\mu_{k,j_q}^{(\ell)} = \mu_{k,(q)}^{(\ell)}.
\]
From previous result, the row \(i\) has stagewise rank at most \(g_{k,\ell}(\eps/2)\). Hence, by the definition of \(D_{k,\ell}(\eps)\),
\[
\mu_{k,i}^{(\ell)}-\mu_{k,j_q}^{(\ell)}
\ \ge\
D_{k,\ell}(\eps).
\]
By \eqref{eq:stageI-D-lower},
\[
D_{k,\ell}(\eps)\ \ge\ D_\ell(\eps)\ >\ 0.
\]

By Lemma~\ref{lem:max-Lip} (applied on the active set),
\[
\big|\hat\mu^{(\ell)}_{k,u}-\mu^{(\ell)}_{k,u}\big|
\;\le\;
\Ainfnorm{\widehat{\cT}^{(\ell)}-\cT^{(\ell)}}
\qquad\text{for all }u\in \cA_k^{(\ell)}.
\]
On \(E_\ell\),
\[
\Ainfnorm{\widehat{\cT}^{(\ell)}-\cT^{(\ell)}} \le \frac{D_\ell(\eps)}{4}.
\]
Therefore, for every \(q\ge d_{\ell+2}+1\),
\begin{align*}
\hat\mu^{(\ell)}_{k,i}-\hat\mu^{(\ell)}_{k,j_q}
&\ge
\big(\mu^{(\ell)}_{k,i}-\mu^{(\ell)}_{k,j_q}\big)
-\big|\hat\mu^{(\ell)}_{k,i}-\mu^{(\ell)}_{k,i}\big|
-\big|\hat\mu^{(\ell)}_{k,j_q}-\mu^{(\ell)}_{k,j_q}\big|\\
&\ge
D_{k,\ell}(\eps)-2\cdot \frac{D_\ell(\eps)}{4}\\
&\ge
D_\ell(\eps)-\frac{D_\ell(\eps)}{2}
\;=\;
\frac{D_\ell(\eps)}{2}
\;>\;0.
\end{align*}
Thus every globally \(\eps/2\)-good active row in mode \(k\) has strictly larger empirical score than every row whose stagewise true rank is \(q\ge d_{\ell+2}+1\).

Hence any globally \(\eps/2\)-good active row can be outranked (in empirical score) only by rows among the top \(d_{\ell+2}\) stagewise true ranks. Therefore each such row has empirical rank at most \(d_{\ell+2}\), and in particular it is among the top \(d_{\ell+1}=d_\ell/2\) rows retained by the pruning step (since \(d_{\ell+2}=d_\ell/4 < d_{\ell+1}\)).

Because the choice of mode \(k\) and globally \(\eps/2\)-good row \(i\) was arbitrary, no globally \(\eps/2\)-good row is discarded in any mode at round \(\ell\). This proves
\[
G_\ell\cap E_\ell \subseteq G_{\ell+1}.
\]
Taking complements gives
\[
G_{\ell+1}^c \subseteq G_\ell^c \cup E_\ell^c.
\]
Finally, intersecting both sides with \(G_\ell\) yields
\[
G_{\ell+1}^c\cap G_\ell \subseteq E_\ell^c\cap G_\ell.
\]
If \(\bP(G_\ell)>0\), divide by \(\bP(G_\ell)\) to obtain \eqref{eq:stageI-event-inclusion}. (If \(\bP(G_\ell)=0\), the conditional inequality is trivial.)
\end{proof}

\begin{lemma}[Stagewise separation of good rows from the lower band]
\label{lem:stage-sep}
Fix a mode $k\in[m]$ and a Stage-I round $\ell$ such that $\ell\le \ell'$
(where $\ell'$ is defined so that $\min_{h\in[m]}\Delta_{h,d_{\ell+2}}>\eps$).
Let the active row set in mode $k$ at the beginning of round $\ell$ be
$\cA_k^{(\ell)}\subseteq[d]$ with $|\cA_k^{(\ell)}|=d_\ell$.

Define the {global} row scores
\[
\mu_{k,i}:=\max_{\bj\in[d]^{m-1}} \cT^\star_{(i,\bj)},
\qquad i\in[d],
\]
and the {restricted (stagewise)} row scores
\[
\mu_{k,i}^{(\ell)}:=\max_{\bj\in\prod_{h\neq k}\cA_h^{(\ell)}} \cT^\star_{(i,\bj)},
\qquad i\in\cA_k^{(\ell)}.
\]
Let $\mu_{k,(1)}\ge\cdots\ge\mu_{k,(d)}$ be the order statistics of the global scores
$\{\mu_{k,i}:i\in[d]\}$, and define the global mode-$k$ gaps
\[
\Delta_{k,t}:=\mu_{k,(1)}-\mu_{k,(t)},\qquad t=1,\dots,d.
\]

Let $\mu_{k,(1)}^{(\ell)}\ge\cdots\ge \mu_{k,(d_\ell)}^{(\ell)}$
denote the order statistics of the {restricted} scores
$\{\mu_{k,i}^{(\ell)}: i\in\cA_k^{(\ell)}\}$.

For $\eta>0$, define the {active good-row count measured by global scores}
\[
g_{k,\ell}(\eta):=\bigl|\{i\in\cA_k^{(\ell)}:\ \mu_{k,i}\ge \mu_{k,(1)}-\eta\}\bigr|.
\]

Let $G_\ell$ be the event that {all globally $\eps/2$-good rows in every mode
are still active at the beginning of round $\ell$}, i.e.
for every mode $h\in[m]$,
\[
\{i\in[d]:\ \mu_{h,i}\ge \mu_{h,(1)}-\eps/2\}\ \subseteq\ \cA_h^{(\ell)}.
\]

Assume $g_{k,\ell}(\eps/2)\ge 1$ (this holds on $G_\ell$).
Define the stagewise separation margin
\[
D_{k,\ell}(\eps)
:=
\min_{\substack{1\le i\le g_{k,\ell}(\eps/2)\\ d_{\ell+2}+1\le j\le d_\ell}}
\Big(\mu_{k,(i)}^{(\ell)}-\mu_{k,(j)}^{(\ell)}\Big).
\]
Since $\ell \leq \ell^{\prime}(\eps)$ implies $\Delta_{k, d_{\ell+2}}>\eps$ for every $k$, any $\eps / 2$-good row must have rank at most $d_{\ell+2}-1$. Hence on $G_{\ell}$,
\begin{align*}
g_{k, \ell}(\eps / 2)=g_k(\eps / 2) \leq d_{\ell+2}-1 \text {, }
\end{align*}
so the definition of $D_{k, \ell}(\eps)$ is nondegenerate and $D_{k, \ell}(\eps)>0$.
If $t \geq d_{\ell+2}$, then $\Delta_{k, t} \geq \Delta_{k, d_{\ell+2}}>\eps>\eps / 2$, so rank $t$ cannot be $\eps / 2$-good.
Then on the event $G_\ell$,
\[
D_{k,\ell}(\eps)\ \ge\ \Delta_{k,d_{\ell+2}}-\frac{\eps}{2}
\ \ge\ \frac12\,\Delta_{k,d_{\ell+2}}
\ \ge\ \frac{\eps}{2}.
\]
Consequently, with $\Delta_t:=\min_{h\in[m]}\Delta_{h,t}$ and
$D_\ell(\eps):=\min_{h\in[m]} D_{h,\ell}(\eps)$, we have on $G_\ell$,
\[
D_\ell(\eps)\ \ge\ \Delta_{d_{\ell+2}}-\frac{\eps}{2}
\ \ge\ \frac12\,\Delta_{d_{\ell+2}}
\ \ge\ \frac{\eps}{2}.
\]
\end{lemma}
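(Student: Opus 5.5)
The plan is to bound the two sides of each difference in $D_{k,\ell}(\eps)$ separately, working throughout on $G_\ell$. The "upper" rows are the stagewise ranks $1,\dots,g_{k,\ell}(\eps/2)$; the "lower band" is the stagewise ranks $q\ge d_{\ell+2}+1$. For the upper rows I will show their restricted scores are at least $\mu_{k,(1)}-\eps/2$. For the lower band I will show its restricted scores are at most $\mu_{k,(1)}-\Delta_{k,d_{\ell+2}}$. Subtracting gives $D_{k,\ell}(\eps)\ge \Delta_{k,d_{\ell+2}}-\eps/2$. Since $\ell\le\ell'(\eps)$ gives $\Delta_{k,d_{\ell+2}}>\eps$, we get $\Delta_{k,d_{\ell+2}}-\eps/2\ge \tfrac12\Delta_{k,d_{\ell+2}}\ge \eps/2$. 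Taking $\min_h$ and using $\min_h(\Delta_{h,d_{\ell+2}}-\eps/2)=\Delta_{d_{\ell+2}}-\eps/2$ yields the statement for $D_\ell(\eps)$.

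\textbf{Step 1 (nondegeneracy).} Every row of global rank $t\ge d_{\ell+2}$ has gap $\Delta_{k,t}\ge\Delta_{k,d_{\ell+2}}>\eps$, so it is not $\eps/2$-good. Hence $g_k(\eps/2)\le d_{\ell+2}-1$. On $G_\ell$ all globally good rows are active, so $g_{k,\ell}(\eps/2)=g_k(\eps/2)$. Thus the index ranges $i\le g_{k,\ell}$ and $j\ge d_{\ell+2}+1$ are disjoint and nonempty, and the minimum defining $D_{k,\ell}(\eps)$ is well posed.

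\textbf{Step 2 (good rows keep their global score and fill the top stagewise ranks).} Take a globally $\eps/2$-good row $i$ and a global maximizer $x^{(i)}$ with $x^{(i)}_k=i$. Then $\cT^\star_{x^{(i)}}=\mu_{k,i}\ge\mu_\star-\eps/2$. For every mode $h$ we have $\mu_{h,x^{(i)}_h}\ge \cT^\star_{x^{(i)}}$, so each coordinate of $x^{(i)}$ is an $\eps/2$-good row in its own mode. On $G_\ell$ these rows are all active, so $x^{(i)}\in\cA^{(\ell)}$ and therefore $\mu^{(\ell)}_{k,i}=\mu_{k,i}\ge\mu_{k,(1)}-\eps/2$. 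An active row that is not globally good satisfies $\mu^{(\ell)}_{k,u}\le\mu_{k,u}<\mu_{k,(1)}-\eps/2$. Hence the top $g_{k,\ell}(\eps/2)$ stagewise ranks are exactly the good rows, and each of them has restricted score at least $\mu_{k,(1)}-\eps/2$.

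\textbf{Step 3 (lower band), which is the main obstacle.} Restriction can reorder rows, so a row's stagewise rank need not equal its global rank. I will therefore compare order statistics directly by a counting argument instead of tracking individual rows. Fix $q$ and let $v=\mu^{(\ell)}_{k,(q)}$. At least $q$ active rows have restricted score $\ge v$. Each of these rows has global score at least its restricted score, hence at least $v$. So at least $q$ rows have global score $\ge v$, which gives $\mu_{k,(q)}\ge v$. For $q\ge d_{\ell+2}+1$ this yields
\[
\mu^{(\ell)}_{k,(q)}\le\mu_{k,(q)}\le\mu_{k,(d_{\ell+2})}=\mu_{k,(1)}-\Delta_{k,d_{\ell+2}}.
\]
Combining this with Step 2 completes the chain of inequalities described in the first paragraph.
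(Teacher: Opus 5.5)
Your proof is correct and takes essentially the same route as the paper: on $G_\ell$ the maximizer of each globally $\eps/2$-good row lies in $\cA^{(\ell)}$, so that row keeps its global score; the restricted order statistics are then dominated by the global ones, and subtracting the two bounds gives the claim. Your counting argument in Step 3 is the paper's observation that the $j$-th largest of $a_i \le b_i$ is at most the $j$-th largest of $b_i$, written out explicitly, and your extra remark that the top $g_{k,\ell}(\eps/2)$ stagewise ranks are exactly the good rows is correct but not needed for the bound.
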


\begin{proof}
Fix $k$ and $\ell\le \ell'$, and work on the event $G_\ell$.

\medskip
Let $i\in\cA_k^{(\ell)}$ be globally $\eps/2$-good in mode $k$, i.e.
$\mu_{k,i}\ge \mu_{k,(1)}-\eps/2$.
Choose a maximizer $x^{(i)}\in[d]^m$ for $\mu_{k,i}$ with $x_k^{(i)}=i$ so that
\[
\cT^\star_{x^{(i)}}=\mu_{k,i}.
\]
For any mode $h\in[m]$,
\[
\mu_{h,x_h^{(i)}}
=\max_{\bj\in[d]^{m-1}}\cT^\star_{(x_h^{(i)},\bj)}
\ \ge\ \cT^\star_{x^{(i)}}
=\mu_{k,i}
\ \ge\ \mu_{k,(1)}-\eps/2.
\]
Since $\mu_{k,(1)}=\mu_\star$ and likewise $\mu_{h,(1)}=\mu_\star$,
this implies $\mu_{h,x_h^{(i)}}\ge \mu_{h,(1)}-\eps/2$ for every $h$,
so each coordinate $x_h^{(i)}$ is a globally $\eps/2$-good row in mode $h$.
By $G_\ell$, all such rows are active at round $\ell$, hence
$x^{(i)}\in \cA^{(\ell)}$.
Therefore the maximizer $x^{(i)}$ is feasible for the restricted maximization, and
\[
\mu_{k,i}^{(\ell)}
=\max_{\bj\in\prod_{h\neq k}\cA_h^{(\ell)}} \cT^\star_{(i,\bj)}
\ \ge\ \cT^\star_{x^{(i)}}=\mu_{k,i}.
\]
Since always $\mu_{k,i}^{(\ell)}\le \mu_{k,i}$ (restriction can only decrease a max),
we conclude $\mu_{k,i}^{(\ell)}=\mu_{k,i}$ for every globally $\eps/2$-good active row $i$.

\medskip
By definition of $g_{k,\ell}(\eps/2)$, there are at least
$g_{k,\ell}(\eps/2)$ indices $i\in\cA_k^{(\ell)}$ with
$\mu_{k,i}\ge \mu_{k,(1)}-\eps/2$.
By Step~1, each such index satisfies $\mu_{k,i}^{(\ell)}=\mu_{k,i}\ge \mu_{k,(1)}-\eps/2$.
Hence at least $g_{k,\ell}(\eps/2)$ restricted scores are $\ge \mu_{k,(1)}-\eps/2$,
which implies for every $1\le i\le g_{k,\ell}(\eps/2)$,
\begin{equation}\label{eq:top-restricted-lb}
\mu_{k,(i)}^{(\ell)}\ \ge\ \mu_{k,(1)}-\frac{\eps}{2}.
\end{equation}

\medskip
For $i\in\cA_k^{(\ell)}$ define $a_i:=\mu_{k,i}^{(\ell)}$ and $b_i:=\mu_{k,i}$.
Then $a_i\le b_i$ for all $i$.
Therefore, the $j$-th largest value among $\{a_i:i\in\cA_k^{(\ell)}\}$ is at most
the $j$-th largest value among $\{b_i:i\in\cA_k^{(\ell)}\}$, which is at most
the $j$-th largest value among $\{b_i:i\in[d]\}$, i.e.
\[
\mu_{k,(j)}^{(\ell)}\ \le\ \mu_{k,(j)}.
\]
Hence for any $j\ge d_{\ell+2}+1$,
\begin{equation}\label{eq:lower-restricted-ub}
\mu_{k,(j)}^{(\ell)}
\ \le\ \mu_{k,(j)}
\ \le\ \mu_{k,(d_{\ell+2}+1)}
\ \le\ \mu_{k,(d_{\ell+2})}
\ =\ \mu_{k,(1)}-\Delta_{k,d_{\ell+2}}.
\end{equation}

\medskip
Take any admissible pair
$1\le i\le g_{k,\ell}(\eps/2)$ and $d_{\ell+2}+1\le j\le d_\ell$.
Using \eqref{eq:top-restricted-lb} and \eqref{eq:lower-restricted-ub},
\[
\mu_{k,(i)}^{(\ell)}-\mu_{k,(j)}^{(\ell)}
\ \ge\
\Big(\mu_{k,(1)}-\frac{\eps}{2}\Big)
-\Big(\mu_{k,(1)}-\Delta_{k,d_{\ell+2}}\Big)
=
\Delta_{k,d_{\ell+2}}-\frac{\eps}{2}.
\]
Taking the minimum over all admissible pairs gives
\[
D_{k,\ell}(\eps)\ \ge\ \Delta_{k,d_{\ell+2}}-\frac{\eps}{2}.
\]

Finally, since $\ell\le \ell'$ implies $\Delta_{k,d_{\ell+2}}>\eps$ (by definition of $\ell'$),
we have
\[
\Delta_{k,d_{\ell+2}}-\frac{\eps}{2}
\ \ge\ \frac12\,\Delta_{k,d_{\ell+2}}
\ \ge\ \frac{\eps}{2}.
\]
This proves the first claim. Minimizing over $k$ yields the stated bound for $D_\ell(\eps)$.
\end{proof}

\begin{lemma}[Row-good rows induce surviving good entries]\label{lem:row-good-induces-entry-good}
On the event \(G_{\Lsw+1}\), for every mode \(k\in[m]\),
\[
g_{\post}\!\left(\tfrac{\eps}{2}\right)\ \ge\ g_k\!\left(\tfrac{\eps}{2}\right).
\]
Consequently,
\[
g_{\post}\!\left(\tfrac{\eps}{2}\right)\ \ge\ g_{\max}\!\left(\tfrac{\eps}{2}\right),
\qquad
g_{\post}(\eps)\ \ge\ g_{\post}\!\left(\tfrac{\eps}{2}\right)\ \ge\ g_{\max}\!\left(\tfrac{\eps}{2}\right).
\]
\end{lemma}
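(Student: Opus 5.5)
The plan is an injection from globally $\eps/2$-good rows of mode $k$ into the set of surviving $\eps/2$-good entries. The key tool is the maximizer-lifting argument already used in the proofs of Lemma~\ref{lem:stageI-safety} and Lemma~\ref{lem:stage-sep}.

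First, fix a mode $k$ and a row $i\in[d]$ with $\mu_{k,i}\ge \mu_{k,(1)}-\eps/2$. Pick a maximizer $x^{(i)}\in[d]^m$ of $\mu_{k,i}$ with $x^{(i)}_k=i$, so that $\cT^\star_{x^{(i)}}=\mu_{k,i}$. Recall that $\mu_{h,(1)}=\mu_\star$ for every mode $h$. For each mode $h$,
\[
\mu_{h,x^{(i)}_h}\ \ge\ \cT^\star_{x^{(i)}}\ \ge\ \mu_\star-\eps/2,
\]
so every coordinate $x^{(i)}_h$ is a globally $\eps/2$-good row in its mode. On $G_{\Lsw+1}$ all such rows lie in $\cA_h^{(\Lsw+1)}$, hence $x^{(i)}\in\cS=\cA^{(\Lsw+1)}$.

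Second, check that these entries count toward $g_{\post}(\eps/2)$. On $G_{\Lsw+1}$ the global maximizer $x_\star$ survives by the same lifting argument, so $\max_{x\in\cS}\mu_x=\mu_\star$. Therefore $\delta_{x^{(i)}}=\mu_\star-\mu_{k,i}\le\eps/2$. The map $i\mapsto x^{(i)}$ is injective, because distinct rows $i\neq i'$ give entries with distinct $k$-th coordinates. It follows that $g_{\post}(\eps/2)\ge g_k(\eps/2)$.

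Third, take the maximum over $k$ to get $g_{\post}(\eps/2)\ge g_{\max}(\eps/2)$. Monotonicity of $\eta\mapsto g_{\post}(\eta)$ then gives $g_{\post}(\eps)\ge g_{\post}(\eps/2)$.

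The only delicate point is ensuring that $\mu_\star$ in the definition of $\delta_x$ coincides with the global optimum. This is exactly why the global maximizer's survival on $G_{\Lsw+1}$ is established first. Everything else is the same lifting step as in Lemma~\ref{lem:stage-sep}.
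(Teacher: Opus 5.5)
Your proposal is correct and follows essentially the same route as the paper. It lifts each globally $\eps/2$-good row to a maximizing entry, shows that all its coordinates are $\eps/2$-good rows (the content of Lemma~\ref{lem:good-in-good-rows}) and therefore survive on $G_{\Lsw+1}$, and counts via the injection $i\mapsto x^{(i)}$. Your extra check that the global maximizer survives is valid but not needed: since $\max_{x\in\cS}\mu_x\le\mu_\star$, you already have $\delta_{x^{(i)}}\le\mu_\star-\mu_{k,i}\le\eps/2$.
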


\begin{proof}
Fix \(k\in[m]\), and work on \(G_{\Lsw+1}\). For each row index
\(i\in[d]\) with \(\mu_{k,i}\ge \mu_{k,(1)}-\eps/2\), by the definition of \(\mu_{k,i}\) there exists an entry
\[
x^{(i)}=(x_1^{(i)},\dots,x_m^{(i)})\in[d]^m
\quad\text{with}\quad
x_k^{(i)}=i
\quad\text{and}\quad
\mu_{x^{(i)}}=\mu_{k,i}\ge \mu_\star-\eps/2.
\]
Hence \(x^{(i)}\) is an \(\eps/2\)-good entry in the full tensor.

By Lemma~\ref{lem:good-in-good-rows}, every coordinate of \(x^{(i)}\) lies in an \(\eps/2\)-good row (in its corresponding mode). Since \(G_{\Lsw+1}\) is the event that all globally \(\eps/2\)-good rows survive Stage I, all coordinates of \(x^{(i)}\) remain active at the switch, and therefore
\[
x^{(i)}\in \cA^{(\Lsw+1)}.
\]
Thus \(x^{(i)}\) is a post-switch \(\eps/2\)-good entry, i.e.
\[
\delta_{x^{(i)}}\le \eps/2.
\]

Moreover, the map \(i\mapsto x^{(i)}\) is injective for fixed mode \(k\), because the \(k\)-th coordinate of \(x^{(i)}\) equals \(i\). Therefore the number of post-switch \(\eps/2\)-good entries is at least the number of \(\eps/2\)-good rows in mode \(k\):
\[
g_{\post}(\eps/2)\ \ge\ g_k(\eps/2).
\]
Since \(k\) was arbitrary, taking the maximum over \(k\) gives
\(g_{\post}(\eps/2)\ge g_{\max}(\eps/2)\). Finally \(g_{\post}(\eps)\ge g_{\post}(\eps/2)\) by monotonicity in \(\eps\).
\end{proof}

\begin{lemma}[Good entries live in good rows]\label{lem:good-in-good-rows}
Fix a mode \(k\in[m]\) and any index vector \(\fx=(x_1,\ldots,x_m)\in [d]^m\).
Then there exists an index \(t\in[d]\) such that \(\mu_{k,x_k}=\mu_{k,(t)}\), and hence
\[
\delta_{\fx}
\;=\;
\mu_\star-\mu_{\fx}
\;\ge\;
\mu_\star-\mu_{k,x_k}
\;=\;
\mu_{k,(1)}-\mu_{k,(t)}
\;=\;
\Delta_{k,t}.
\]
Consequently, if \(\delta_{\fx}\le \eps\), then necessarily \(\Delta_{k,t}\le \eps\).
In words, every \(\eps\)-good entry must lie in a mode-\(k\) row whose gap is at most \(\eps\).
\end{lemma}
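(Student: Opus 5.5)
The first claim, the existence of some $t\in[d]$ with $\mu_{k,x_k}=\mu_{k,(t)}$, is immediate from the definitions. The order statistics $\mu_{k,(1)}\ge\cdots\ge\mu_{k,(d)}$ are a rearrangement of the multiset $\{\mu_{k,i}\}_{i=1}^d$. The value $\mu_{k,x_k}$ belongs to this multiset, so it occupies some position $t$ in the sorted list. If there are ties, I will fix any such $t$; this choice does not affect $\Delta_{k,t}$, since tied positions carry the same value.

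The inequality chain rests on one elementary comparison: the row maximum dominates every entry in its row. By definition, $\mu_{k,x_k}=\max_{\bj\in[d]^{m-1}}\cT^\star_{(x_k,\bj)}$. Taking $\bj=(x_1,\ldots,x_{k-1},x_{k+1},\ldots,x_m)$ shows $\mu_{k,x_k}\ge\cT^\star_{\fx}=\mu_{\fx}$. Subtracting both sides from $\mu_\star$ gives
\[
\delta_{\fx}=\mu_\star-\mu_{\fx}\ \ge\ \mu_\star-\mu_{k,x_k}.
\]
Next I will use the identity $\mu_{k,(1)}=\mu_\star$, which the paper already verified just before Lemma~\ref{lem:stageI-safety}. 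Every $\mu_{k,i}\le\mu_\star$, and choosing $i$ as the $k$-th coordinate of a global maximizer attains $\mu_\star$. This turns $\mu_\star-\mu_{k,x_k}$ into $\mu_{k,(1)}-\mu_{k,(t)}=\Delta_{k,t}$, which completes the displayed chain.

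The consequence follows directly: if $\delta_{\fx}\le\eps$, then $\Delta_{k,t}\le\delta_{\fx}\le\eps$. Since $k$ was arbitrary, every coordinate of an $\eps$-good entry lies in an $\eps$-good row of its mode. This is the form used in Lemma~\ref{lem:row-good-induces-entry-good} and in the counting bound $g_{\post}(\eta)\le\prod_k g_k(\eta)$. The only point needing care is the identity $\mu_{k,(1)}=\mu_\star$. It requires the row maxima to be taken over the full index set $[d]^{m-1}$, not over a restricted active set; otherwise only $\mu^{(\ell)}_{k,(1)}\le\mu_\star$ would hold. Since the lemma is stated with the global scores $\mu_{k,i}$, there is no real obstacle, and the whole argument is a two-line monotonicity check.
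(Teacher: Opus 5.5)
Your proof is correct and follows the same argument the paper relies on. The paper gives no separate proof: the displayed chain is justified by the row-maximum domination $\mu_{k,x_k}\ge\mu_{\fx}$ and the identity $\mu_{k,(1)}=\mu_\star$ verified just before Lemma~\ref{lem:stageI-safety}, which is what you use, with your tie-handling remark filling a minor detail the paper leaves implicit.
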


\begin{corollary}[Counting bound]\label{cor:counting}
Let
\[
H(\eps)
\;:=\;
\bigl|\{x:\ \delta_x\le \eps\}\bigr|,
\qquad
g_k(\eps)
\;:=\;
\bigl|\{i\in[d]:\ \mu_{k, i} \geq \mu_{k,(1)}- \eps\}\bigr|.
\]
Then, for any instance and any \(\eps\ge 0\),
\[
H(\eps)\ \le\ \prod_{k=1}^m g_k(\eps).
\]
\end{corollary}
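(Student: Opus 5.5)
The counting bound will come from Lemma~\ref{lem:good-in-good-rows} together with an injection into a product set. Fix $\eps\ge 0$ and write $S(\eps):=\{x\in[d]^m:\ \delta_x\le\eps\}$, so that $H(\eps)=|S(\eps)|$. For each mode $k$, let $R_k(\eps):=\{i\in[d]:\ \mu_{k,i}\ge \mu_{k,(1)}-\eps\}$, so that $g_k(\eps)=|R_k(\eps)|$. We will show
\[
S(\eps)\subseteq R_1(\eps)\times\cdots\times R_m(\eps).
\]
Taking cardinalities then gives $H(\eps)\le\prod_{k=1}^m g_k(\eps)$.

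To prove the inclusion, take any $x\in S(\eps)$ and any mode $k$. The row score is a maximum over the fiber through $x_k$, so $\mu_{k,x_k}\ge \cT^\star_x=\mu_x$. As noted before Lemma~\ref{lem:stageI-safety}, $\mu_{k,(1)}=\mu_\star$. Hence
\[
\mu_{k,(1)}-\mu_{k,x_k}\le \mu_\star-\mu_x=\delta_x\le\eps,
\]
which means $x_k\in R_k(\eps)$. This is exactly the content of Lemma~\ref{lem:good-in-good-rows}, and we can cite it directly. The key point is to phrase its conclusion in terms of the row index $x_k$ rather than the rank $t$. When there are ties among the $\mu_{k,\cdot}$, the rank $t$ is not unique, but membership in $R_k(\eps)$ depends only on the value $\mu_{k,x_k}$, so the tie-handling causes no trouble. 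Since $k$ was arbitrary, every coordinate of $x$ lies in its mode's good set, and the inclusion holds.

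I do not expect a real obstacle. The only care needed is to confirm that $\mu_{k,(1)}=\mu_\star$ for every mode and to use the definition of $g_k$ through values rather than ranks. The map $x\mapsto(x_1,\dots,x_m)$ is the identity on index tuples, so injectivity is trivial. The bound holds for every $\eps\ge0$, including $\eps=0$, where it says the number of maximizers is at most the product of the numbers of rows attaining $\mu_\star$.
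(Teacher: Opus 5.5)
Your proof is correct and follows the paper's argument. Both use Lemma~\ref{lem:good-in-good-rows} to place every $\epsilon$-good entry inside the Cartesian product of the mode-wise $\epsilon$-good row sets and then take cardinalities. Your value-based phrasing ($x_k\in R_k(\epsilon)$, in place of the paper's $\Delta_{k,i_k}\le\epsilon$, which conflates a row index with its rank) handles ties a little more cleanly.
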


\begin{proof}
If \(\delta_x\le\eps\), then by Lemma~\ref{lem:good-in-good-rows} we must have
\(\Delta_{k,i_k}\le\eps\) for all modes \(k\), where \(x=(i_1,\dots,i_m)\).
Therefore \(x\) can only use indices from the sets
\(\{i:\Delta_{k,i}\le\eps\}\) in each mode \(k\), of cardinalities \(g_k(\eps)\)
respectively. The Cartesian product of these sets has size \(\prod_k g_k(\eps)\),
which bounds \(H(\eps)\).
\end{proof}

%\bibliographystyle{ormsv080R}
%\bibliography{ref}

\end{document}